\providecommand{\tabularnewline}{\\}
\providecommand{\algorithmname}{Algorithm}
\theoremstyle{plain}
\newtheorem{thm}{\protect\theoremname}
\theoremstyle{plain}
\newtheorem{cor}[thm]{\protect\corollaryname}
\definecolor{mycol}{rgb}{0,0,0.65}
\newcommand{\renewtheorem}[1]{%
  \expandafter\let\csname #1\endcsname\relax
  \expandafter\let\csname c@#1\endcsname\relax
  \expandafter\let\csname end#1\endcsname\relax
  \newtheorem{#1}%
}
\DeclareMathAlphabet{\mathbfsf}{\encodingdefault}{\sfdefault}{bx}{n}
\newcommand{\upgreektemplate}[2]{#2{
\renewcommand{\alpha}{\upalpha}
\renewcommand{\beta}{\upbeta}
\renewcommand{\theta}{\uptheta}
\renewcommand{\gamma}{\upgamma}
\renewcommand{\lambda}{\uplambda}
\renewcommand{\xi}{\upxi}
\renewcommand{\epsilon}{\upepsilon}
\renewcommand{\delta}{\updelta}
\renewcommand{\phi}{\upphi}
\renewcommand{\zeta}{\upzeta}
\renewcommand{\Lambda}{\Uplambda}
\renewcommand{\Gamma}{\Upgamma}
\renewcommand{\Delta}{\Updelta}
\renewcommand{\Theta}{\Uptheta}
#1
}}
\newcommand{\upgreek}[1]{\upgreektemplate{#1}{\mathsf}}
\newcommand{\bupgreek}[1]{\upgreektemplate{#1}{\mathbfsf}}
\setlist[itemize]{leftmargin=*}
\setlist[enumerate]{leftmargin=*}
\let\ref\cref
\Crefname{equation}{Eq.}{Eqs.}
\Crefname{section}{Sec.}{Secs.}
\Crefname{algorithm}{Alg.}{Algs.}
\Crefname{thm}{Thm.}{Thms.}
\Crefname{cor}{Cor.}{Cors.}
\definecolor{darkred}{rgb}{0.5,0,0}
\definecolor{darkgreen}{rgb}{0,0.5,0}
\definecolor{darkblue}{rgb}{0,0,0.5}
\definecolor{changes}{rgb}{0,0,0.75}
\definecolor{gray}{rgb}{0.5,0.5,0.5}
\newcommand*{\inlineequation}[2][]{%
  \begingroup
    \refstepcounter{equation}%
    \ifx\\#1\\%
    \else
      \label{#1}%
    \fi
    \relpenalty=10000 %
    \binoppenalty=10000 %
    \ensuremath{%
      #2%
    }%
    ~\@eqnnum
  \endgroup
}
\providecommand{\corollaryname}{Corollary}
\providecommand{\theoremname}{Theorem}
\begin{document}
\global\long\def\argmin{\operatornamewithlimits{argmin}}%

\global\long\def\argmax{\operatornamewithlimits{argmax}}%

\global\long\def\prox{\operatornamewithlimits{prox}}%

\global\long\def\diag{\operatorname{diag}}%

\global\long\def\lse{\operatorname{lse}}%

\global\long\def\R{\mathbb{R}}%

\global\long\def\E{\operatornamewithlimits{\mathbb{E}}}%

\global\long\def\P{\operatornamewithlimits{\mathbb{P}}}%

\global\long\def\V{\operatornamewithlimits{\mathbb{V}}}%

\global\long\def\N{\mathcal{N}}%

\global\long\def\L{\mathcal{L}}%

\global\long\def\C{\mathbb{C}}%

\global\long\def\tr{\operatorname{tr}}%

\global\long\def\norm#1{\left\Vert #1\right\Vert }%

\global\long\def\norms#1{\left\Vert #1\right\Vert ^{2}}%

\global\long\def\pars#1{\left(#1\right)}%

\global\long\def\pp#1{(#1)}%

\global\long\def\bracs#1{\left[#1\right]}%

\global\long\def\bb#1{[#1]}%

\global\long\def\verts#1{\left\vert #1\right\vert }%

\global\long\def\vv#1{\vert#1\vert}%

\global\long\def\Verts#1{\left\Vert #1\right\Vert }%

\global\long\def\VV#1{\Vert#1\Vert}%

\global\long\def\angs#1{\left\langle #1\right\rangle }%

\global\long\def\KL#1{[#1]}%

\global\long\def\KL#1#2{{\scriptstyle \operatorname{KL}}\hspace{-2pt}\pars{#1\middle\Vert#2}}%

\global\long\def\SKL#1#2{{\scriptstyle \operatorname{SKL}}\hspace{-2pt}\pars{#1\middle\Vert#2}}%

\global\long\def\mean{{\displaystyle \operatorname{ave}}}%

\global\long\def\div{\text{div}}%

\global\long\def\erf{\text{erf}}%

\global\long\def\vvec{\text{vec}}%

\global\long\def\b#1{\bm{#1}}%

\global\long\def\r#1{\upgreek{#1}}%

\global\long\def\br#1{\bupgreek{\bm{#1}}}%

\global\long\def\w{\b w}%

\global\long\def\v{\b v}%

\global\long\def\wr{\br w}%

\global\long\def\z{\b z}%

\global\long\def\y{\b y}%

\global\long\def\yr{\r y}%

\global\long\def\x{\b x}%

\global\long\def\xr{\r x}%

\global\long\def\zr{\r z}%

\global\long\def\h{\b h}%

\global\long\def\hr{\r h}%

\global\long\def\u{\b u}%

\global\long\def\ur{\r u}%

\global\long\def\gr{\r g}%

\global\long\def\hr{\r h}%

\global\long\def\ee{\r{\scalebox{.675}{\ensuremath{\varnothing}}}}%

\global\long\def\te{\r{\scalebox{.58}{\ensuremath{\varnothing}}}}%

\global\long\def\ttheta{\r{\scalebox{.815}{\ensuremath{\theta}}}}%

\global\long\def\tphi{\r{\scalebox{.62}{\ensuremath{\phi}}}}%

\global\long\def\talpha{\r{\scalebox{.68}{\ensuremath{\alpha}}}}%

\global\long\def\tbeta{\r{\scalebox{.68}{\ensuremath{\beta}}}}%

\global\long\def\tmu{\r{\scalebox{.68}{\ensuremath{\mu}}}}%

\global\long\def\tsigma{\r{\scalebox{.68}{\ensuremath{\sigma}}}}%

\title{Large Language Bayes}
\author{Justin Domke\\
University of Massachusetts Amherst}

\maketitle
\maketitle
\begin{abstract}
Many domain experts do not have the time or expertise to write formal
Bayesian models. This paper takes an informal problem description
as input, and combines a large language model and a probabilistic
programming language to define a joint distribution over formal models,
latent variables, and data. A posterior over latent variables follows
by conditioning on observed data and integrating over formal models.
This presents a challenging inference problem. We suggest an inference
recipe that amounts to generating many formal models from the large
language model, performing approximate inference on each, and then
doing a weighted average. This is justified and analyzed as a combination
of self-normalized importance sampling, MCMC, and importance-weighted
variational inference. Experimentally, this produces sensible predictions
from only data and an informal problem description, without the need
to specify a formal model.
\end{abstract}
{\ttfamily \fontdimen2\font=0.6em}

\section{Introduction}

\definecolor{col1}{HTML}{4c72b0}
\definecolor{col2}{HTML}{c44e52}
\definecolor{col3}{HTML}{55a868}
\definecolor{col4}{HTML}{dd8452}
\definecolor{col5}{HTML}{8172b3}
\definecolor{col6}{HTML}{937860}
\definecolor{col7}{HTML}{da8bc3}
\definecolor{col8}{HTML}{8c8c8c}
\definecolor{col9}{HTML}{ccb974}
\definecolor{col10}{HTML}{64b5cd}

\label{submission}

Why isn't Bayesian inference more popular? Arguably, a major reason
is simply that creating probabilistic models is hard. Most people
interested in analyzing data are neither programmers nor experts in
statistics. Yet, writing a probabilistic model requires learning a
probabilistic programming language (PPL), fluency with a range of
statistical distributions, and experience formalizing problem intuitions.
Even for experts, this is difficult and error-prone.

A natural idea is to ask the user to describe their problem in plain
language and then use a Large Language Model (LLM) to generate a formal
probabilistic model. While LLMs can write serviceable models, in practice
they struggle in the same way as humans---sometimes the models they
create are good and sometimes they aren't \citep{Price_2023_ChatGPT4WritesStan}.

Yet LLMs have one major advantage over humans: they don't mind being
asked to create many different candidate models for the same problem.

The central idea of this paper is to mathematically ``glue'' an
LLM to a PPL. Given an informal description of a problem, a joint
distribution is defined over (1) formal probabilistic models, (2)
observed data, and (3) unobserved target variables. We then condition
on data and marginalize out the space of formal models to get a final
posterior over target variables.

The main contributions of this paper are:
\begin{itemize}
\item A new problem definition, in which an informal description and a dataset
define a posterior via an LLM and a PPL. \eqref{sec:The-basic-idea}
\item A broad algorithmic recipe for solving the resulting inference problem.
\eqref{sec:Computational-Issues}
\item Experiments illustrating that the final approximated posterior captures
user intent and is typically better than taking a naive average of
formal models. \eqref{sec:Examples}
\item Theory analyzing the expected accuracy of the suggested inference
recipe. \eqref{sec:Theory}
\end{itemize}

\section{The basic idea\label{sec:The-basic-idea}}

\begin{figure*}[t]
\includegraphics[width=1\textwidth,page=35]{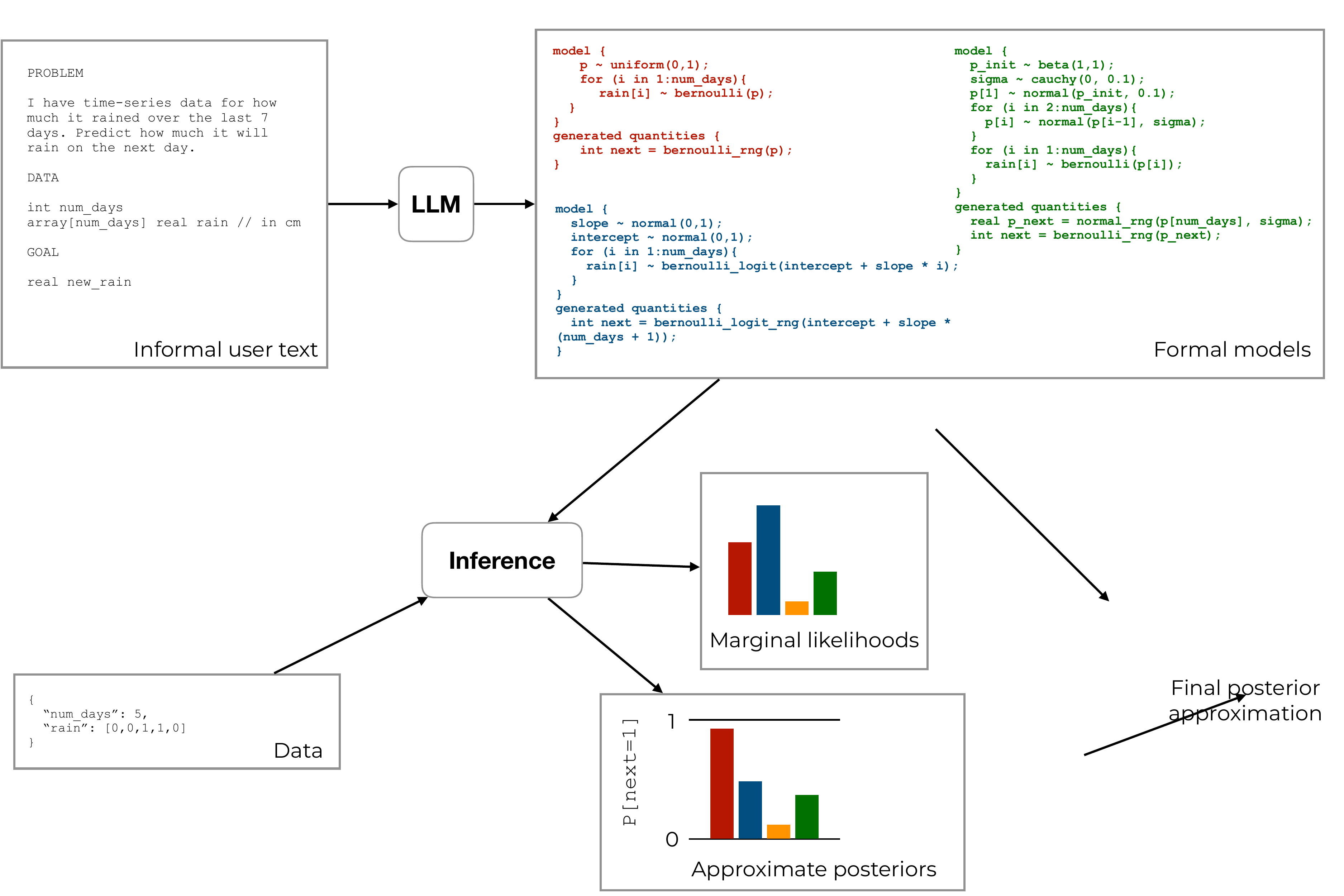}

\vspace{-1.55cm}

\caption{\textbf{The basic idea.} Given informal user text, an LLM generates
a set of candidate formal models. Inference is performed on each and
the posteriors are combined with weight proportional to the marginal
likelihood. Here four (real) LLM-generated formal models in the Stan
language are shown in different colors, with corresponding colors
for marginal likelihoods and posteriors.\label{fig:The-basic-idea.}}
\vspace{-0.1cm}
\end{figure*}

The user provides a plain-language description of the problem, including
anything they know about the phenomena under study, the types and
shapes of the input data, what assumptions should be used, and what
target variables should be predicted. For example, a user might write
the informal text shown at the top left of \ref{fig:The-basic-idea.}.
Given such a string, and asked to create models in the Stan PPL \citep{Carpenter_2017_StanProbabilisticProgramming},
an LLM could output any of the formal models shown at the bottom of
\ref{fig:The-basic-idea.}.

Denote the informal user text as $t$ and a formal model as $m$.
Since LLMs are stochastic, we can think of one (along with a system
prompt, hyperparameters, etc.) as defining a distribution
\begin{equation}
p\ \bigl(\underbrace{m}_{\substack{\text{formal}\\
\text{model}
}
}\vert\underbrace{t}_{\substack{\text{textual}\\
\text{description}
}
}\bigr).\label{eq:p_LLM}
\end{equation}

\vspace{-7pt}Meanwhile, we can think of a PPL as defining a distribution
over target variables $z$ and data $x$, conditional on the formal
model. That is, we may think of a PPL as defining a distribution
\begin{equation}
p\ \bigl(\underbrace{z}_{\substack{\text{target}\\
\text{variables}
}
},\underbrace{x}_{\substack{\text{observed}\\
\text{data}
}
}\vert\underbrace{m}_{\substack{\text{formal}\\
\text{model}
}
}\bigr).\label{eq:p_PPL}
\end{equation}

\vspace{-7pt}The core idea of this paper is to glue the above two
distributions together to define the joint
\begin{equation}
p(z,x,m|t)=\underbrace{p(m|t)}_{\text{LLM}}\ \underbrace{p(z,x|m)}_{\text{PPL}}.\label{eq:p(z_x_m|t)}
\end{equation}

\vspace{-5pt}Our hypothesis is that the distribution defined in \ref{eq:p(z_x_m|t)}
is a good one. We are interested in the posterior over $z$, conditioning
on $x$ and $t$, but integrating out $m$. It is not hard to show
that this is
\begin{equation}
\underbrace{p\pp{z|x,t}}_{\text{final posterior}}=\sum_{m}\ \underbrace{p\pp{m|x,t}}_{\substack{\text{posterior}\\
\text{model weight}
}
}\ \underbrace{p\pp{z\vert x,m}}_{\substack{\text{posterior of}\\
\text{model }m\text{ (PPL)}
}
}.\label{eq:p(z|x_t)}
\end{equation}
\vspace{-7pt}

Here, $p\pp{z|x,m}$ is the posterior for model $m$ (as defined by
the PPL in \ref{eq:p_PPL}) and
\begin{equation}
\underbrace{p\pp{m|x,t}}_{\substack{\text{posterior}\\
\text{model weight}
}
}\propto\underbrace{p\pp{m\vert t}}_{\substack{\text{prior model}\\
\text{weight (LLM)}
}
}\ \underbrace{p\pp{x\vert m}}_{\substack{\text{marginal likelihood}\\
\text{of model }m\text{ (PPL)}
}
}.\label{eq:p(m|x_t)}
\end{equation}

\vspace{-6pt}

Note that the final posterior in \ref{eq:p(z|x_t)} can be seen as
an instance of Bayesian model averaging \citep{Hoeting_1999_BayesianModelAveraging,Wasserman_2000_BayesianModelSelection},
just with a prior over models that's defined using an LLM.

\vspace{-7pt}

\subsection{Varying latent spaces}

Note that the user is \textit{not} expected to specify all latent
variables, only the target variables to predict. This is crucial since,
as illustrated in \ref{fig:The-basic-idea.}, different formal models
typically have different latent spaces. This is fine. All that's needed
is that each model produced by the LLM contains the target variables
$z$ specified in the user prompt $t$. Formally, suppose that model
$m$ defines some distribution
\begin{equation}
p\pp{z,x,u^{(m)}\vert m},\label{eq:varying_latent_space}
\end{equation}
where $u^{(m)}$ varies in meaning (and dimensionality) for different
models $m$. Then \Cref{eq:p_LLM,eq:p_PPL,eq:p(z_x_m|t),eq:p(z|x_t),eq:p(m|x_t)}
are all still correct. The final posterior remains as in \ref{eq:p(z|x_t)},
just with $p\pp{z,x\vert m}$ interpreted as \ref{eq:varying_latent_space}
after marginalizing out $u^{(m)}.$ The posterior weights in \ref{eq:p(m|x_t)}
make no reference to the latent space, and so need no modification.

\subsection{Comparison to flat averaging\label{subsec:Comparison-to-flat}}

It is useful to contrast the final posterior $p\pp{z\vert x,t}$ to
the result of taking a ``flat'' average of of models sampled from
the LLM, i.e.
\begin{equation}
p_{\text{flat}}\pp{z\vert x,t}=\sum_{m}\underbrace{p\pp{m\vert t}}_{\substack{\text{prior model}\\
\text{weight (LLM)}
}
}\ \underbrace{p\pp{z\vert x,m}}_{\substack{\text{posterior of}\\
\text{model }m\text{ (PPL)}
}
}.\label{eq:flat-posterior}
\end{equation}
\vspace{-5pt}

Clearly $p_{\text{flat}}\pp{z\vert x,t}$ is not equal to $p\pp{z\vert x,t}$
as defined by \ref{eq:p(z|x_t)}, but it can be seen as an ``ensemble
of posteriors''. The difference is that $p_{\mathrm{flat}}$ gives
model $m$ weight equal to the prior $p\pp{m\vert t}$ whereas $p\pp{z\vert x,t}$
gives weight that is also proportional to the marginal likelihood
$p\pp{x\vert m}$ (See \ref{eq:p(m|x_t)}). Thus, the fundamental
difference between the true posterior and ``flat'' posterior is
that the former gives more influence to models that are more consistent
with the observed data.

\section{Inference\label{sec:Computational-Issues}}

While mathematically simple, the posterior $p\pp{z\vert x,t}$ in
\ref{eq:p(z|x_t)} is computationally difficult. In principle, one
might imagine computing it as in \ref{alg:Exact-LLB}, but this will
rarely be practical. One familiar issue is that for a given model
$m$, it's usually difficult to compute the posterior $p\pp{z\vert x,m}$
or the marginal likelihood $p\pp{x\vert m}$. In addition, the space
of models $m$ is very large or possibly infinite.
\begin{algorithm}[t]
\begin{enumerate}
\item Input textual description $t$ and data $x$.
\item For all possible model strings $m$:
\begin{enumerate}
\item Compute model probability $p\pp{m|t}$.\hfill{}\textcolor{gray}{//
using LLM}
\item Compute posterior $p\pp{z\vert x,m}$ and marginal likelihood $p\pp{x\vert m}$.\hfill{}\textcolor{gray}{//
under PPL}
\end{enumerate}
\item Set $w^{(m)}\propto p\pp{m\vert t}p\pp{x\vert m}$, where $\sum_{m}w^{(m)}=1$.
\item Return final posterior $p\pp{z|x,t}=\sum_{m}w^{(m)}p\pp{z\vert x,m}.$
\end{enumerate}
\caption{Theoretical exact LLB algorithm (intractable)\label{alg:Exact-LLB}}
\end{algorithm}

A more subtle issue is that while we assume one can sample from $p\pp{m\vert t}$,
the probability $p\pp{m\vert t}$ is often unavailable. Many commercial
LLM providers decline to share this information. Further, it's often
beneficial to ask LLMs to ``think out loud'' before producing a
final answer. This too makes $p\pp{m\vert t}$ intractable, since
the same model could appear after many different ``thinking'' passages.

So, while the posterior in \ref{eq:p(z|x_t)} can be seen as an instance
of Bayesian model averaging, existing \textit{algorithms} for Bayesian
model averaging (e.g. reversible jump MCMC \citep[§11]{ChristianRobert_2004_MonteCarloStatistical})
seem difficult to apply, as they involve explicitly iterating over
all models and/or using evaluations of $p\pp{m\vert t}$. For these
reasons, $p\pp{z\vert x,t}$ appears to represent a novel inference
challenge.

To approximate the final posterior, this paper will use the recipe
in \ref{alg:practical-LLB}. The simplest interpretation of this recipe
is as a heuristic approximation of \ref{alg:Exact-LLB} where the
sum over all models is replaced by random sampling, and $\tilde{p}\pp{z\vert x,m}$
and $\tilde{p}\pp{x\vert m}$ denote approximations of the posterior
and marginal likelihood for a given model $m$. \ref{sec:Theory}
will give more a formal justification and analysis.

\begin{algorithm}[t]
\begin{enumerate}
\item Input textual description $t$ and data $x$.
\item For $n=1,2,\cdots,N$:
\begin{enumerate}
\item Sample model $m^{(n)}\sim p\pp{m\vert t}$.\hfill{}\textcolor{gray}{//
using LLM}
\item Approximate posterior $\tilde{p}\pp{z\vert x,m^{(n)}}$ and marginal
likelihood $\tilde{p}\pp{x\vert m^{(n)}}$.\hfill{}\textcolor{gray}{//
under PPL}
\end{enumerate}
\item Set $w^{(n)}\propto\tilde{p}\pp{x\vert m^{(n)}}$, where $\sum_{n=1}^{N}w^{(n)}=1$.
\item Return final approximate posterior $p\pp{z|x,t}\approx\sum_{n=1}^{N}w^{(n)}\tilde{p}\pp{z\vert x,m^{(n)}}.$
\end{enumerate}
\caption{Suggested generic approximate LLB recipe. \label{alg:practical-LLB}}
\end{algorithm}

\subsection{Preliminary observations}

We experimented with using various LLMs to generate formal Bayesian
models in various PPLs, including Stan \citep{Carpenter_2017_StanProbabilisticProgramming},
NumPyro \citep{Phan_2019_ComposableEffectsFlexible} and PyMC \citep{Abril-Pla_2023_PyMCModernComprehensive}.
LLMs seemed better at generating Stan code, perhaps since more Stan
code is available and included in LLM datasets. We also found it was
helpful to give LLMs a system prompt that instructed them to describe
their modeling strategy in words before producing a formal model.
Finally, to aid in-context learning, we found it was helpful to provide
examples of user prompts along with ``good'' responses.

\subsection{Generating models\label{subsec:Generating-models}}

These experiments used a system prompt \eqref{sec:System-prompt}
that told the LLM to first generate a ``thoughts'' block that would
describe the modeling strategy in words, and then create the formal
Stan model. A few additional instructions were used to avoid common
mistakes and to encourage the LLM to create well-normalized distributions.

For in-context learning, we provided the LLM with six examples of
ostensible user inputs, along with high quality outputs \eqref{sec:examples}.
Each input followed the format illustrated in \ref{fig:The-basic-idea.}
with {\small\texttt{PROBLEM}} block, describing the problem, a {\small\texttt{DATA}}
block describing the data, and a {\small\texttt{GOAL}} block describing
the target variable(s). Each output followed the instructions in the
system prompt, with a {\small\texttt{THOUGHTS}} block describing the
modeling strategy and a {\small\texttt{MODEL}} block with formal Stan
code.

For the problems below, 1024 models were generated using Llama-3.3-70B
\citep{Grattafiori_2024_Llama3Herd,Meta_2024_Llama33Model} with 4-bit
AWQ quantization \citep{lin2023awq}. Since many models were generated
from the same prompt, continuous batching (parallel inference) greatly
increased inference speed. Using a single A100 GPU, generating 1024
models took 10-15 minutes, depending on the problem.

\subsection{Inference}

Outputs from the LLM were rejected if they did not contain {\small\texttt{GOAL}}
and {\small\texttt{MODEL}} blocks or if the code in the model block
did not compile. For models that compiled, 2 chains of Stan's implementation
of NUTS \citep{Hoffman_2014_NoUturnSamplerAdaptively} were run for
10,000 iterations each. These samples were then considered as representing
the approximate posterior $\tilde{p}\pp{z\vert x,m^{(n)}}.$

It remains only to estimate the marginal likelihood $p\pp{x\vert m^{(n)}}.$
While methods exist to estimate the marginal likelihood from a set
of posterior samples \citep{Newton_1994_ApproximateBayesianInference},
these are often considered unreliable \citep{Neal_2008_HarmonicMeanLikelihood}.
Instead, we elected to use variational bounds. The typical approach
is to create some variational distribution $q\pp z$ (e.g. a Gaussian)
and optimize it to maximize the lower-bound
\begin{equation}
\mathrm{ELBO}\pp q=\E_{\r z\sim q}\log\frac{p\pp{\r z,x\vert m^{(n)}}}{q\pp{\r z}}\leq\log p\pp{x\vert m^{(n)}}.\label{eq:ELBO-q}
\end{equation}
This bound will be loose if the true posterior $p\pp{z\vert x,m^{(n)}}$
is not in the variational family. For a tighter-bound, we pursued
importance-weighted bounds \citep{Burda_2015_ImportanceWeightedAutoencoders}
of the form
\begin{equation}
L\pp q=\E_{\r z^{1},\cdots,\r z^{K}\sim q}\log\frac{1}{K}\sum_{k=1}^{K}\frac{p\pp{\r z^{k},x\vert m^{(n)}}}{q\pp{\r z^{k}}}\leq\log p\pp{x\vert m^{(n)}}.\label{eq:IW-ELBO}
\end{equation}
where $\r z^{1},\cdots,\r z^{K}$ are independent samples from $q$.

It remains to choose the distribution $q$. The typical approach with
importance-weighted bounds \citep{Burda_2015_ImportanceWeightedAutoencoders,Bachman_2015_TrainingDeepGenerative,Cremer_2017_ReinterpretingImportanceWeightedAutoencoders,Domke_2018_ImportanceWeightingVariational}
is to explicitly optimize $q$ to maximize \ref{eq:IW-ELBO}. However,
it is difficult to make stochastic optimization fully automatic and
reliable across thousands of heterogenous models. Instead, we exploit
the fact that we have already (approximately) sampled from $p\pp{z\vert x,m^{(n)}}$
using MCMC and simply set $q$ to be a Gaussian distribution matching
the empirical mean and variance of the MCMC samples.

This seemingly-heuristic choice is justified by the fact that \ref{eq:IW-ELBO}
is asymptotically tight (as $K\rightarrow\infty$) with an asymptotic
rate of $1/K$ and a constant determined by the $\chi^{2}$ divergence
between $q$ and $p$ \citep{Maddison_2017_FilteringVariationalObjectives,Domke_2018_ImportanceWeightingVariational}.
From the perspective of alpha-divergences, maximizing $\mathrm{ELBO}\pp q$
is equivalent to minimizing $\KL qp=D_{0}\pp{p\Vert q},$ an ``exclusive''
divergence. \ref{eq:IW-ELBO} would be (asymptotically) tightest if
we could minimize the ``inclusive'' divergence $\chi^{2}\pp{p\Vert q}=D_{2}\pp{p\Vert q}$,
but this is difficult to optimize \citep{Geffner_2021_DifficultyUnbiasedAlpha}.
Matching the mean and covariance of the posterior is equivalent to
minimizing $\KL qp=D_{1}\pp{p\Vert q}$,  and so is a pragmatic compromise.
Experimentally, this performed at least as well as explicitly optimizing
\ref{eq:IW-ELBO}, but was faster and more reliable.

The full inference recipe is summarized as \ref{alg:LLB-using-MCMC-and-VI}.
More details are in \ref{subsec:Experimental-Details}.

\begin{algorithm}[t]
\begin{enumerate}
\item Input textual description $t$ and data $x$.
\item For $n=1,2,\cdots,N$:
\begin{enumerate}
\item Sample model $m^{(n)}\sim p\pp{m|t}$.\hfill{}\textcolor{gray}{//
using LLM}
\item Draw samples $z^{(n,1)},\cdots,z^{(n,K)}\sim p\pp{z\vert x,m^{(n)}}$
using MCMC.\hfill{}\textcolor{gray}{// under PPL}
\item Compute a lower bound $L^{(n)}$ on the marginal likelihood $\log p\pp{x\vert m^{(n)}}$
using \ref{eq:IW-ELBO}. 
\end{enumerate}
\item Set $w^{(n)}\propto\exp L^{(n)}$, where $\sum_{n=1}^{N}w^{(n)}=\frac{1}{K}$.
\item Return the samples $\{z^{(n,k)}\}$ where $z^{(n,k)}$ is given weight
$w^{(n)}.$
\end{enumerate}
\caption{The variant of \ref{alg:practical-LLB} used in the experiments of
this paper.\label{alg:LLB-using-MCMC-and-VI}}
\end{algorithm}

\section{Experiments\label{sec:Examples}}

\begin{figure}[b]
\begin{minipage}[t]{0.375\columnwidth}%
\vspace{-0.1cm}\lstinputlisting[
    basicstyle=\small\ttfamily,     
   language={},
   tabsize=1,
   frame=single,
   rulecolor=\color{gray!100},
   backgroundcolor=\color{gray!03},
    breaklines=true,
    xleftmargin=3pt,
    xrightmargin=3pt,
    breakindent=0pt,
    lineskip=-3pt,
]{experiments/basketball/prompt.txt}%
\end{minipage}\hfill{}%
\begin{minipage}[t]{0.615\columnwidth}%
\lstinputlisting[
    basicstyle=\small\ttfamily,
   frame=single,
   rulecolor=\color{gray!100},
   backgroundcolor=\color{gray!03},
    breaklines=true,
    xleftmargin=3pt,
    xrightmargin=3pt,
    breakindent=2\baselineskip,
]{experiments/basketball/data.json}

\includegraphics[scale=0.39,trim = {0.2cm 0 0.1cm 0}, clip]{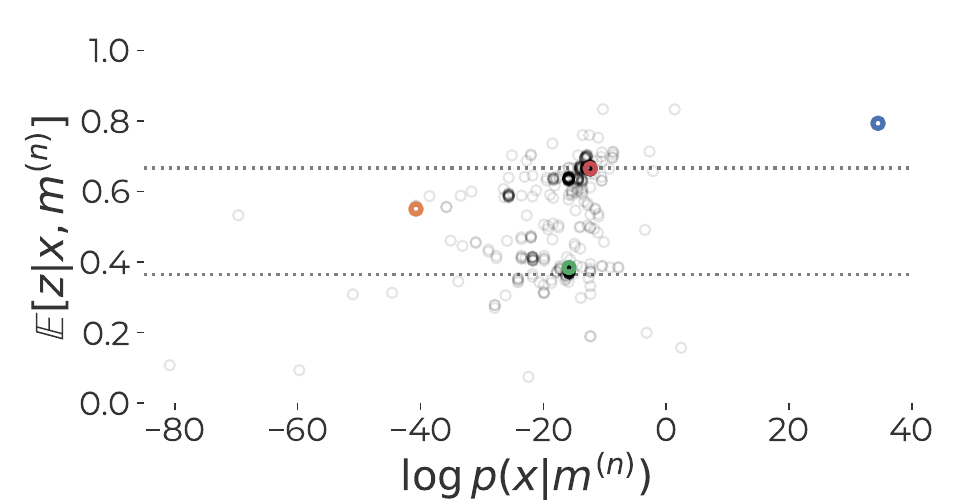}
\hfill{}\includegraphics[scale=0.39,trim = {2.4cm 0 0.1cm 0}, clip]{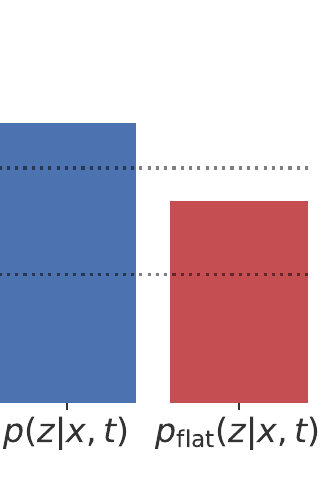}%
\end{minipage}

\vspace{-0.2cm}

\caption{The \textbf{rain} problem. Left: Informal user text $t$. Top right:
The given data $x$. Bottom center: Estimated marginal likelihoods
$p\protect\pp{x\vert m^{(n)}}$ and posterior means $\protect\E\protect\bb{\protect\r z\vert x,m^{(n)}}$
for each generated model $m^{(n)}$. Markers for the four models in
\ref{fig:rain-models} are colored. Bottom right: The final posterior
mean, compared to a flat average.\label{fig:rain-maintext}}
\end{figure}

It is likely that all standard models and datasets are included in
LLM training data. To avoid the risk that the LLM would simply ``remember''
human-written models, these experiments use all-new problems and datasets.

Since this paper will presumably also be included in future LLM datasets,
any future work in this direction should not use these problems for
evaluation with any LLM with a knowledge cutoff after the date this
paper was first made public, namely April 21, 2025.

\subsection{Rain}

In this problem \eqref{fig:rain-maintext}, a user recorded if it
rained on a sequence of days and wishes to predict if it will rain
on the following day. The data provides two contradictory signals.
One the one hand, most days did not have rain. However, adjacent days
are correlated and it rained on the last three days.

Out of 1024 generated models, 960 (93.8\%) compiled and allowed for
inference. Four example models are shown in \ref{fig:rain-models}
(supplement). Many models treat each day as independent and give predictions
near the base rate of $8/22\approx0.364$. Others give predictions
near the between-day consistency rate of $14/21\approx0.667$. (Both
of these base rates are shown as dotted lines in \ref{fig:rain-maintext}.)
A flat average includes many models of both types and predicts $0.573$.
However, the final posterior is dominated by a single model that models
multi-day dependencies and predicts an even-higher value of $0.793$.
Detailed results are in \ref{subsec:Rain-full}.

\subsection{Coin}

\begin{figure}[t]
\noindent\begin{minipage}[t]{1\columnwidth}%
\begin{tabular}{ccc}
\begin{minipage}[t]{0.25cm}%
{\small\color{col1}$t^{(1)}$}%
\end{minipage} & {\scriptsize{\ttfamily{}%
\begin{minipage}[t][0.9cm]{0.42\columnwidth}%
\begin{flushleft}
{\footnotesize\texttt{{\color{col1}I just got the coin from the US
mint, so I'm almost completely sure that it's a standard US penny.}}}
\par\end{flushleft}%
\end{minipage}}} & \multirow{3}{*}{\includegraphics[viewport=0bp 0bp 400bp 230bp,clip,width=0.46\columnwidth]{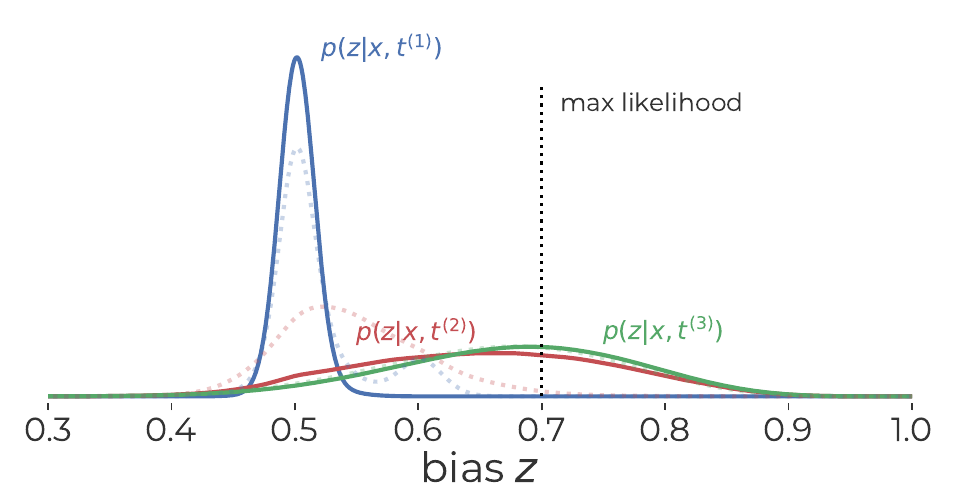}}\tabularnewline
\begin{minipage}[t]{0.25cm}%
{\small\color{col2}$t^{(2)}$}%
\end{minipage} & {\scriptsize{\ttfamily{}%
\begin{minipage}[t][0.9cm]{0.42\columnwidth}%
\begin{spacing}{0.6}
\begin{flushleft}
{\footnotesize\texttt{{\color{col2}At first glance, it appears to
be a standard US penny, although I haven't examined it closely.}}}
\par\end{flushleft}
\end{spacing}
\end{minipage}}} & \tabularnewline
\begin{minipage}[t]{0.25cm}%
{\small\color{col3}$t^{(3)}$}%
\end{minipage} & {\scriptsize{\ttfamily{}%
\begin{minipage}[t][1.2cm]{0.42\columnwidth}%
\begin{spacing}{0.6}
\begin{flushleft}
{\footnotesize\texttt{{\color{col3}I can see that the penny is quite
bent, so I'm pretty sure the bias is different from 0.5, though I
can't tell in what direction.}}}
\par\end{flushleft}
\end{spacing}
\end{minipage}}} & \tabularnewline
\end{tabular}\vspace{0.35cm}%
\end{minipage}

\caption{The \textbf{coin} problem. Left: Snippets from three different user
prompts. Right: Resulting final posteriors, which appear to reflect
user intent. Predictions from $p_{\mathrm{flat}}$ are shown as faint
dotted lines. \label{fig:coins-maintext}}
\end{figure}

This problem \eqref{fig:coins-maintext} tests the impact of different
assumptions stated in the user text. Three prompts are given describing
flipping a coin but with different assumptions about the true bias.
In all cases, 20 flips were observed, out of which 14 were heads.

Out of 1024 generated models, between 848 (82.8\%) and 989 (96.6\%)
compiled and allowed for inference, depending on the prompt variant.
For all three variants, many generated models had marginal likelihoods
high enough to contribute to the final estimated posterior. The final
estimated posteriors are compared in \ref{fig:coins-maintext}, where
the different assumptions seem reflected in the final posteriors.
Detailed results are in \ref{subsec:coin-standard-full}, \ref{subsec:coin-looks-full},
and \ref{subsec:coin-bent-full}.

\subsection{Polling}

\begin{figure}[b]
\begin{centering}
\includegraphics[width=0.45\columnwidth,trim = {0 0.92cm 0 0}, clip]{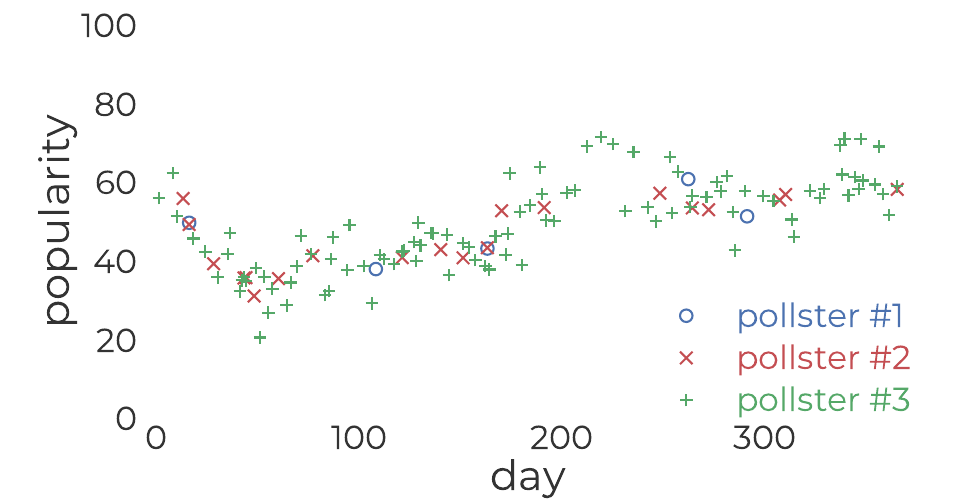}\includegraphics[width=0.45\columnwidth,trim = {0 0 0 0}, clip]{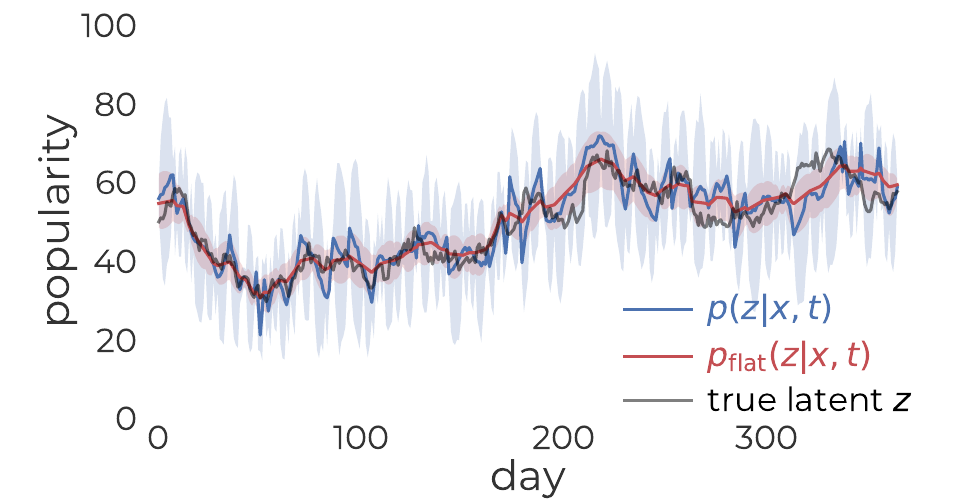}
\par\end{centering}
\caption{The \textbf{polling} problem. Left: Observed data $x$. Right: Final
estimated posterior, compared to a flat average.\label{fig:polling-maintext}}
\end{figure}

In this problem \eqref{fig:polling-maintext}, a user describes a
candidate with fluctuating popularity who is polled by three different
pollsters at various times throughout a year, and asks to predict
the true popularity on each day. See \ref{subsec:polling-full} for
the full prompt.

Out of 1024 generated models, 568 (55.5\%) compiled and allowed for
inference. For this problem, the estimated posterior shows little
benefit over a flat average. A single model captured essentially all
posterior weight, which may indicate inaccurate inference. Detailed
results are in \ref{subsec:polling-full}.

\subsection{City Temperature}

\begin{wrapfigure}[12]{o}{0.55\columnwidth}%
\begin{centering}
\vspace{-1.3cm}\includegraphics[viewport=20bp 0bp 620bp 350bp,clip,width=0.6\columnwidth]{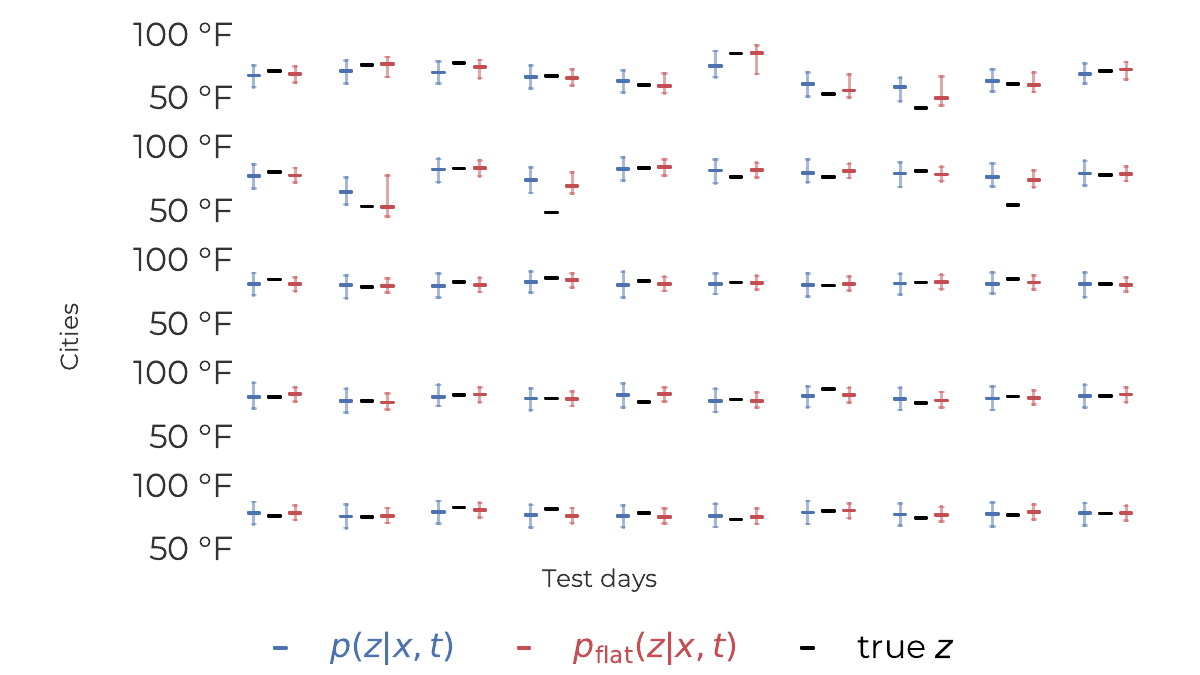}
\par\end{centering}
\caption{Medians and 90\% credible intervals for each city and test day for
the the \textbf{city temperature} problem.\label{fig:city-temp-maintext}}
\end{wrapfigure}%
In this problem \eqref{fig:city-temp-maintext}, the temperature is
given for a set of random days in a handful of random cities around
the world, along with the temperature on subsequent days. The user
gives a set of test days and asks to predict the temperature on the
following days.

Out of 1024 generated models, 736 (71.9\%) compiled and allowed for
inference. In this problem, again the final estimated posterior isn't
obviously better than a flat average. Detailed results are in \ref{subsec:City-temperature}.

\subsection{Gold}

In this problem \eqref{fig:gold-maintext}, the user describes a rod
with a varying density of gold. Binary tests have been done at different
positions and the goal is to infer the true density. Inference is
done with two different datasets, one with 30 observations, and one
with 150.

This problem proved quite challenging, with only 1-2\% of generated
models being syntactically valid and allowing inference to proceed.
Thus, 16,384 models were generated, out of which only 192 (1.2\%)
compiled and allowed for inference with 30 data and 71 (0.4\%) with
150 data. The marginal likelihoods result in almost all weight concentrating
in just three models in the smaller dataset and just one model with
the larger dataset. Using these gives sensible results, and clearly
better than a ``flat'' average, at least with 150 data . Detailed
results are in \ref{subsec:Gold-(small)} and \ref{subsec:Gold-(large)}.

\begin{figure}[b]
\begin{centering}
\includegraphics[viewport=0bp 0bp 450bp 250bp,clip,scale=0.38]{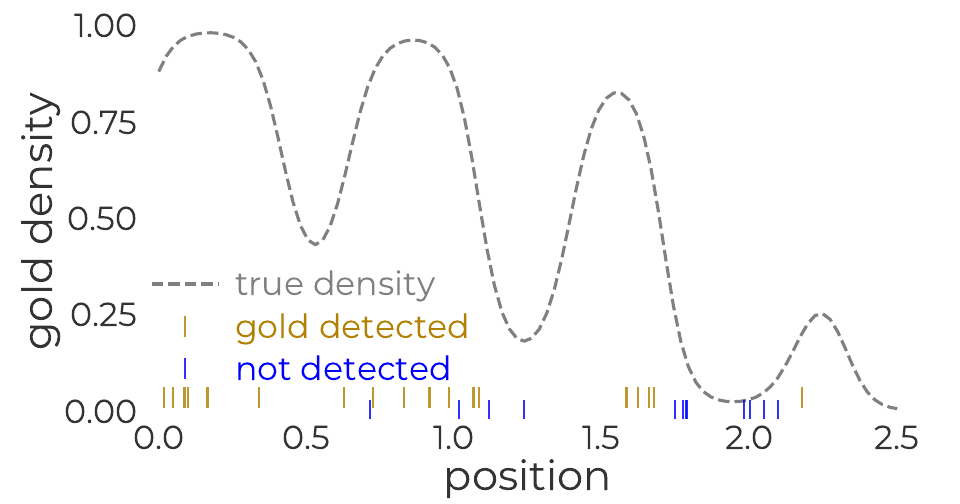}\includegraphics[viewport=64bp 0bp 450bp 250bp,clip,scale=0.38]{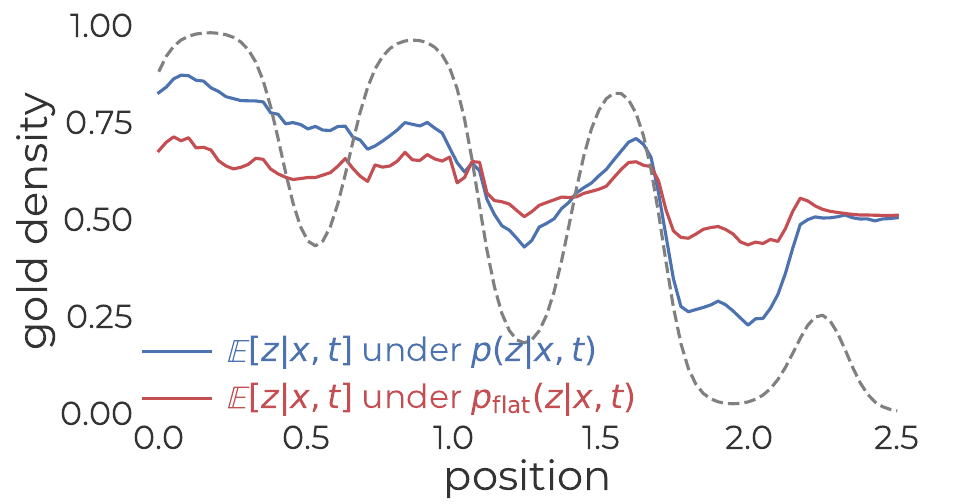}
\par\end{centering}
\begin{centering}
\includegraphics[viewport=0bp 0bp 450bp 250bp,clip,scale=0.38]{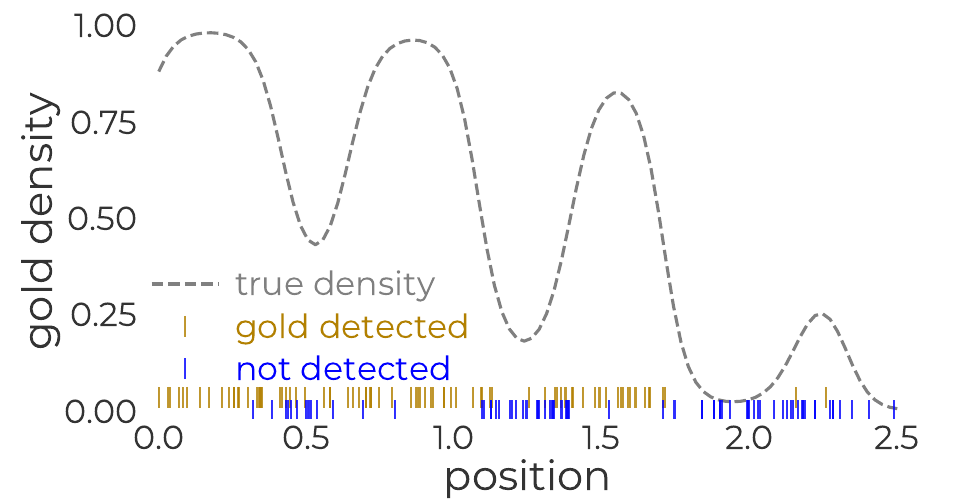}\includegraphics[viewport=64bp 0bp 450bp 250bp,clip,scale=0.38]{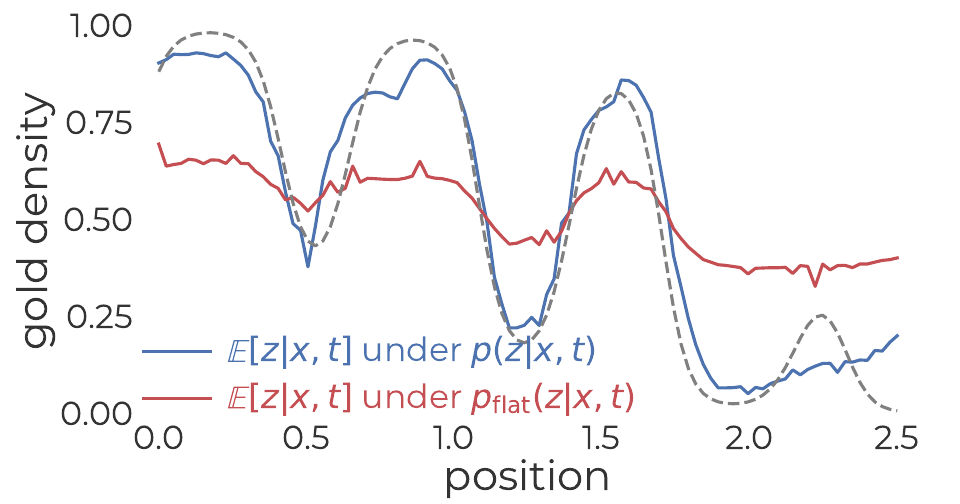}
\par\end{centering}
\caption{The \textbf{Gold} problem. Left column: Observed data. Right column:
Final estimated posterior, compared to a flat average. Top row: Results
with 30 observations. Bottom row: 150 observations.\label{fig:gold-maintext}}
\end{figure}

Finally, note that some of the above results may be impacted by the
fact that if truncated distributions are created through the use of
constraints in Stan \citep{Carpenter_2017_StanProbabilisticProgramming},
densities are not re-normalized to reflect the truncation. So, for
example, if a model includes a standard half-normal, computed log-densities
for that model will be lower they should be by a factor of $\log\frac{1}{2}\approx0.693$
nats. Unfortunately, this seems difficult to address automatically.
This is ``safe'' in the sense that it can decrease but not increase
estimated marginal likelihoods, but it means that some ``good''
models may have lower weights then they deserve in all the above experiments.

\section{Theory\label{sec:Theory}}

This section seeks to understand how accurately \ref{alg:practical-LLB}
will approximate the true posterior, in particular how the accuracy
depends on the number of samples $N$, and the quality of the posterior
and marginal likelihood approximations, and the LLM-defined prior
over models $p\pp{m\vert t}.$

\subsection{Importance sampling in model space\label{subsec:Importance-sampling-in-model-space}}

Recall the final posterior $p\pp{z\vert x,t}=\sum_{m}p\pp{m\vert x,t}p\pp{z\vert x,m}$
from \ref{eq:p(z|x_t)}. To cope with the large model space, one idea
would be to perform importance sampling---to draw $m\sim p\pp{m\vert t}$
and reweight $p\pp{z\vert x,m}$ by the ratio $p\pp{m\vert x,t}/p\pp{m\vert t}$.
Unfortunately, this is infeasible, since the normalizing constant
for $p\pp{m\vert x,t}$ in \ref{eq:p(m|x_t)} is intractable and the
probabilities $p\pp{m\vert t}$ are unavailable \eqref{sec:Computational-Issues}.

An alternative is \textit{self-normalized} importance sampling (SNIS)---to
draw a set of models from $p\pp{m\vert t}$ and give each a weight
proportional to $p\pp{m\vert x,t}/p\pp{m\vert t},$ but normalized
to sum to one. It can be shown \eqref{sec:Form-of-SNIS-weights} that
this estimator simplifies into
\begin{equation}
p\pp{z|x,t}\approx\sum_{n=1}^{N}w^{(n)}\ p\pp{z\vert x,m^{(n)}},\quad w^{(n)}=\frac{p\pp{x\vert m^{(n)}}}{\sum_{n'=1}^{N}p\pp{x\vert m^{(n')}}},\label{eq:SNIS-with-exact-inference}
\end{equation}
where $m^{(1)},\cdots,m^{(N)}\sim p\pp{m\vert t}.$ This is equivalent
to \ref{alg:self-normalized-LLB} \eqref{sec:Other-example-algorithms}.

This will rarely be practical, since the posteriors $p\pp{z\vert x,m^{(n)}}$
and marginal likelihoods $p\pp{x\vert m^{(n)}}$ must be approximated.
But it is useful to analyze it to isolate error due purely to SNIS.
Suppose we are interested in the expectation of some function $f$
with respect to the posterior, i.e. $\mu=\E_{p\pp{z\vert x,t}}f\pp z.$
If $\tilde{p}\pp{z\vert x,t}$ is the approximation in \eqref{eq:SNIS-with-exact-inference},
one can estimate $\mu$ with\vspace{-10pt}

\begin{equation}
\hat{\mu}_{N}=\E_{\tilde{p}\pp{z\vert x,t}}f\pp z=\sum_{n=1}^{N}w^{(n)}\E_{p\pp{\r z\vert x,m^{(n)}}}\bracs{p\pp{\r z\vert x,m^{(n)}}}.\label{eq:IS-estimator}
\end{equation}
\vspace{-7pt}

Standard results \eqref{sec:Variance-of-the-IS-estimator} show that
$\hat{\mu}_{N}$ is asymptotically unbiased with asymptotic variance
\[
\V\bb{\hat{\mu}_{N}}\approx V_{N}:=\frac{1}{N}\E_{p\pp{\r m\vert t}}\bracs{\pars{\frac{p\pp{\r m\vert x,t}}{p\pp{\r m\vert t}}}^{2}\pars{g\pp{\r m}-\mu}^{2}}.
\]
Further, if $\verts{g\pp m-\mu}\leq\delta$ (as would be true, for
example, if $\verts{f\pp z-\mu}\leq\delta$), then,
\begin{equation}
V_{N}\leq\frac{\delta^{2}}{N}\Bigl(1+\chi^{2}\pars{p\pp{\r m\vert x,t}\Vert p\pp{\r m\vert t}}\Bigr),\label{eq:SNIS_var_bound}
\end{equation}
where $\chi^{2}\pp{p\Vert q}$ again denotes the chi-squared divergence.
\ref{eq:SNIS_var_bound} suggests that for \ref{alg:self-normalized-LLB}
to achieve a given accuracy, $N$ must be proportional to $\chi^{2}\pars{p\pp{\r m\vert x,t}\Vert p\pp{\r m\vert t}}$.
This is an ``inclusive'' divergence, meaning that if $p\pp{m\vert t}$
is small but $p\pp{m\vert x,t}$ is large for some model $m$, this
greatly increases the divergence, while the reverse situation only
causes a modest increase.

This suggests that it is important that the LLM-defined prior $p\pp{m\vert t}$
be relatively broad: The accuracy of the estimated posterior is harmed
more by the prior being ``overconfident'' than being ``underconfident''.
In particular, since $p\pp{m\vert x,t}$ only varies from $p\pp{m\vert t}$
by a factor of $p\pp{x\vert m}$ (and normalization), it is important
that $p\pp{m\vert t}$ should not give vanishingly small probability
to models $m$ where the marginal likelihood $p\pp{x\vert m}$ is
large.

\subsection{Approximate inference}

This section considers the impact of approximating the marginal likelihood
and the posterior for a given model. Suppose temporarily that for
each $m$, some approximation $q\pp{z\vert x,m}\approx p\pp{z\vert x,m}$
is known. Define the joint approximation
\begin{equation}
q\pp{z,m\vert x}=q\pp{m\vert x}q\pp{z\vert x,m},\label{eq:joint-approximation}
\end{equation}
where $q\pp{m\vert x}$ remains to be chosen. In principle, we would
like to optimize $q\pp{m\vert x}$ to make the marginal $q\pp{z\vert x}$
close to $p\pp{z\vert x,t}$, but this seems difficult, since even
evaluating $p\pp{z\vert x,t}$ is intractable \eqref{eq:p(z|x_t)}.
Instead we propose to choose $q\pp{m\vert x}$ to minimize the joint
divergence
\begin{equation}
\KL{q\pp{\r z,\r m\vert x}}{p\pp{\r z,\r m\vert x,t}},\label{eq:joint-divergence}
\end{equation}
 which (by the chain rule of KL-divergence \citep[Thm 2.5.3]{Cover_2006_ElementsInformationTheory})
is an upper bound on the divergence from $q\pp{z\vert x}$ to $p\pp{z\vert x,t}.$
The following result gives the the optimal $q\pp{m\vert x}$. (Proof
in \ref{subsec:proof-of-variational-result})
\begin{thm}
Suppose $p\pp{z,x,m\vert t}$ and $q\pp{z\vert x,m}$ are fixed. Then
$\KL{q\pp{\r z,\r m\vert x}}{p\pp{\r z,\r m\vert x,t}}$ is minimized
by\label{thm:variational-lemma}
\begin{align}
q\pp{m\vert x} & \propto p\pp{m\vert t}p\pp{x\vert m}\times\exp\Bigl(-\KL{q\pp{\r z\vert x,m}}{p\pp{\r z\vert x,m}}\Bigr),\label{eq:optimal-q}
\end{align}
with a resulting joint divergence of
\begin{align}
\KL{q\pp{\r z,\r m\vert x}}{p\pp{\r z,\r m\vert x,t}} & =-\log\E_{p\pp{\r m\vert x,t}}\exp\Bigl(-\KL{q\pp{\r z\vert x,\r m}}{p\pp{\r z\vert x,\r m}}\Bigr).\label{eq:optimal-q-joint-divergence}
\end{align}
\end{thm}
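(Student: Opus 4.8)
The plan is to expand the joint KL divergence with the chain rule, isolate the single term that depends on the free distribution $q\pp{m\vert x}$, and then recognize it as a KL divergence against an unnormalized ``tilted'' measure, at which point the Gibbs variational principle immediately yields both the minimizer and the minimum value.

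First I would use the factorization $p\pp{z,x,m\vert t}=p\pp{m\vert t}p\pp{z,x\vert m}$ from \ref{eq:p(z_x_m|t)}, which implies $(z,x)$ is independent of $t$ given $m$ and hence $p\pp{z\vert x,m,t}=p\pp{z\vert x,m}$, so that $p\pp{\r z,\r m\vert x,t}=p\pp{\r m\vert x,t}\,p\pp{\r z\vert x,\r m}$. Combined with the analogous $q\pp{\r z,\r m\vert x}=q\pp{\r m\vert x}\,q\pp{\r z\vert x,\r m}$, the chain rule for KL divergence \citep[Thm 2.5.3]{Cover_2006_ElementsInformationTheory} gives
\[
\KL{q\pp{\r z,\r m\vert x}}{p\pp{\r z,\r m\vert x,t}}=\KL{q\pp{\r m\vert x}}{p\pp{\r m\vert x,t}}+\E_{q\pp{\r m\vert x}}\KL{q\pp{\r z\vert x,\r m}}{p\pp{\r z\vert x,\r m}}.
\]
Writing $D\pp m:=\KL{q\pp{\r z\vert x,m}}{p\pp{\r z\vert x,m}}\ge 0$, the right-hand side equals $\E_{q\pp{\r m\vert x}}\log\bigl(q\pp{\r m\vert x}\big/(p\pp{\r m\vert x,t}\,e^{-D\pp{\r m}})\bigr)$.

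Next I would introduce the unnormalized measure $\tilde p\pp m:=p\pp{m\vert x,t}\,e^{-D\pp m}$ with normalizer $Z:=\sum_m\tilde p\pp m=\E_{p\pp{\r m\vert x,t}}e^{-D\pp{\r m}}\in(0,1]$, and rewrite the objective as $\KL{q\pp{\r m\vert x}}{\tilde p/Z}-\log Z$. Since the first term is a genuine KL divergence it is nonnegative and equals zero exactly when $q\pp{m\vert x}=\tilde p\pp m/Z$, i.e.\ $q\pp{m\vert x}\propto p\pp{m\vert x,t}\,e^{-D\pp m}$. Substituting $p\pp{m\vert x,t}\propto p\pp{m\vert t}p\pp{x\vert m}$ from \ref{eq:p(m|x_t)} and folding every constant into the proportionality yields \ref{eq:optimal-q}, and the minimum value of the objective is $-\log Z=-\log\E_{p\pp{\r m\vert x,t}}e^{-D\pp{\r m}}$, which is \ref{eq:optimal-q-joint-divergence}.

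There is no deep obstacle here; the care points are (i) justifying the decomposition step as the KL chain rule, which requires $p\pp{z\vert x,m,t}=p\pp{z\vert x,m}$ and hence relies on the model structure of \ref{eq:p(z_x_m|t)}, and (ii) bookkeeping of the two proportionality constants --- the unknown normalizer of $p\pp{m\vert x,t}$ in \ref{eq:p(m|x_t)} and the normalizer $Z$ --- so that the ``$\propto$'' in \ref{eq:optimal-q} is legitimate while the ``$=$'' in \ref{eq:optimal-q-joint-divergence} is exact. It is also worth noting explicitly that $Z\le 1$ (since $D\pp m\ge 0$), so the optimal joint divergence is nonnegative, as any KL divergence must be, and vanishes precisely when $q\pp{z\vert x,m}=p\pp{z\vert x,m}$ for $p\pp{\r m\vert x,t}$-almost every $m$.
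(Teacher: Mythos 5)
Your proof is correct, and it reaches both \ref{eq:optimal-q} and \ref{eq:optimal-q-joint-divergence} by a route that differs from the paper's in how the optimizer is found. The paper first converts the problem to ELBO maximization, sets up a Lagrangian with the normalization constraint $\sum_m q\pp{m\vert x}=1$, differentiates to obtain the stationary point, and then separately substitutes the resulting $q\pp{m\vert x}$ back into the chain-rule decomposition of the joint KL to evaluate the minimum --- a two-stage computation with explicit bookkeeping of the normalizing sums. You instead isolate the $q\pp{m\vert x}$-dependence after the chain-rule step, absorb the inner term $\E_{q\pp{\r m\vert x}}\KL{q\pp{\r z\vert x,\r m}}{p\pp{\r z\vert x,\r m}}$ into a tilted measure $\tilde p\pp m=p\pp{m\vert x,t}e^{-D\pp m}$, and invoke the Gibbs variational principle: the objective is $\KL{q\pp{\r m\vert x}}{\tilde p/Z}-\log Z$, so the minimizer and minimum value drop out simultaneously from nonnegativity of KL, with no calculus of variations and no Lagrange multiplier. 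This is cleaner and also certifies global optimality (the Lagrangian argument only identifies a stationary point), and it makes the nonnegativity of the optimal divergence and the condition for it to vanish immediate. Your explicit justification that $p\pp{z\vert x,m,t}=p\pp{z\vert x,m}$ via the factorization \ref{eq:p(z_x_m|t)} is a care point the paper glosses over when applying the chain rule; it is correct and worth stating.
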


Informally, the optimal $q\pp{m\vert x}$ in \ref{eq:optimal-q} can
be seen as taking $p\pp{m\vert x,t}\propto p\pp{m\vert t}p\pp{x\vert m}$
from \ref{eq:p(m|x_t)} and down-weighting models $m$ where $q\pp{z\vert x,m}$
is a worse approximation of $p\pp{z\vert x,m}$. \ref{eq:optimal-q-joint-divergence}
is less intuitive. As KL-divergence is non-negative, the inner expectation
is between 0 and 1, with larger values giving a smaller joint divergence.
Thus, inaccuracy in $q\pp{z\vert x,m}$ ``hurts'' more when $p\pp{m\vert x,t}$
is larger. This same intuition follows from noting that \ref{eq:optimal-q-joint-divergence}
can be upper-bounded by $\E_{p\pp{\r m\vert x,t}}\KL{q\pp{\r z\vert x,\r m}}{p\pp{\r z\vert x,\r m}}$
(See \ref{cor:relaxed-bound} in \ref{sec:Relaxed-bound}).

\ref{thm:variational-lemma} is not immediately useful, since the
marginal likelihood $p\pp{x\vert m}$ and KL divergence for a given
$m$ are typically intractable. However, recall the ELBO decomposition
\begin{equation}
\log p\pp{x\vert m}=\mathrm{ELBO}(m)+\KL{q\pp{\r z\vert x,m}}{p\pp{\r z\vert x,m}},\label{eq:ELBO-decomp-1}
\end{equation}
where the (tractable) evidence lower-bound (ELBO) (now written as
a function of the model $m$ rather than the distribution $q$ as
in \ref{eq:ELBO-q}) is 
\begin{equation}
\mathrm{ELBO}(m):=\E_{q\pp{\r z\vert x,m}}\log\frac{p\pp{\r z,x\vert m}}{q\pp{\r z\vert x,m}}.\label{eq:ELBO}
\end{equation}
Variational inference (VI) algorithms maximize the ELBO which is equivalent
(since $\log p\pp{x\vert m}$ is fixed) to minimizing the KL-divergence.
Using this decomposition, \ref{thm:variational-lemma} can be given
in the following alternate form. (Proof in \ref{subsec:Proof-of-ELBO-form}.)
\begin{cor}
Under the same assumptions as \ref{thm:variational-lemma}, the optimal
$q\pp{m\vert x}$ can also be written as\label{cor:variational-q(m)}
\begin{equation}
q\pp{m\vert x}\propto p\pp{m\vert t}\exp\Bigl(\mathrm{ELBO}(m)\Bigr),\label{eq:optimal-q-joint-divergence-ELBO-form}
\end{equation}
with a resulting joint divergence of
\begin{align}
\KL{q\pp{\r z,\r m\vert x}}{p\pp{\r z,\r m\vert x,t}} & =\log p\pp{x\vert t}-\log\E_{p\pp{\r m\vert t}}\exp\pars{\mathrm{ELBO}(\r m)}.\label{eq:optimal-joint-KL-ELBO-form}
\end{align}
\end{cor}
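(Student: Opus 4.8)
The plan is to obtain \ref{cor:variational-q(m)} as a direct rewriting of \ref{thm:variational-lemma}, feeding the ELBO decomposition \eqref{eq:ELBO-decomp-1} into the formulas \eqref{eq:optimal-q} and \eqref{eq:optimal-q-joint-divergence} wherever a factor of the form $p\pp{x\vert m}\exp\bigl(-\KL{q\pp{\r z\vert x,m}}{p\pp{\r z\vert x,m}}\bigr)$ occurs. The single algebraic fact doing all the work is that, rearranging \eqref{eq:ELBO-decomp-1} as $\log p\pp{x\vert m}-\KL{q\pp{\r z\vert x,m}}{p\pp{\r z\vert x,m}}=\mathrm{ELBO}(m)$, one gets $p\pp{x\vert m}\exp\bigl(-\KL{q\pp{\r z\vert x,m}}{p\pp{\r z\vert x,m}}\bigr)=\exp\bigl(\mathrm{ELBO}(m)\bigr)$.

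For the optimal model weight, I would plug this identity straight into \eqref{eq:optimal-q}: the right-hand side $p\pp{m\vert t}\,p\pp{x\vert m}\,\exp\bigl(-\KL{q\pp{\r z\vert x,m}}{p\pp{\r z\vert x,m}}\bigr)$ collapses to $p\pp{m\vert t}\,\exp\bigl(\mathrm{ELBO}(m)\bigr)$, which is exactly \eqref{eq:optimal-q-joint-divergence-ELBO-form} (the proportionality constant is immaterial). For the joint divergence, I would start from \eqref{eq:optimal-q-joint-divergence}, which writes the divergence as $-\log\E_{p\pp{\r m\vert x,t}}\exp\bigl(-\KL{q\pp{\r z\vert x,\r m}}{p\pp{\r z\vert x,\r m}}\bigr)$, expand the expectation as a sum over $m$, and substitute the normalized form of \eqref{eq:p(m|x_t)}, namely $p\pp{m\vert x,t}=p\pp{m\vert t}\,p\pp{x\vert m}/p\pp{x\vert t}$ with $p\pp{x\vert t}=\sum_{m}p\pp{m\vert t}\,p\pp{x\vert m}$. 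Pulling the constant $1/p\pp{x\vert t}$ out of the sum and again replacing $p\pp{x\vert m}\exp\bigl(-\KL{q\pp{\r z\vert x,m}}{p\pp{\r z\vert x,m}}\bigr)$ by $\exp\bigl(\mathrm{ELBO}(m)\bigr)$ turns the whole expression into $-\log\bigl(p\pp{x\vert t}^{-1}\,\E_{p\pp{\r m\vert t}}\exp(\mathrm{ELBO}(\r m))\bigr)=\log p\pp{x\vert t}-\log\E_{p\pp{\r m\vert t}}\exp(\mathrm{ELBO}(\r m))$, matching \eqref{eq:optimal-joint-KL-ELBO-form}.

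Since this is essentially algebraic bookkeeping built on an already-proven theorem, I do not expect a substantive obstacle. The one point that needs care is keeping the two model-space expectations distinct: \ref{thm:variational-lemma} states the divergence as an expectation under the \emph{posterior} model weight $p\pp{\r m\vert x,t}$, whereas the corollary states it under the \emph{prior} $p\pp{\r m\vert t}$, and the ratio between these two distributions is precisely $p\pp{x\vert m}/p\pp{x\vert t}$, which is what manufactures the extra additive $\log p\pp{x\vert t}$ term. It is also worth noting in passing that $\mathrm{ELBO}(m)\le\log p\pp{x\vert m}$ keeps all the exponentials and logarithms finite (and the degenerate case $\KL{q\pp{\r z\vert x,m}}{p\pp{\r z\vert x,m}}=\infty$ is handled by $\exp(-\infty)=0$), so no integrability concerns arise.
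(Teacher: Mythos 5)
Your proposal is correct and follows essentially the same route as the paper's own proof: both rest on the identity $p\pp{x\vert m}\exp\bigl(-\KL{q\pp{\r z\vert x,m}}{p\pp{\r z\vert x,m}}\bigr)=\exp\bigl(\mathrm{ELBO}(m)\bigr)$ obtained from \ref{eq:ELBO-decomp-1}, substituted into \ref{eq:optimal-q}, and both derive \ref{eq:optimal-joint-KL-ELBO-form} by rewriting the expectation in \ref{eq:optimal-q-joint-divergence} via $p\pp{m\vert x,t}/p\pp{x\vert m}=p\pp{m\vert t}/p\pp{x\vert t}$. The only difference is cosmetic ordering of the algebraic steps.
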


Note that \ref{eq:optimal-q-joint-divergence-ELBO-form} was previously
shown by \citep[Sec. 2.2]{Kejzlar_2023_BlackBoxVariational} and \citet[Thm 2.1]{Ohn_2024_AdaptiveVariationalBayes}.

Taken literally, \ref{cor:variational-q(m)} suggests that to find
the joint approximation $q\pp{z,m\vert x}$ closest to $p\pp{z,m\vert x,t},$
one should loop over all models $m$, independently do variational
inference on each, and then average the variational distributions
using the weights from \ref{eq:optimal-q-joint-divergence-ELBO-form}.
Such an algorithm is shown as \ref{alg:Variational-LLB} \eqref{sec:Other-example-algorithms},
but is not practical since again the space of models $m$ is large
and the LLM probabilities $p\pp{m\vert t}$ are unavailable.

However, one can also incorporate the SNIS ideas from \ref{subsec:Importance-sampling-in-model-space}.
Suppose we would like to estimate $q\pp{z\vert x}=\E_{q\pp{\r m\vert x}}q\pp{z\vert x,\r m},$
where $q\pp{z\vert x,m}$ is an approximation of $p\pp{z\vert x,m}$
and $q\pp{m\vert x}$ are the weights from \ref{eq:optimal-q-joint-divergence-ELBO-form}.
If we use a proposal distribution $p\pp{m\vert t},$ then it's easy
to show \eqref{sec:Form-of-SNIS-weights-with-VI} that the self-normalized
weights are proportional to $\exp(\mathrm{ELBO}(m))$. This results
in \ref{alg:Variational-LLB-IS}. This is an instance of the recipe
in \ref{alg:practical-LLB}, since $\mathrm{ELBO}(m)$ is a lower-bound
on $\log p\pp{x\vert m}$. 

\subsection{Algorithmic variants}

It can be beneficial to use approximating distributions $q\pp{z\vert x,m}$
where only a \textit{lower-bound} on the ELBO is available. One example
of this would be to take advantage of Monte Carlo VI methods like
importance-weighted VI \citep{Burda_2015_ImportanceWeightedAutoencoders,Domke_2019_DivideCoupleUsing,Bachman_2015_TrainingDeepGenerative,Cremer_2017_ReinterpretingImportanceWeightedAutoencoders,ChristianAnderssonNaesseth_2018_MachineLearningUsing}.
Another example would be when one believes MCMC can efficiently sample
from $p\pp{z\vert x,m}$. If one thinks of those samples as representing
an approximating distribution $q\pp{z\vert x,m}$, one might believe
it is essentially exact. But even if this is true, estimating the
marginal likelihood from samples is famously difficult \citep{Newton_1994_ApproximateBayesianInference,Neal_2008_HarmonicMeanLikelihood},
so one might bound the marginal likelihood using a different method
(e.g. variational inference).

To justify this, imagine that one first finds a variational approximation
$q\pp{z\vert x,m}$ for each model $m$, and sets the model weights
$q\pp{m\vert x}$ using \ref{eq:optimal-q-joint-divergence-ELBO-form}
Then, imagine \textit{replacing} each distribution $q\pp{z\vert x,m}$
with one with a higher ELBO (lower KL-divergence) while leaving $q\pp{m\vert x}$
fixed. By the chain rule of KL-divergence, $\KL{q\pp{\r z,\r m\vert x}}{p\pp{\r z,\r m\vert x,t}}=\KL{q\pp{\r m\vert x}}{p\pp{\r m\vert x,t}}+\KL{q\pp{\r z\vert\r m,x}}{p\pp{\r z\vert\r m,x}}.$
Improving $q\pp{z\vert m,x}$ will decrease the second KL-divergence
on the right while leaving the first divergence unchanged, meaning
the joint divergence can only decrease.

\ref{sec:Analysis-with-inexact-ELBO} gives a generalized version
of the above results when the model weights $q\pp{m\vert t}$ are
computed based on inexact ELBO values. A particularly interesting
special case is when the distributions $p\pp{z\vert x,m}$ can be
computed exactly (e.g. using MCMC) but the weights $q\pp{z\vert m}$
are computed based on some bound on the marginal likelihood. (See
\ref{alg:LLB-using-MCMC-and-VI} in \ref{sec:Other-example-algorithms}.)
Then the joint divergence result from \ref{sec:Analysis-with-inexact-ELBO}
reduces to $\KL{q\pp{\r z,\r m\vert x}}{p\pp{\r z,\r m\vert x,t}}=-\log\E_{p\pp{\r m\vert x,t}}\exp(-\delta^{(\r m)}+\bar{\delta}),$
where $\delta^{(m)}$ is the gap between $\log p\pp{x\vert m}$ and
the lower bound used when computing $q\pp{m\vert x}$ and $\bar{\delta}=\E_{q\pp{\r m\vert x}}\delta^{(\r m)}.$
This result has a somewhat similar intuition as \ref{eq:optimal-q-joint-divergence}:
Constant errors have no effect, and errors on models where $p\pp{m\vert x,t}$
very is small have little effect. What really matters is if $\delta^{(m)}$
\textit{varies} among models where $p\pp{m\vert x,t}$ is large.

\section{Discussion and limitations}

This paper proposed the LLB scheme for defining a posterior from natural
language \eqref{sec:The-basic-idea}, suggested a broad approximate
inference strategy for it \eqref{sec:Computational-Issues}, tested
it experimentally \eqref{sec:Examples} and analyzed the error of
the inference strategy theoretically \eqref{sec:Theory}.

One direction for future work would be to investigate better ways
of using LLMs to define distributions over models---essentially better
distributions $p\pp{m\vert t}.$ There are many obvious options, such
as different system prompts, more/better examples, different formatting
of user inputs, or training (or fine-tuning) an LLM specifically for
this task.

Another direction is better inference methods. The suggested recipe
involves doing inference independently on each sampled model. While
fully parallelizable, this is expensive, and practical only on relatively
small problems. We intend no claim of optimality. When doing inference
on many models for the same task, many models are often quite similar,
suggesting it might be possible to share work between models. It might
also be possible to use constrained generation to guarantee that only
valid models were created, eliminating the overhead of generating
and then rejecting some syntactically invalid models.

\subsection{Related work}

In terms of usage of LLMs for Bayesian modeling, \citet{Wong_2023_WordModelsWorld}
suggest a system where an LLM translates natural language text into
a PPL. This is related to our approach, but assumes that ``facts''
as provided to the LLM as text, rather than being \textit{described}
in text and then provided to inference as numerical data. \citet{Capstick_2024_UsingLargeLanguage}
consider using an LLM to take a natural language description and predict
the parameters of Gaussian priors (to be used in concert with known
likelihood models). \citet{Selby_2024_HadEnoughExperts} also consider
a similar idea, and compare the results from several LLMs to priors
elicited from human experts \citet{Stefan_2022_ExpertAgreementPrior}.
\citet{Requeima_2024_LLMProcessesNumerical} suggest a scheme to make
probabilistic prediction from natural language plus data. Other work
considers the use of LLMs for time-series predictions \citep{Gruver_2023_LargeLanguageModels,Xue_2024_PromptCastNewPromptBased}
or as regressors \citep{Vacareanu_2024_WordsNumbersYour}. \citet{Choi_2022_LMPriorsPreTrainedLanguage}
suggest a general framework for using natural language information
to inform learning procedures via an LLM. 

\section*{Acknowledgement}

This material is based upon work supported in part by the National
Science Foundation under Grant No. 2045900.

\bibliographystyle{icml2025}
\bibliography{justindomke_bibtex3}

\cleardoublepage{}

\appendix

\onecolumn 

\section{Acronyms and notation}

\textbf{Acronyms}
\begin{itemize}
\item LLM: Large Language Model
\item PPL: Probabilistic Programming Language
\item BMA: Bayesian Model Averaging
\item SNIS: Self-normalized importance sampling
\item VI: Variational inference
\end{itemize}
\textbf{Notation}
\begin{itemize}
\item $t$: Plain-language description of an inference problem.
\item $m$: Formal model definition (in a PPL)
\item $x$: Input data
\item $z$: Query variables / latent variables
\item $n$: Index for sampled model
\item $m^{(n)}:$ $n$th sampled model
\item $w^{(n)}$: weight given to $n$th sampled model
\end{itemize}
\textbf{Terminology}
\begin{itemize}
\item Single-model posterior: $p\pp{z\vert x,m}$
\item Posterior model weight: $p\pp{m\vert x,t}=\frac{p\pp{m\vert t}p\pp{x\vert m}}{\sum_{m'}p\pp{m'\vert t}p\pp{x\vert m'}}=\frac{p\pp{m\vert t}p\pp{x\vert m}}{p\pp{x\vert t}}$
\item Final posterior: $p\pp{z\vert x,t}$
\end{itemize}
\cleardoublepage{}

\section{Experimental details \label{sec:Additional-experimental-results}}

This section gives detailed results for the experiments. For each
problem, we describe the prompt and data, show example generated models,
attempt to visualize the posterior of each model, show marginal likelihoods
and weights, and the final estimated posterior, compared against a
``flat'' average as in \ref{eq:flat-posterior}.

\subsection{Rain\label{subsec:Rain-full}}

In this problem, the user describes recording if it rained on not
on a series of days, and wishes to predict if it will rain on the
next day. Meanwhile, the given data $x$ provides two contradictory
signals: On the one hand, most days did not have rain. However, adjacent
days are correlated and it rained on the final day.

\begin{minipage}[t]{0.48\textwidth}%
\begin{center}
{\scriptsize{\ttfamily{}%
\begin{minipage}[t]{0.97\columnwidth}%
\lstinputlisting[
    basicstyle=\scriptsize\ttfamily,     
   tabsize=1,
   frame=single,
   rulecolor=\color{gray!100},
   backgroundcolor=\color{gray!03},
    breaklines=true,
    xleftmargin=0pt,
    xrightmargin=0pt,
    breakindent=0pt,
]{experiments/basketball/prompt.txt}%
\end{minipage}}}
\par\end{center}%
\end{minipage}\hfill{}%
\begin{minipage}[t]{0.5\textwidth}%
\begin{center}
{\scriptsize{\ttfamily{}%
\begin{minipage}[t]{0.97\columnwidth}%
\lstinputlisting[
    basicstyle=\scriptsize\ttfamily,
   frame=single,
   rulecolor=\color{gray!100},
   backgroundcolor=\color{gray!03},
    breaklines=true,
    xleftmargin=0pt,
    xrightmargin=0pt,
    breakindent=2\baselineskip,
]{experiments/basketball/data.json}%
\end{minipage}}}
\par\end{center}%
\end{minipage}

Out of 1024 models generated, 742 (72.5\%) compiled and allowed for
inference. \ref{fig:rain-models} shows four example models. \ref{fig:rain-inference}
shows the estimated marginal likelihoods and posteriors for each of
the valid models. Finally, \ref{fig:rain-bma} shows the estimated
marginal likelihoods and corresponding weights for the top-scoring
50 valid models, as well as the final estimated posterior $p\pp{z\vert x,t}$
compared to a ``flat'' average $p_{\mathrm{flat}}\pp{z\vert x,t}$.

In this case, there are two common ``clusters'' of models. One consists
of models similar to {\color{col3}$m^{(247)}$} \eqref{fig:rain-model-nocorr}
that treat each day as independent and give predictions near the base
rate of $\frac{8}{22}\approx.364$. A second consists of models similar
to {\color{col2}$m^{(463)}$} \eqref{fig:rain-model-corr} that
model the probability that rain is followed by no-rain or vice-versa
and give predictions near the base rate of $\frac{14}{21}\approx0.667$
for between-day consistency. Models from the first cluster tend to
have lower-marginal likelihoods than models from the second cluster.
However, the top-scoring model {\color{col1}$m^{(742)}$} \eqref{fig:rain-top-model}
has a still-higher marginal likelihood, and so receives essentially
all weight in the estimated final posterior $p\pp{z\vert x,t}.$

\begin{figure}
\begin{minipage}[t]{0.48\textwidth}%
\begin{center}
{\scriptsize\subfloat[{\color{col1}$m^{(742)}$}, the top-scoring model.\label{fig:rain-top-model}]{\begin{centering}
{\scriptsize{}%
\begin{minipage}[t]{0.97\columnwidth}%
\lstinputlisting[
    basicstyle=\scriptsize\ttfamily\color{col1},    
   tabsize=2,
   frame=single,
   rulecolor=\color{gray!100},
   backgroundcolor=\color{gray!03},
    breaklines=true,
    xleftmargin=0pt,
    xrightmargin=0pt,
    breakindent=2\baselineskip,
]{experiments/basketball/823_model.stan}%
\end{minipage}}{\scriptsize\par}
\par\end{centering}
{\scriptsize}{\scriptsize\par}}}
\par\end{center}%
\end{minipage}\hfill{}%
\begin{minipage}[t]{0.5\textwidth}%
\begin{center}
{\scriptsize\subfloat[{\color{col2}$m^{(463)}$}, typical of one \textquotedblleft cluster\textquotedblright{}
of models.\label{fig:rain-model-corr}]{\begin{centering}
{\scriptsize{}%
\begin{minipage}[t]{0.97\columnwidth}%
\lstinputlisting[
    basicstyle=\scriptsize\ttfamily\color{col2},  
   tabsize=2,
   frame=single,
   rulecolor=\color{gray!100},
   backgroundcolor=\color{gray!03},
    breaklines=true,
    xleftmargin=0pt,
    xrightmargin=0pt,
    breakindent=2\baselineskip,
]{experiments/basketball/008_model.stan}%
\end{minipage}}{\scriptsize\par}
\par\end{centering}
{\scriptsize}{\scriptsize\par}}}
\par\end{center}%
\end{minipage}

\begin{minipage}[t]{0.48\textwidth}%
\begin{center}
{\scriptsize\subfloat[{\color{col3}$m^{(247)}$}, typical of a second cluster.\label{fig:rain-model-nocorr}]{\begin{centering}
{\scriptsize{}%
\begin{minipage}[t]{0.97\columnwidth}%
\lstinputlisting[
    basicstyle=\scriptsize\ttfamily\color{col3},  
   tabsize=2,
   frame=single,
   rulecolor=\color{gray!100},
   backgroundcolor=\color{gray!03},
    breaklines=true,
    xleftmargin=0pt,
    xrightmargin=0pt,
    breakindent=2\baselineskip,
]{experiments/basketball/007_model.stan}%
\end{minipage}}{\scriptsize\par}
\par\end{centering}
{\scriptsize}{\scriptsize\par}}}
\par\end{center}%
\end{minipage}\hfill{}%
\begin{minipage}[t]{0.5\textwidth}%
\begin{center}
{\scriptsize\subfloat[{\color{col4}$m^{(6)}$}, a low scoring model.]{\begin{centering}
{\scriptsize{}%
\begin{minipage}[t]{0.97\columnwidth}%
\lstinputlisting[
    basicstyle=\scriptsize\ttfamily\color{col4},  
   tabsize=2,
   frame=single,
   rulecolor=\color{gray!100},
   backgroundcolor=\color{gray!03},
    breaklines=true,
    xleftmargin=0pt,
    xrightmargin=0pt,
    breakindent=2\baselineskip,
]{experiments/basketball/645_model.stan}%
\end{minipage}}{\scriptsize\par}
\par\end{centering}
{\scriptsize}{\scriptsize\par}}}
\par\end{center}%
\end{minipage}

\caption{Example models for the \textbf{rain} problem, sorted by estimated
marginal likelihoods. \label{fig:rain-models}}
\end{figure}

\begin{figure}
\noindent\begin{minipage}[t]{1\columnwidth}%
\includegraphics[scale=0.57]{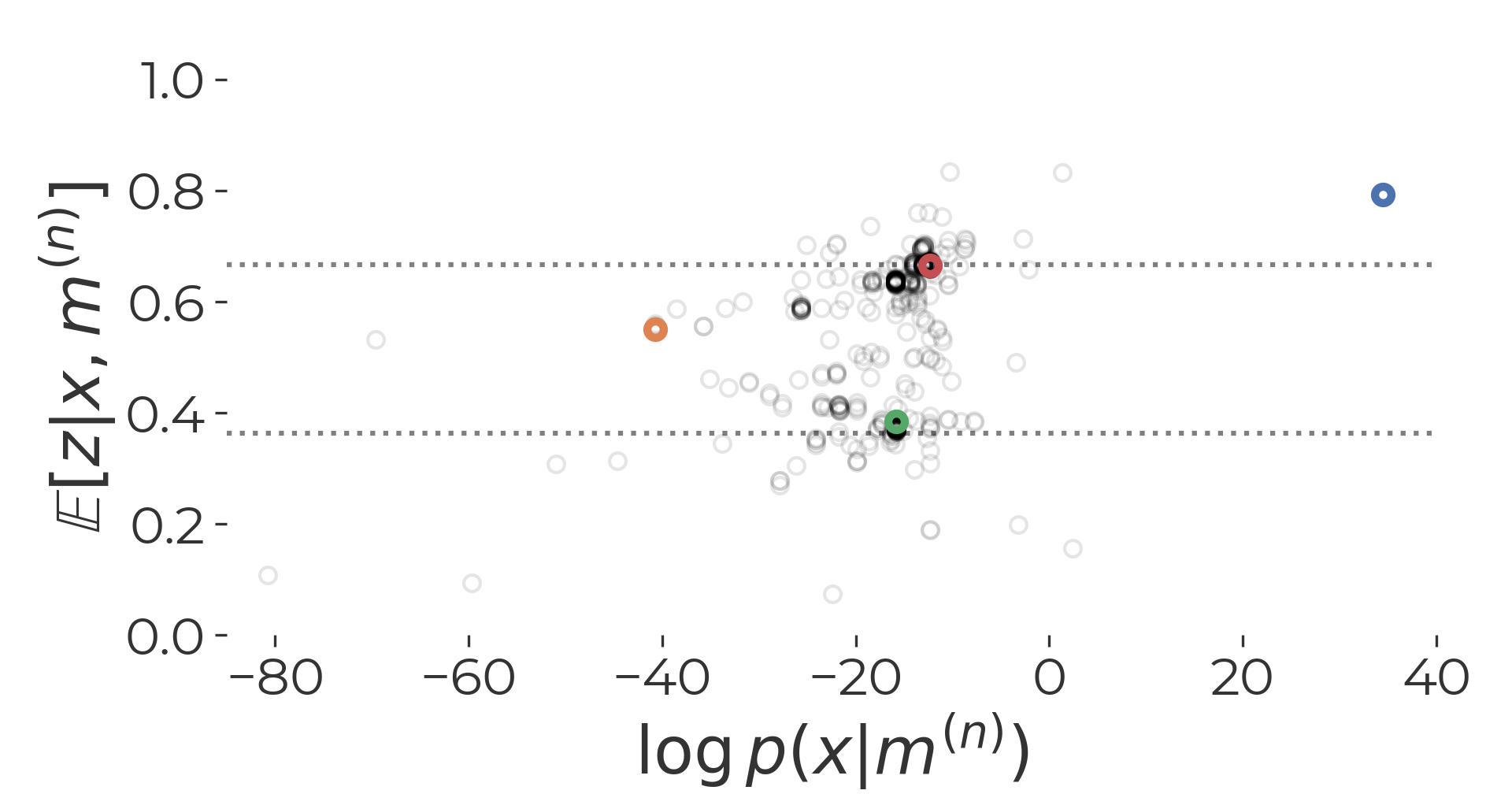}\includegraphics[viewport=2.3cm 0bp 800bp 420bp,clip,scale=0.57]{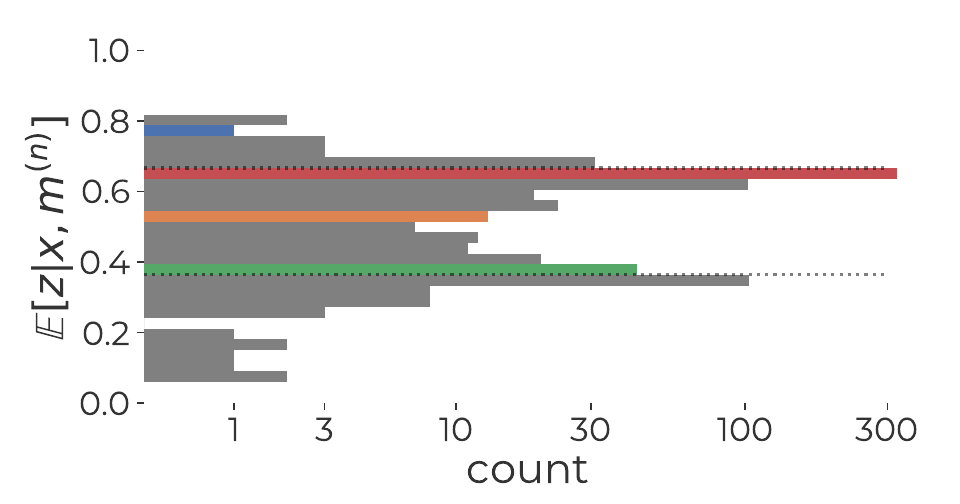}

\vspace{-0.89cm}

\includegraphics[scale=0.57]{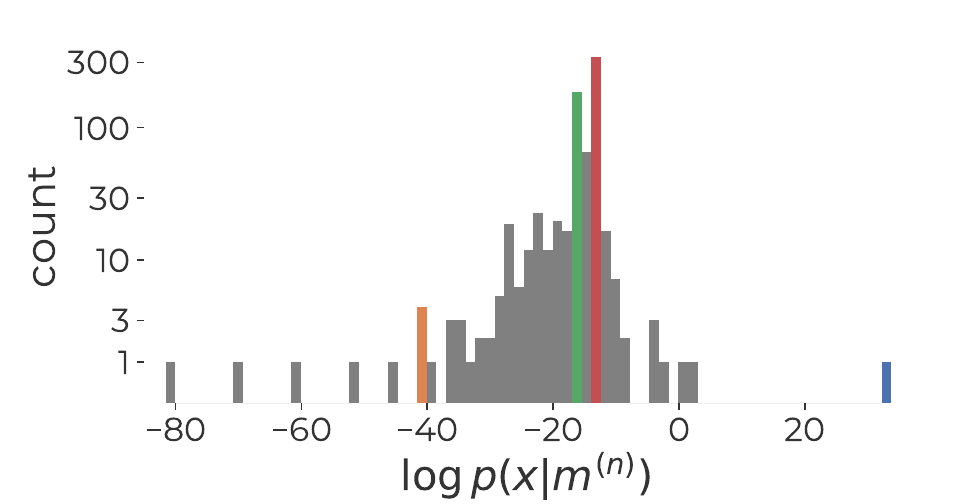}%
\end{minipage}

\caption{Top left: Estimated log marginal likelihoods $\log p\protect\pp{x\vert m^{(n)}}$
and the estimated probability of rain on the next day $\protect\E\protect\bb{z\vert x,m^{(n)}}$
for each of the valid models $n$ for the \textbf{rain} problem. Markers
are colored for four example models from \ref{fig:rain-models}. Top
right: Histogram for estimated probability of rain. Bottom left: Histogram
for estimated marginal likelihoods. The dotted lines at $\frac{8}{22}\approx0.364$
and $\frac{14}{21}\approx.667$ show the base rates for rain and between-day
consistency in rain, respectively. \label{fig:rain-inference}}
\end{figure}

\begin{figure}
\begin{centering}
\hfill{}\includegraphics[scale=0.6]{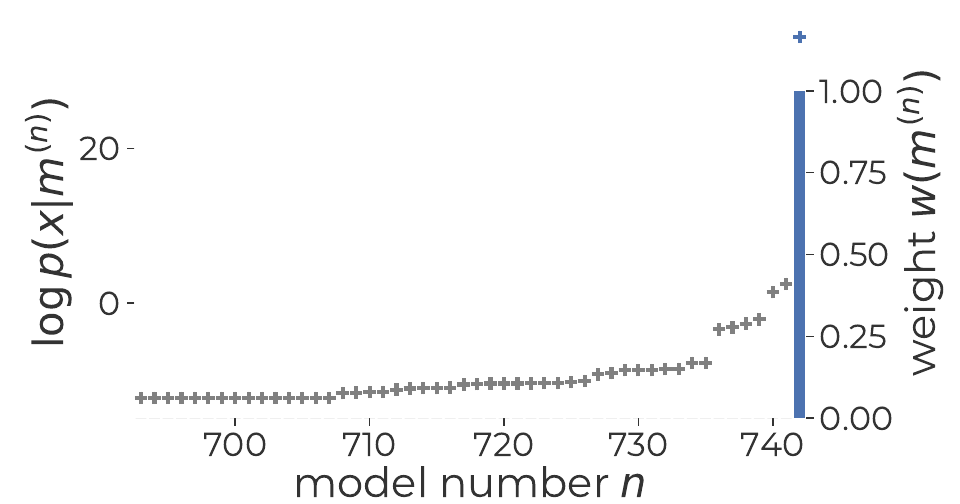}\hfill{}\includegraphics[scale=0.6]{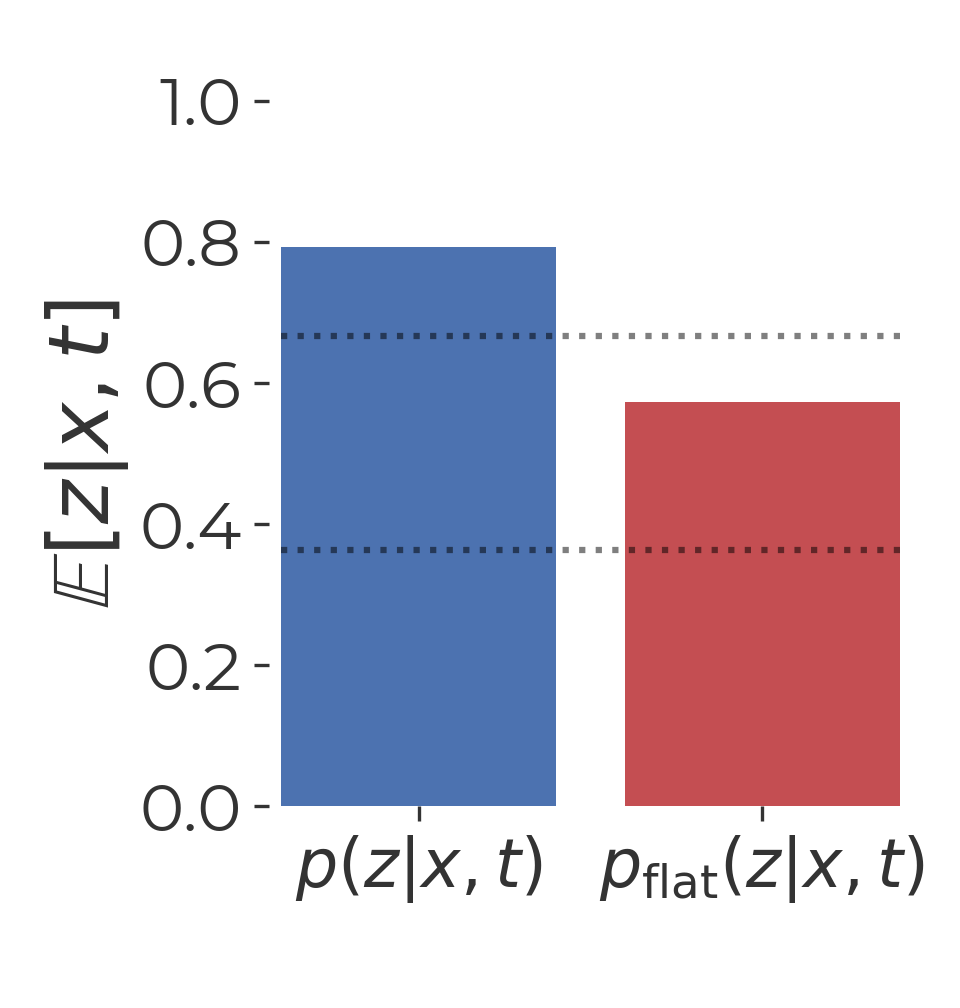}\hfill{}
\par\end{centering}
\caption{Left: Estimated log-marginal likelihoods $\log p\protect\pp{x\vert m^{(n)}}$
(\textquotedblleft +\textquotedblright{} symbols, left axis) and weights
$w(m^{(n)})$ (bars, right axis) for the top 50 models $n$ for the
\textbf{rain} problem. Right: The final estimated posterior, compared
to a \textquotedblleft flat\textquotedblright{} average. The dotted
lines at $\frac{8}{22}\approx0.364$ and $\frac{14}{21}\approx.667$
show the base rates for rain and between-day consistency in rain,
respectively.\label{fig:rain-bma}}
\end{figure}

\cleardoublepage{}

\newpage{}

\subsection{Coin (standard)\label{subsec:coin-standard-full}}

In this problem, the user has flipped a coin several times and would
like to predict the true bias. In this first version, the user strongly
implies that it is a ``standard'' coin with a bias near 0.5. The
observed data are 20 flips, out of which 14 were heads.

\vspace{-0.5cm}

\begin{minipage}[t]{0.48\textwidth}%
\begin{center}
{\scriptsize{\ttfamily{}%
\begin{minipage}[t]{0.97\columnwidth}%
\lstinputlisting[
    basicstyle=\scriptsize\ttfamily,     
   tabsize=1,
   frame=single,
   rulecolor=\color{gray!100},
   backgroundcolor=\color{gray!03},
    breaklines=true,
    xleftmargin=0pt,
    xrightmargin=0pt,
    breakindent=0pt,
]{experiments/coin-standard/prompt.txt}%
\end{minipage}}}
\par\end{center}%
\end{minipage}\hfill{}%
\begin{minipage}[t]{0.5\textwidth}%
\begin{center}
{\scriptsize{\ttfamily{}%
\begin{minipage}[t]{0.97\columnwidth}%
\lstinputlisting[
    basicstyle=\scriptsize\ttfamily,
   frame=single,
   rulecolor=\color{gray!100},
   backgroundcolor=\color{gray!03},
    breaklines=true,
    xleftmargin=0pt,
    xrightmargin=0pt,
    breakindent=2\baselineskip,
]{experiments/coin-standard/data.json}%
\end{minipage}}}
\par\end{center}%
\end{minipage}

Out of 1024 models generated, 989 (96.6\%) compiled and allowed for
inference. \ref{fig:coin-standard-models} shows four models. \ref{fig:coin-standard-inference}
shows the estimated marginal likelihoods for each of the valid models.
\ref{fig:coin-standard-posteriors} compares the estimated posteriors
for each valid models. Finally, \ref{fig:coin-standard-bma} shows
the estimated marginal likelihoods and corresponding weights for the
top-scoring 50 valid models, as well as the final estimated posterior
$p\pp{z\vert x,t}$ compared to a ``flat'' average $p_{\mathrm{flat}}\pp{z\vert x,t}$.

In this case, while there are two high-scoring models similar to {\color{col1}$m^{(968)}$}
\eqref{fig:coin-standard-top-model}, they are essentially ``outvoted''
in the posterior by lower-scoring but more numerous models similar
to {\color{col2}$m^{(938)}$} \eqref{fig:coin-standard-cluster}.
Thus, the final posterior in \ref{fig:coin-standard-bma} is similar
to the posterior from {\color{col2}$m^{(938)}$}.
\begin{flushleft}
\begin{figure}[H]
\begin{minipage}[t]{0.48\textwidth}%
\begin{center}
{\scriptsize\subfloat[{\color{col1}$m^{(968)}$}, the highest-scoring model. \label{fig:coin-standard-top-model}]{\begin{centering}
{\scriptsize{}%
\begin{minipage}[t]{0.97\columnwidth}%
\lstinputlisting[
    basicstyle=\scriptsize\ttfamily\color{col1},    
   tabsize=2,
   frame=single,
   rulecolor=\color{gray!100},
   backgroundcolor=\color{gray!03},
    breaklines=true,
    xleftmargin=0pt,
    xrightmargin=0pt,
    breakindent=2\baselineskip,
]{experiments/coin-standard/013_model.stan}%
\end{minipage}}{\scriptsize\par}
\par\end{centering}
{\scriptsize}{\scriptsize\par}}}
\par\end{center}%
\end{minipage}\hfill{}%
\begin{minipage}[t]{0.5\textwidth}%
\begin{center}
{\scriptsize\subfloat[{\color{col2}$m^{(938)}$}, a model from the most common \textquotedblleft cluster\textquotedblright .\label{fig:coin-standard-cluster}]{\begin{centering}
{\scriptsize{}%
\begin{minipage}[t]{0.97\columnwidth}%
\lstinputlisting[
    basicstyle=\scriptsize\ttfamily\color{col2},  
   tabsize=2,
   frame=single,
   rulecolor=\color{gray!100},
   backgroundcolor=\color{gray!03},
    breaklines=true,
    xleftmargin=0pt,
    xrightmargin=0pt,
    breakindent=2\baselineskip,
]{experiments/coin-standard/960_model.stan}%
\end{minipage}}{\scriptsize\par}
\par\end{centering}
{\scriptsize}{\scriptsize\par}}}
\par\end{center}%
\end{minipage}

\begin{minipage}[t]{0.48\textwidth}%
\begin{center}
{\scriptsize\subfloat[{\color{col3}$m^{(385)}$}, a low-scoring model]{\begin{centering}
{\scriptsize{}%
\begin{minipage}[t]{0.97\columnwidth}%
\lstinputlisting[
    basicstyle=\scriptsize\ttfamily\color{col3},  
   tabsize=2,
   frame=single,
   rulecolor=\color{gray!100},
   backgroundcolor=\color{gray!03},
    breaklines=true,
    xleftmargin=0pt,
    xrightmargin=0pt,
    breakindent=2\baselineskip,
]{experiments/coin-standard/222_model.stan}%
\end{minipage}}{\scriptsize\par}
\par\end{centering}
{\scriptsize}{\scriptsize\par}}}
\par\end{center}%
\end{minipage}\hfill{}%
\begin{minipage}[t]{0.5\textwidth}%
\begin{center}
{\scriptsize\subfloat[{\color{col4}$m^{(29)}$}, a very low-scoring model]{\begin{centering}
{\scriptsize{}%
\begin{minipage}[t]{0.97\columnwidth}%
\lstinputlisting[
    basicstyle=\scriptsize\ttfamily\color{col4},  
   tabsize=2,
   frame=single,
   rulecolor=\color{gray!100},
   backgroundcolor=\color{gray!03},
    breaklines=true,
    xleftmargin=0pt,
    xrightmargin=0pt,
    breakindent=2\baselineskip,
]{experiments/coin-standard/636_model.stan}%
\end{minipage}}{\scriptsize\par}
\par\end{centering}
{\scriptsize}{\scriptsize\par}}}
\par\end{center}%
\end{minipage}

\caption{Example models for the \textbf{coin (standard)} problem, sorted by
estimated marginal likelihoods. \label{fig:coin-standard-models}}
\end{figure}
\par\end{flushleft}

\begin{flushleft}
\begin{figure}[H]
\begin{centering}
\includegraphics[width=0.5\textwidth]{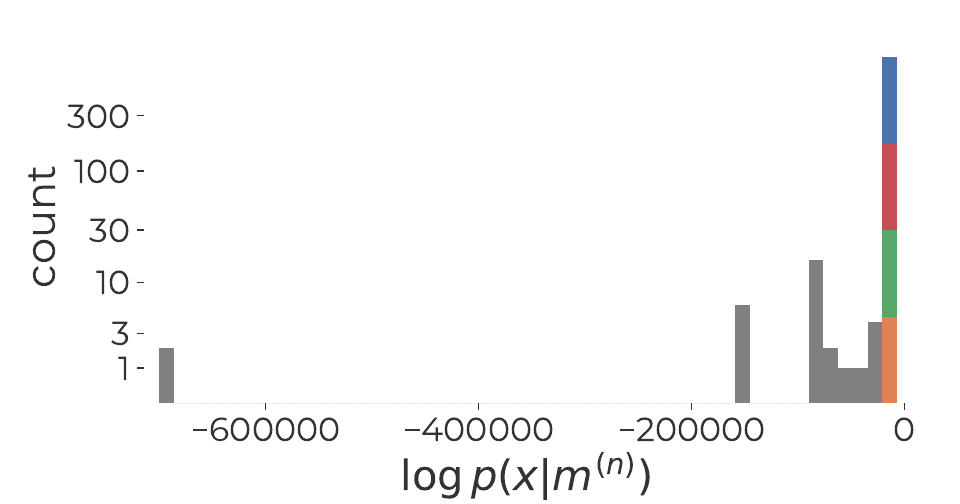}\includegraphics[width=0.5\textwidth]{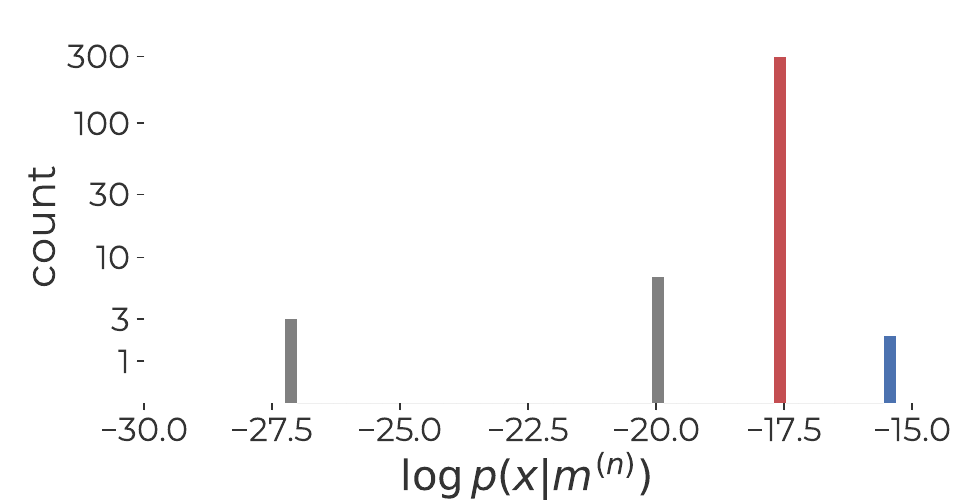}
\par\end{centering}
\caption{Estimated marginal likelihoods $p\protect\pp{x\vert m^{(n)}}$ for
each of the valid models $n$ for the \textbf{coin (standard)} problem.
Because of the many order of magnitude, two different ranges are shown.
Bars are colored for four example models from \ref{fig:coin-standard-models}.
If multiple models map to the same bin, all colors are shown stacked.
\label{fig:coin-standard-inference}}
\end{figure}
\par\end{flushleft}

\begin{flushleft}
\begin{figure}[H]
\begin{centering}
\includegraphics[width=0.5\textwidth]{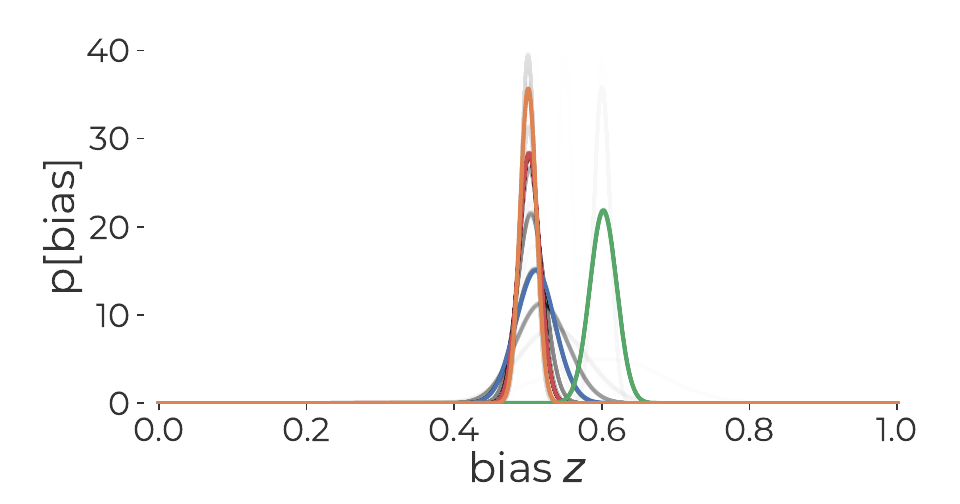}
\par\end{centering}
\caption{The estimated posteriors for each of the valid models $n$ for the
\textbf{coin (standard)} problem. Posteriors are colored for the four
example models from \ref{fig:coin-standard-models}. \label{fig:coin-standard-posteriors}}
\end{figure}
\par\end{flushleft}

\begin{flushleft}
\begin{figure}[H]
\begin{centering}
\includegraphics[width=0.5\columnwidth]{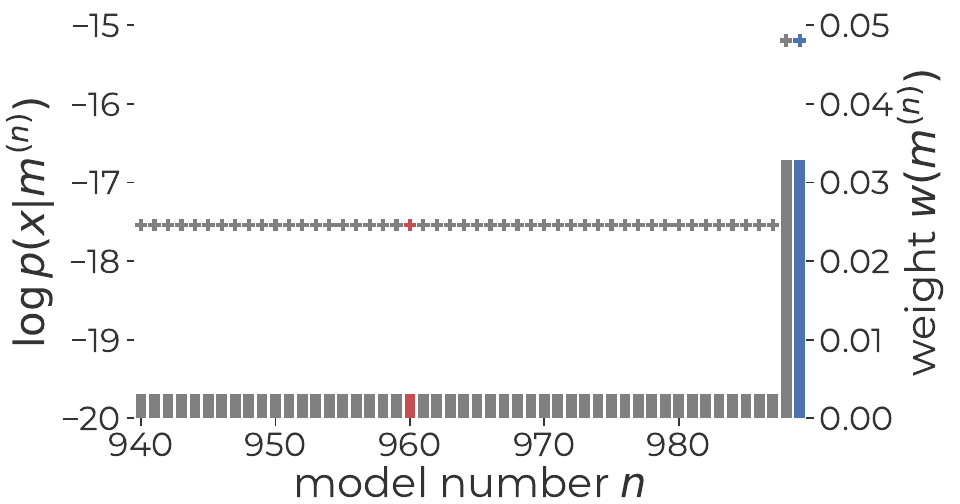}\includegraphics[width=0.5\textwidth]{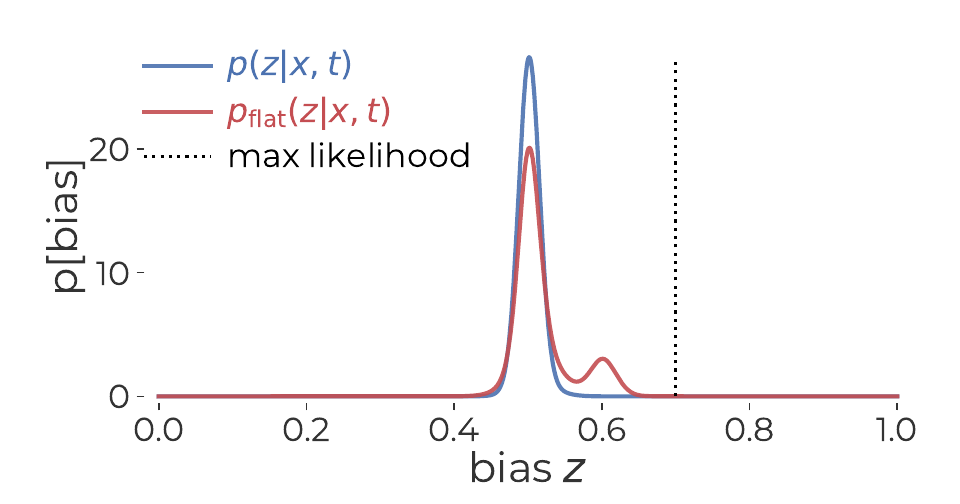}
\par\end{centering}
\caption{Left: Estimated log-marginal likelihoods $\log p\protect\pp{x\vert m^{(n)}}$
(\textquotedblleft +\textquotedblright{} symbols, left axis) and weights
$w(m^{(n)})$ (bars, right axis) for the top 50 models $n$ for the
\textbf{coin (standard)} problem. There are two models similar to
{\color{col1}$m^{(968)}$}, which each receive a weight of approximately
0.036, while there are several hundred models similar to {\color{col2}$m^{(938)}$},
which each receive a weight of approximately 0.0032. Right: The final
estimated posterior, compared to a \textquotedblleft flat\textquotedblright{}
average. The maximum-likelihood estimator of $14/20=0.7$ is also
shown for reference.\label{fig:coin-standard-bma}}
\end{figure}
\par\end{flushleft}

\cleardoublepage{}

\newpage{}

\subsection{Coin (looks)\label{subsec:coin-looks-full}}

In this second version of the coin problem, the user states that the
coin ``looks'' like a standard coin, but implies less confidence.
As before, the observed data are 20 flips, out of which 14 were heads.

\vspace{-0.5cm}

\begin{minipage}[t]{0.48\textwidth}%
\begin{center}
{\scriptsize{\ttfamily{}%
\begin{minipage}[t]{0.97\columnwidth}%
\lstinputlisting[
    basicstyle=\scriptsize\ttfamily,     
   tabsize=1,
   frame=single,
   rulecolor=\color{gray!100},
   backgroundcolor=\color{gray!03},
    breaklines=true,
    xleftmargin=0pt,
    xrightmargin=0pt,
    breakindent=0pt,
]{experiments/coin-looks/prompt.txt}%
\end{minipage}}}
\par\end{center}%
\end{minipage}\hfill{}%
\begin{minipage}[t]{0.5\textwidth}%
\begin{center}
{\scriptsize{\ttfamily{}%
\begin{minipage}[t]{0.97\columnwidth}%
\lstinputlisting[
    basicstyle=\scriptsize\ttfamily,
   frame=single,
   rulecolor=\color{gray!100},
   backgroundcolor=\color{gray!03},
    breaklines=true,
    xleftmargin=0pt,
    xrightmargin=0pt,
    breakindent=2\baselineskip,
]{experiments/coin-standard/data.json}%
\end{minipage}}}
\par\end{center}%
\end{minipage}

Out of 1024 models generated, 983 (96.0\%) compiled and allowed for
inference. \ref{fig:coin-looks-models} shows four models, while the
results of inference are shown in \ref{fig:coin-looks-inference},
\ref{fig:coin-looks-posteriors}, and \ref{fig:coin-looks-bma}.

In this case, there are seven models similar to the top-scoring model
{\color{col1}$m^{(960)}$} \eqref{fig:coin-looks-top-model} that
each have weight of around 0.1, and seven models similar to {\color{col2}$m^{(949)}$}
\eqref{fig:coin-looks-lower-model} with a weight of around 0.03.
All of these have priors with a reasonable amount of uncertainty.
There are many models similar to {\color{col3}$m^{(661)}$} that
have prior with low variance, but these have a low marginal-likelihood
and so get very low weight. The final estimated posterior in \ref{fig:coin-looks-bma}
thus resembles a mixture of the posteriors of {\color{col1}$m^{(960)}$}
and {\color{col2}$m^{(949)}$}.
\begin{flushleft}
\begin{figure}[H]
\begin{minipage}[t]{0.48\textwidth}%
\begin{center}
{\scriptsize\subfloat[{\color{col1}$m^{(960)}$}, the highest-scoring model.\label{fig:coin-looks-top-model}]{\begin{centering}
{\scriptsize{}%
\begin{minipage}[t]{0.97\columnwidth}%
\lstinputlisting[
    basicstyle=\scriptsize\ttfamily\color{col1},    
   tabsize=2,
   frame=single,
   rulecolor=\color{gray!100},
   backgroundcolor=\color{gray!03},
    breaklines=true,
    xleftmargin=0pt,
    xrightmargin=0pt,
    breakindent=2\baselineskip,
]{experiments/coin-looks/991_model.stan}%
\end{minipage}}{\scriptsize\par}
\par\end{centering}
{\scriptsize}{\scriptsize\par}}}
\par\end{center}%
\end{minipage}\hfill{}%
\begin{minipage}[t]{0.5\textwidth}%
\begin{center}
{\scriptsize\subfloat[{\color{col2}$m^{(949)}$}, a slightly lower-scoring model.\label{fig:coin-looks-lower-model}]{\begin{centering}
{\scriptsize{}%
\begin{minipage}[t]{0.97\columnwidth}%
\lstinputlisting[
    basicstyle=\scriptsize\ttfamily\color{col2},  
   tabsize=2,
   frame=single,
   rulecolor=\color{gray!100},
   backgroundcolor=\color{gray!03},
    breaklines=true,
    xleftmargin=0pt,
    xrightmargin=0pt,
    breakindent=2\baselineskip,
]{experiments/coin-looks/715_model.stan}%
\end{minipage}}{\scriptsize\par}
\par\end{centering}
{\scriptsize}{\scriptsize\par}}}
\par\end{center}%
\end{minipage}

\begin{minipage}[t]{0.48\textwidth}%
\begin{center}
{\scriptsize\subfloat[{\color{col3}$m^{(661)}$}, from the most common \textquotedblleft cluster\textquotedblright{}
of models.\label{fig:coin-looks-cluster-model}]{\begin{centering}
{\scriptsize{}%
\begin{minipage}[t]{0.97\columnwidth}%
\lstinputlisting[
    basicstyle=\scriptsize\ttfamily\color{col3},  
   tabsize=2,
   frame=single,
   rulecolor=\color{gray!100},
   backgroundcolor=\color{gray!03},
    breaklines=true,
    xleftmargin=0pt,
    xrightmargin=0pt,
    breakindent=2\baselineskip,
]{experiments/coin-looks/800_model.stan}%
\end{minipage}}{\scriptsize\par}
\par\end{centering}
{\scriptsize}{\scriptsize\par}}}
\par\end{center}%
\end{minipage}\hfill{}%
\begin{minipage}[t]{0.5\textwidth}%
\begin{center}
{\scriptsize\subfloat[{\color{col4}$m^{(1)}$}, the lowest-scoring model]{\begin{centering}
{\scriptsize{}%
\begin{minipage}[t]{0.97\columnwidth}%
\lstinputlisting[
    basicstyle=\scriptsize\ttfamily\color{col4},  
   tabsize=2,
   frame=single,
   rulecolor=\color{gray!100},
   backgroundcolor=\color{gray!03},
    breaklines=true,
    xleftmargin=0pt,
    xrightmargin=0pt,
    breakindent=2\baselineskip,
]{experiments/coin-looks/268_model.stan}%
\end{minipage}}{\scriptsize\par}
\par\end{centering}
{\scriptsize}{\scriptsize\par}}}
\par\end{center}%
\end{minipage}

\caption{Example models for the \textbf{coin (looks)} problem, sorted by estimated
marginal likelihoods. \label{fig:coin-looks-models}}
\end{figure}
\par\end{flushleft}

\begin{flushleft}
\begin{figure}
\begin{centering}
\includegraphics[width=0.5\textwidth]{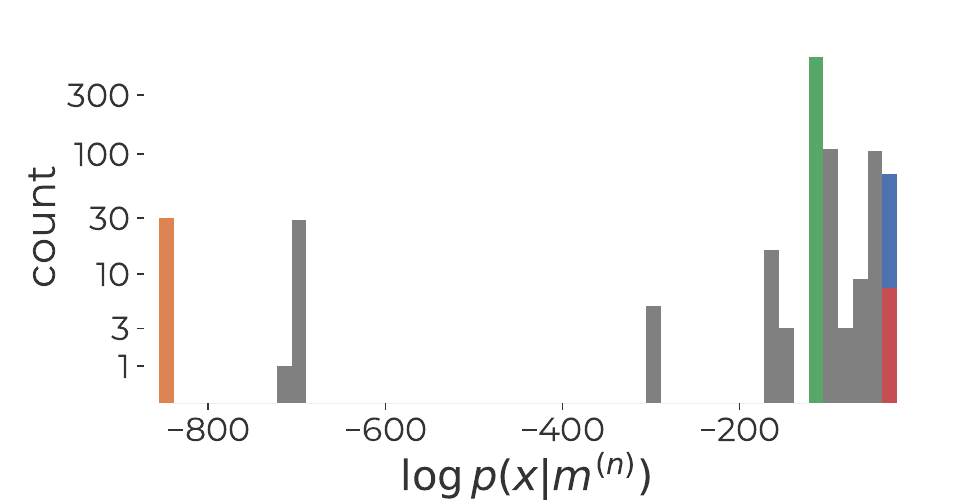}\includegraphics[width=0.5\textwidth]{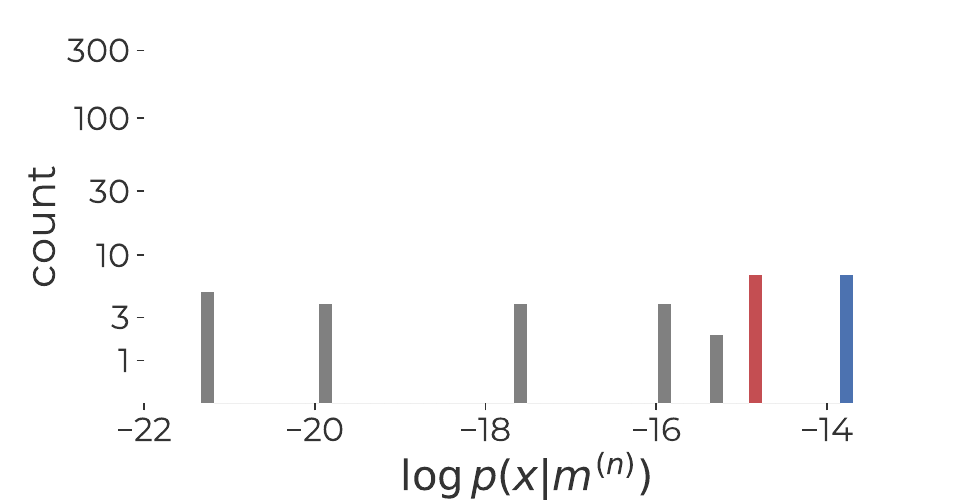}
\par\end{centering}
\caption{Estimated marginal likelihoods $p\protect\pp{x\vert m^{(n)}}$ for
each of the valid models $n$ for the \textbf{coin (looks)} problem.
Because of the many order of magnitude, two different ranges are shown.
Bars are colored for four example models from \ref{fig:coin-looks-models}.
If multiple models map to the same bin, all colors are shown stacked.
\label{fig:coin-looks-inference}}
\end{figure}
\par\end{flushleft}

\begin{flushleft}
\begin{figure}
\begin{centering}
\includegraphics[width=0.5\textwidth]{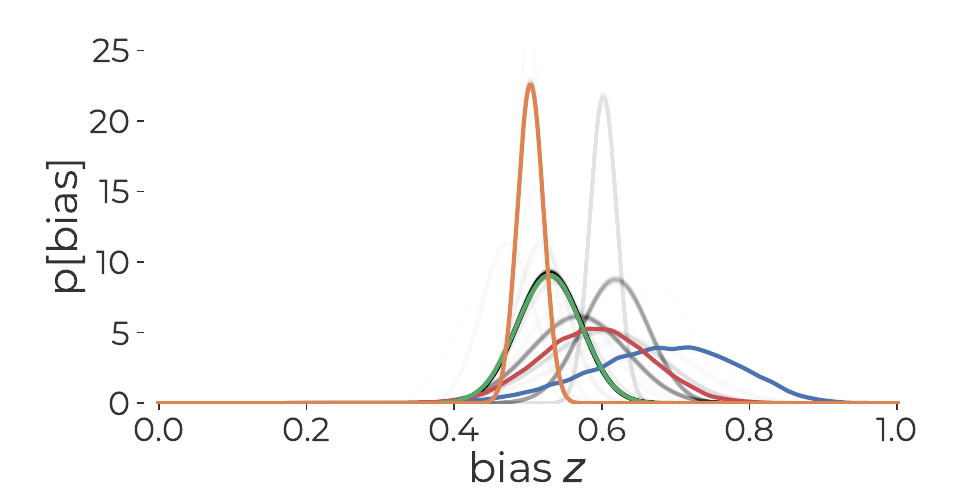}
\par\end{centering}
\caption{The estimated posteriors for each of the valid models $n$ for the
\textbf{coin (looks)} problem. Posteriors are colored for the four
example models from \ref{fig:coin-looks-models}. \label{fig:coin-looks-posteriors}}
\end{figure}
\par\end{flushleft}

\begin{flushleft}
\begin{figure}
\begin{centering}
\includegraphics[width=0.5\columnwidth]{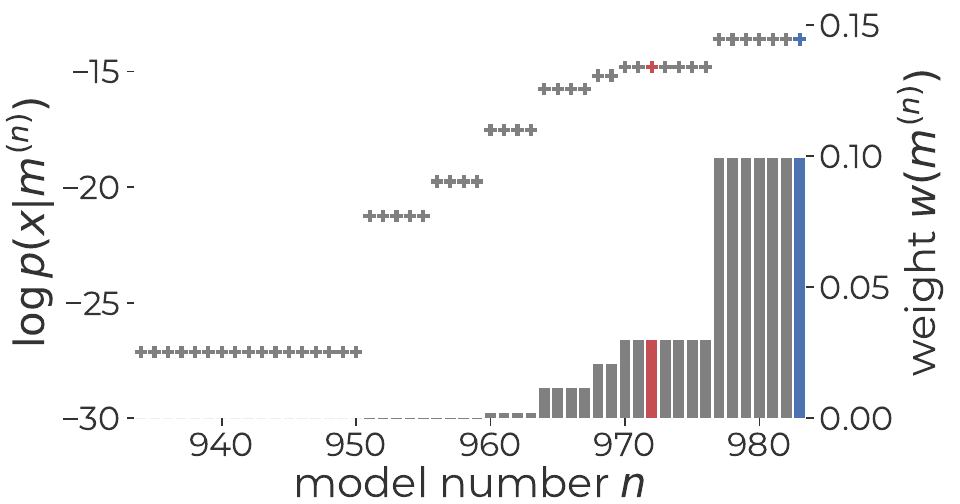}\includegraphics[width=0.5\textwidth]{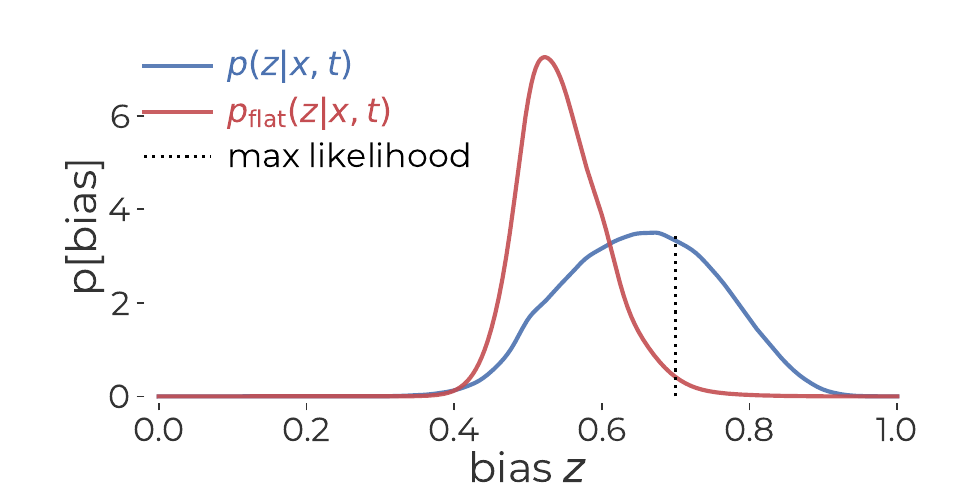}
\par\end{centering}
\caption{Left: Estimated log-marginal likelihoods $\log p\protect\pp{x\vert m^{(n)}}$
(\textquotedblleft +\textquotedblright{} symbols, left axis) and weights
$w(m^{(n)})$ (bars, right axis) for the top 50 models $n$ for the
\textbf{coin (looks)} problem. Right: The final estimated posterior,
compared to a \textquotedblleft flat\textquotedblright{} average. The
maximum-likelihood estimator of $14/20=0.7$ is also shown for reference.\label{fig:coin-looks-bma}}
\end{figure}
\par\end{flushleft}

\cleardoublepage{}

\subsection{Coin (bent)\label{subsec:coin-bent-full}}

In this third version of the coin problem, the user states that the
coin looks ``bent'', implying great uncertainty about the true bias.
As before, the observed data are 20 flips, out of which 14 were heads.

\begin{minipage}[t]{0.48\textwidth}%
\begin{center}
{\scriptsize{\ttfamily{}%
\begin{minipage}[t]{0.97\columnwidth}%
\lstinputlisting[
    basicstyle=\scriptsize\ttfamily,     
   tabsize=1,
   frame=single,
   rulecolor=\color{gray!100},
   backgroundcolor=\color{gray!03},
    breaklines=true,
    xleftmargin=0pt,
    xrightmargin=0pt,
    breakindent=0pt,
]{experiments/coin-bent/prompt.txt}%
\end{minipage}}}
\par\end{center}%
\end{minipage}\hfill{}%
\begin{minipage}[t]{0.5\textwidth}%
\begin{center}
{\scriptsize{\ttfamily{}%
\begin{minipage}[t]{0.97\columnwidth}%
\lstinputlisting[
    basicstyle=\scriptsize\ttfamily,
   frame=single,
   rulecolor=\color{gray!100},
   backgroundcolor=\color{gray!03},
    breaklines=true,
    xleftmargin=0pt,
    xrightmargin=0pt,
    breakindent=2\baselineskip,
]{experiments/coin-standard/data.json}%
\end{minipage}}}
\par\end{center}%
\end{minipage}

\vspace{-0.1cm}

Out of 1024 models generated, 848 (82.8\%) compiled and allowed for
inference. \ref{fig:coin-bent-models} shows four models, while the
results of inference are shown in \ref{fig:coin-bent-inference},
\ref{fig:coin-bent-posteriors}, and \ref{fig:coin-bent-bma}.

In this case, many models contribute substantially to the posterior.
While {\color{col2}$m^{(811)}$} has a slightly lower estimated
marginal likelihood than the highest scoring model ({\color{col1}$m^{(830)}$}),
models similar to it were generated many times, and so it wields the
most influence in the posterior.
\begin{flushleft}
\begin{figure}[H]
\begin{minipage}[t]{0.48\textwidth}%
\begin{center}
{\scriptsize\subfloat[{\color{col1}$m^{(830)}$}, the highest-scoring model]{\begin{centering}
{\scriptsize{}%
\begin{minipage}[t]{0.97\columnwidth}%
\lstinputlisting[
    basicstyle=\scriptsize\ttfamily\color{col1},    
   tabsize=2,
   frame=single,
   rulecolor=\color{gray!100},
   backgroundcolor=\color{gray!03},
    breaklines=true,
    xleftmargin=0pt,
    xrightmargin=0pt,
    breakindent=2\baselineskip,
]{experiments/coin-bent/770_model.stan}%
\end{minipage}}{\scriptsize\par}
\par\end{centering}
{\scriptsize}{\scriptsize\par}}}
\par\end{center}%
\end{minipage}\hfill{}%
\begin{minipage}[t]{0.5\textwidth}%
\begin{center}
{\scriptsize\subfloat[{\color{col2}$m^{(811)}$}, a typical model]{\begin{centering}
{\scriptsize{}%
\begin{minipage}[t]{0.97\columnwidth}%
\lstinputlisting[
    basicstyle=\scriptsize\ttfamily\color{col2},  
   tabsize=2,
   frame=single,
   rulecolor=\color{gray!100},
   backgroundcolor=\color{gray!03},
    breaklines=true,
    xleftmargin=0pt,
    xrightmargin=0pt,
    breakindent=2\baselineskip,
]{experiments/coin-bent/344_model.stan}%
\end{minipage}}{\scriptsize\par}
\par\end{centering}
{\scriptsize}{\scriptsize\par}}}
\par\end{center}%
\end{minipage}

\begin{minipage}[t]{0.48\textwidth}%
\begin{center}
{\scriptsize\subfloat[{\color{col3}$m^{(531)}$}, from the most common \textquotedblleft cluster\textquotedblright{}
of models]{\begin{centering}
{\scriptsize{}%
\begin{minipage}[t]{0.97\columnwidth}%
\lstinputlisting[
    basicstyle=\scriptsize\ttfamily\color{col3},  
   tabsize=2,
   frame=single,
   rulecolor=\color{gray!100},
   backgroundcolor=\color{gray!03},
    breaklines=true,
    xleftmargin=0pt,
    xrightmargin=0pt,
    breakindent=2\baselineskip,
]{experiments/coin-bent/868_model.stan}%
\end{minipage}}{\scriptsize\par}
\par\end{centering}
{\scriptsize}{\scriptsize\par}}}
\par\end{center}%
\end{minipage}\hfill{}%
\begin{minipage}[t]{0.5\textwidth}%
\begin{center}
{\scriptsize\subfloat[{\color{col4}$m^{(1)}$}, the lowest-scoring model]{\begin{centering}
{\scriptsize{}%
\begin{minipage}[t]{0.97\columnwidth}%
\lstinputlisting[
    basicstyle=\scriptsize\ttfamily\color{col4},  
   tabsize=2,
   frame=single,
   rulecolor=\color{gray!100},
   backgroundcolor=\color{gray!03},
    breaklines=true,
    xleftmargin=0pt,
    xrightmargin=0pt,
    breakindent=2\baselineskip,
]{experiments/coin-bent/612_model.stan}%
\end{minipage}}{\scriptsize\par}
\par\end{centering}
{\scriptsize}{\scriptsize\par}}}
\par\end{center}%
\end{minipage}

\caption{Example models for the \textbf{coin (bent)} problem, sorted by estimated
marginal likelihoods. \label{fig:coin-bent-models}}
\end{figure}
\par\end{flushleft}

\begin{flushleft}
\begin{figure}[H]
\begin{centering}
\includegraphics[width=0.5\textwidth]{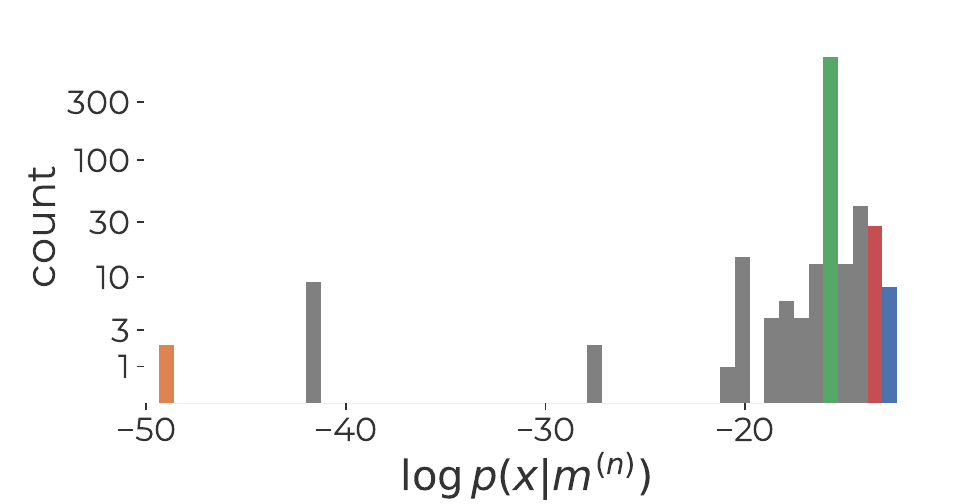}\includegraphics[width=0.5\textwidth]{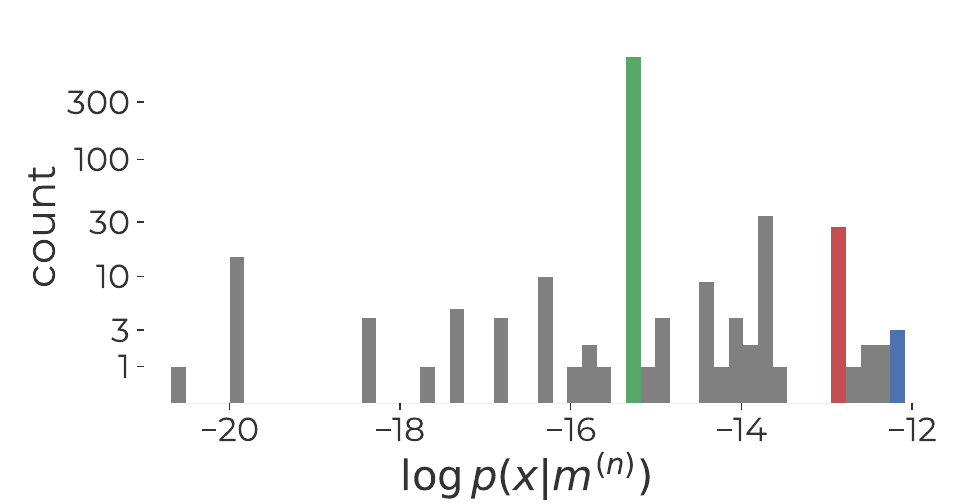}
\par\end{centering}
\caption{Estimated marginal likelihoods $p\protect\pp{x\vert m^{(n)}}$ for
each of the valid models $n$ for the \textbf{coin (bent)} problem.
Because of the many order of magnitude, two different ranges are shown.
Bars are colored for four example models from \ref{fig:coin-bent-models}.
If multiple models map to the same bin, all colors are shown stacked.
\label{fig:coin-bent-inference}}
\end{figure}
\par\end{flushleft}

\begin{flushleft}
\begin{figure}[H]
\begin{centering}
\includegraphics[width=0.5\textwidth]{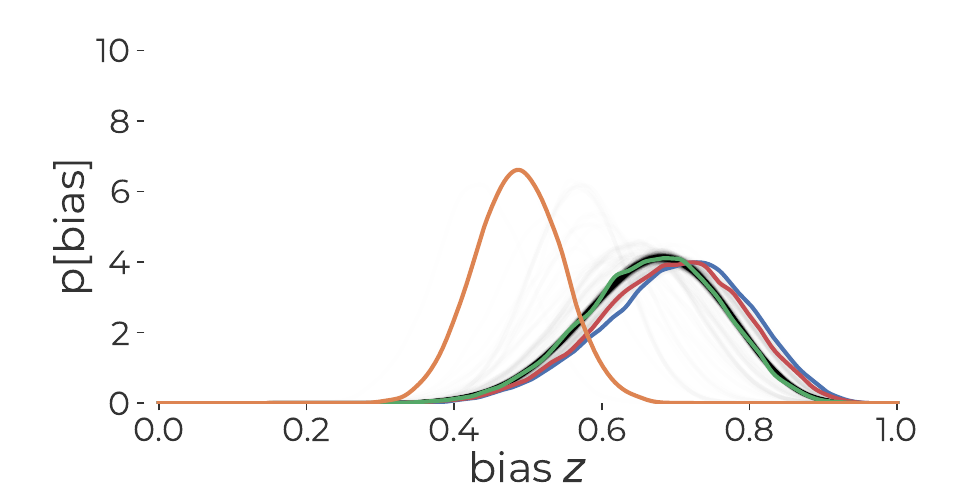}
\par\end{centering}
\caption{The estimated posteriors for each of the valid models $n$ for the
\textbf{coin (looks)} problem. Posteriors are colored for the four
example models from \ref{fig:coin-bent-models}. \label{fig:coin-bent-posteriors}}
\end{figure}
\par\end{flushleft}

\begin{flushleft}
\begin{figure}[H]
\begin{centering}
\includegraphics[width=0.5\columnwidth]{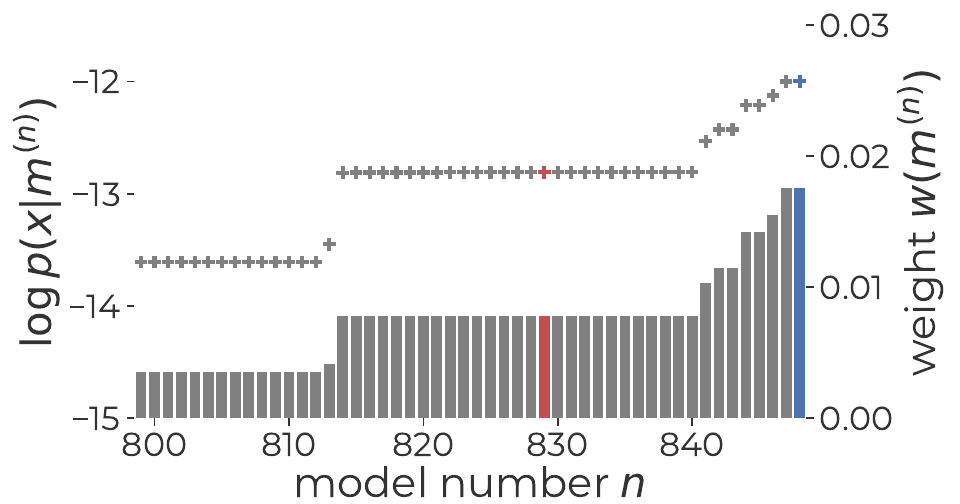}\includegraphics[width=0.5\textwidth]{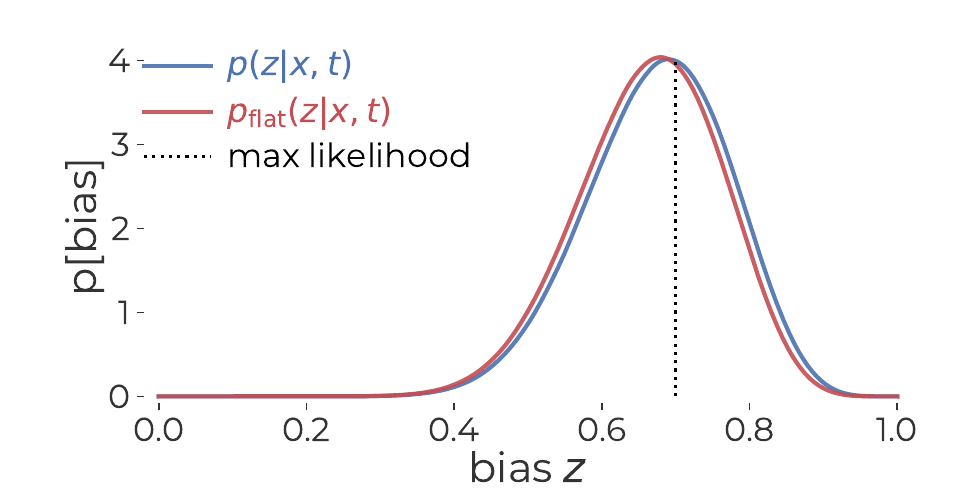}
\par\end{centering}
\caption{Left: Estimated log-marginal likelihoods $\log p\protect\pp{x\vert m^{(n)}}$
(\textquotedblleft +\textquotedblright{} symbols, left axis) and weights
$w(m^{(n)})$ (bars, right axis) for the top 50 models $n$ for the
\textbf{coin (bent)} problem. Right: The final estimated posteriors,
compared to a \textquotedblleft flat\textquotedblright{} average. The
maximum-likelihood estimator of $14/20=0.7$ is also shown for reference.\label{fig:coin-bent-bma}}
\end{figure}
\par\end{flushleft}

\cleardoublepage{}

\newpage{}

\subsection{Polling\label{subsec:polling-full}}

In this problem, the user describes a politician with fluctuating
levels of popularity and different pollsters who take polls at different
times. The goal is to predict the true popularity on each day.

\lstinputlisting[
    basicstyle=\scriptsize\ttfamily,     
   frame=single,
   rulecolor=\color{gray!100},
   backgroundcolor=\color{gray!03},
    breaklines=true,
    xleftmargin=5pt,
    xrightmargin=5pt,
    breakindent=0\baselineskip,
]{experiments/polling/prompt.txt}

\vspace{-0.2cm}A sample of the observed data is shown below, while
the full data is plotted in \ref{fig:polling-data}.\vspace{-0.2cm}

{\scriptsize{\ttfamily{}%
\noindent\begin{minipage}[t]{1\columnwidth}%
\lstinputlisting[
    basicstyle=\scriptsize\ttfamily,
   tabsize=2,
   frame=single,
   rulecolor=\color{gray!100},
   backgroundcolor=\color{gray!03},
    breaklines=true,
    xleftmargin=0pt,
    xrightmargin=0pt,
    breakindent=2\baselineskip,
]{experiments/polling/data_small.json}%
\end{minipage}}}{\scriptsize\par}

\vspace{-0.2cm}
\begin{figure}[H]
\begin{centering}
\includegraphics[width=0.6\textwidth]{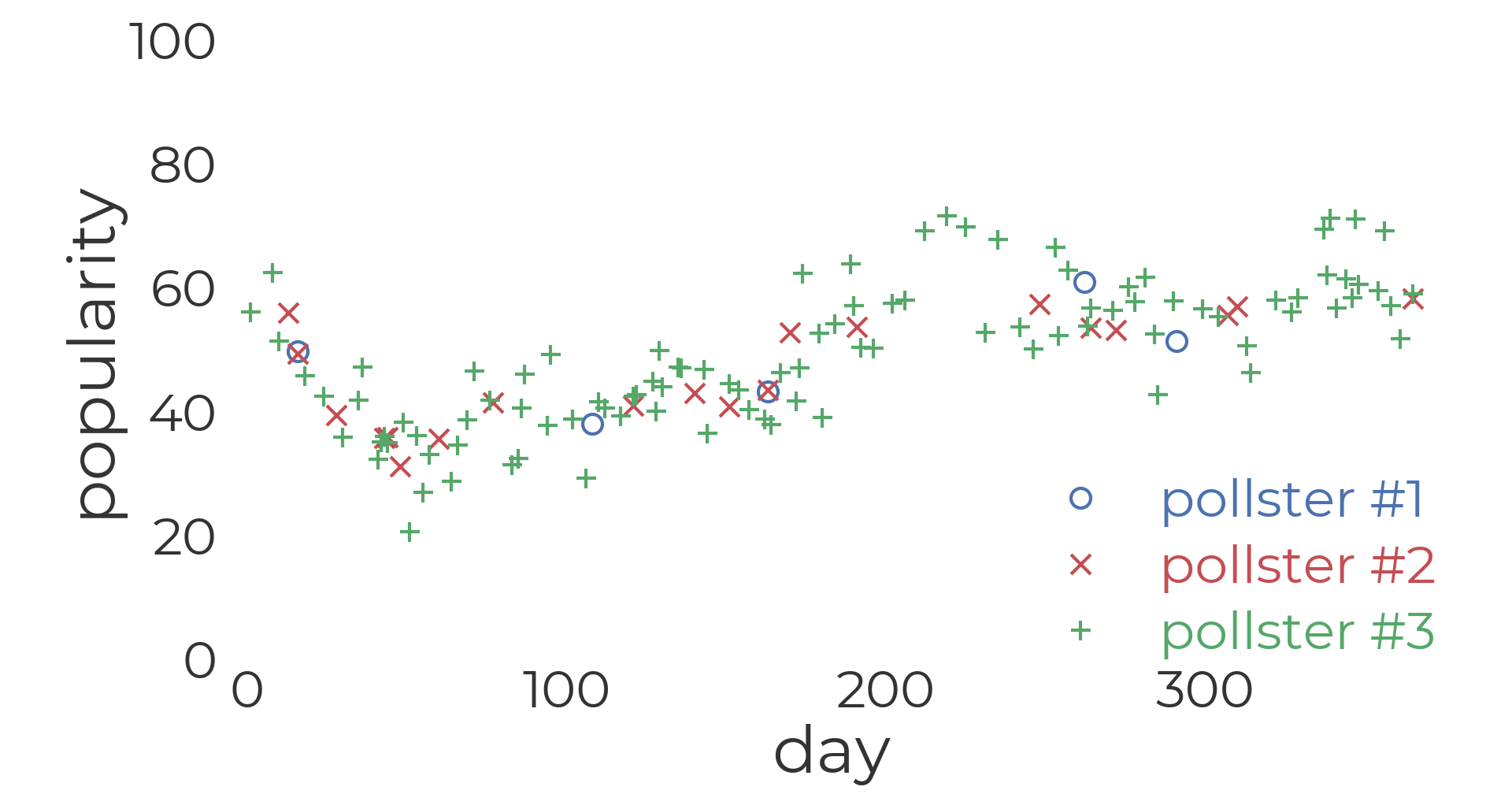}
\par\end{centering}
\caption{Observed data for the \textbf{polling} problem.\label{fig:polling-data}}
\end{figure}
\vspace{-0.2cm}

Out of 1024 models generated, 568 (55.5\%) compiled and allowed for
inference. \ref{fig:polling-models} shows three models. \ref{fig:polling-inference}
shows the estimated marginal likelihoods for each of the valid models.
\ref{fig:polling-posteriors} compares the estimated posteriors for
each valid models. Finally, \ref{fig:polling-bma} shows the estimated
marginal likelihoods and corresponding weights for the top-scoring
50 valid models, as well as the final estimated posterior $p\pp{z\vert x,t}$
compared to a ``flat'' average $p_{\mathrm{flat}}\pp{z\vert x,t}$.

In this case, the highest-scoring model {\color{col1}$m^{(568)}$}
\eqref{fig:polling-highest-model} is very ``simple'' yet has a
high enough marginal likelihood to take almost all weight in the final
posterior in \ref{fig:polling-bma}. The LLM did attempt to create
many more sophisticated models, e.g. hierarchical models. But these
often failed to compile or, as with {\color{col3}$m^{(89)}$}, simply
had a lower marginal-likelihood. 
\begin{flushleft}
\begin{figure}[H]
\begin{minipage}[t]{0.48\textwidth}%
\begin{center}
{\scriptsize\subfloat[{\color{col1}$m^{(568)}$}, the highest-scoring model. \label{fig:polling-highest-model}]{\begin{centering}
{\scriptsize{}%
\begin{minipage}[t]{0.97\columnwidth}%
\lstinputlisting[
    basicstyle=\scriptsize\ttfamily\color{col1},    
   tabsize=2,
   frame=single,
   rulecolor=\color{gray!100},
   backgroundcolor=\color{gray!03},
    breaklines=true,
    xleftmargin=0pt,
    xrightmargin=0pt,
    breakindent=2\baselineskip,
]{experiments/polling/052_model.stan}%
\end{minipage}}{\scriptsize\par}
\par\end{centering}
{\scriptsize}{\scriptsize\par}}}
\par\end{center}%
\end{minipage}\hfill{}%
\begin{minipage}[t]{0.5\textwidth}%
\begin{center}
{\scriptsize\subfloat[{\color{col2}$m^{(553)}$}, a lower-scoring model.]{\begin{centering}
{\scriptsize{}%
\begin{minipage}[t]{0.97\columnwidth}%
\lstinputlisting[
    basicstyle=\scriptsize\ttfamily\color{col2},  
   tabsize=2,
   frame=single,
   rulecolor=\color{gray!100},
   backgroundcolor=\color{gray!03},
    breaklines=true,
    xleftmargin=0pt,
    xrightmargin=0pt,
    breakindent=2\baselineskip,
]{experiments/polling/short_332_model.stan}%
\end{minipage}}{\scriptsize\par}
\par\end{centering}
{\scriptsize}{\scriptsize\par}}}
\par\end{center}%
\end{minipage}

\noindent\begin{minipage}[t]{1\textwidth}%
\begin{center}
{\scriptsize\subfloat[{\color{col3}$m^{(89)}$}, a lower-scoring (hierarchical) model.\label{fig:polling-heir-model}]{\begin{centering}
{\scriptsize{}%
\begin{minipage}[t]{0.97\columnwidth}%
\lstinputlisting[
    basicstyle=\scriptsize\ttfamily\color{col3},  
   tabsize=2,
   frame=single,
   rulecolor=\color{gray!100},
   backgroundcolor=\color{gray!03},
    breaklines=true,
    xleftmargin=0pt,
    xrightmargin=0pt,
    breakindent=2\baselineskip,
]{experiments/polling/short_204_model.stan}%
\end{minipage}}{\scriptsize\par}
\par\end{centering}
{\scriptsize}{\scriptsize\par}}}
\par\end{center}%
\end{minipage}

\caption{Example models for the \textbf{polling} problem, sorted by estimated
marginal likelihoods. The \texttt{data} block was always identical,
so is truncated for space for some models. \label{fig:polling-models}}
\end{figure}
\par\end{flushleft}

\begin{flushleft}
\begin{figure}[H]
\begin{centering}
\includegraphics[width=0.5\textwidth]{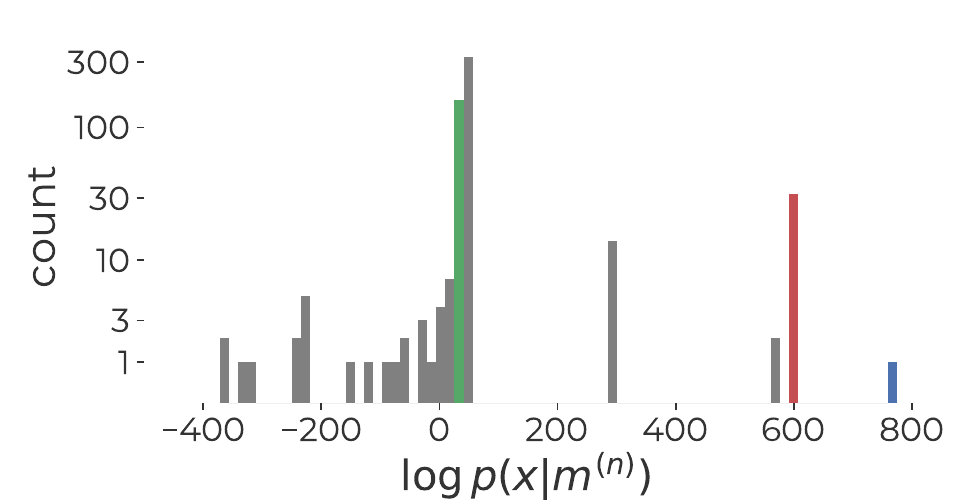}
\par\end{centering}
\caption{Estimated marginal likelihoods $p\protect\pp{x\vert m^{(n)}}$ for
each of the valid models $n$ for the \textbf{polling} problem. Bars
are colored for four example models from \ref{fig:polling-models}.
\label{fig:polling-inference}}
\end{figure}
\par\end{flushleft}

\begin{flushleft}
\begin{figure}[H]
\begin{centering}
\includegraphics[width=0.5\textwidth]{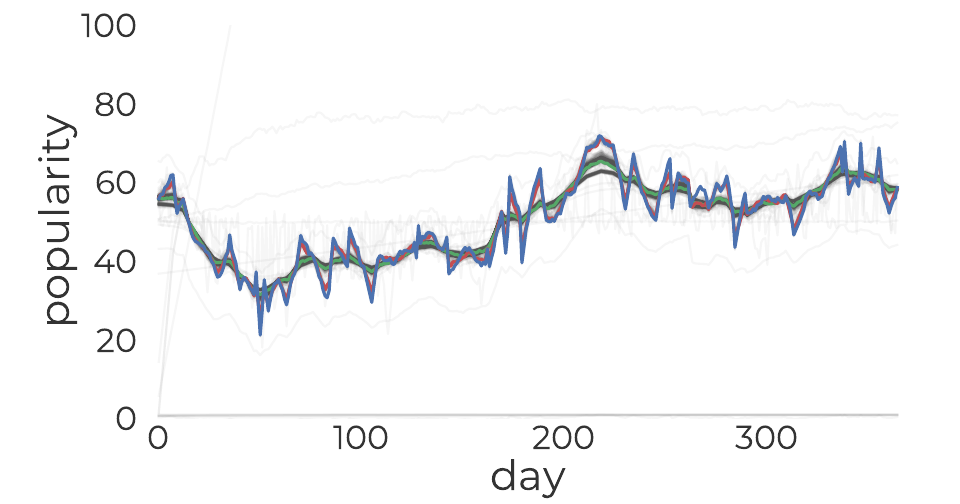}
\par\end{centering}
\caption{The estimated posteriors for each of the valid models $n$ for the
\textbf{polling} problem. Posteriors are colored for the four example
models from \ref{fig:polling-models}. \label{fig:polling-posteriors}}
\end{figure}
\par\end{flushleft}

\begin{flushleft}
\begin{figure}[H]
\begin{centering}
\includegraphics[width=0.5\columnwidth]{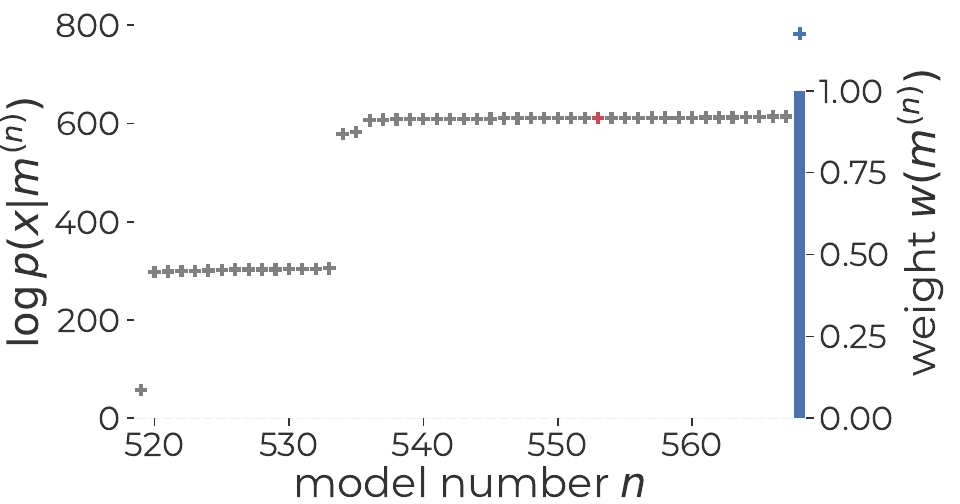}\includegraphics[width=0.5\textwidth]{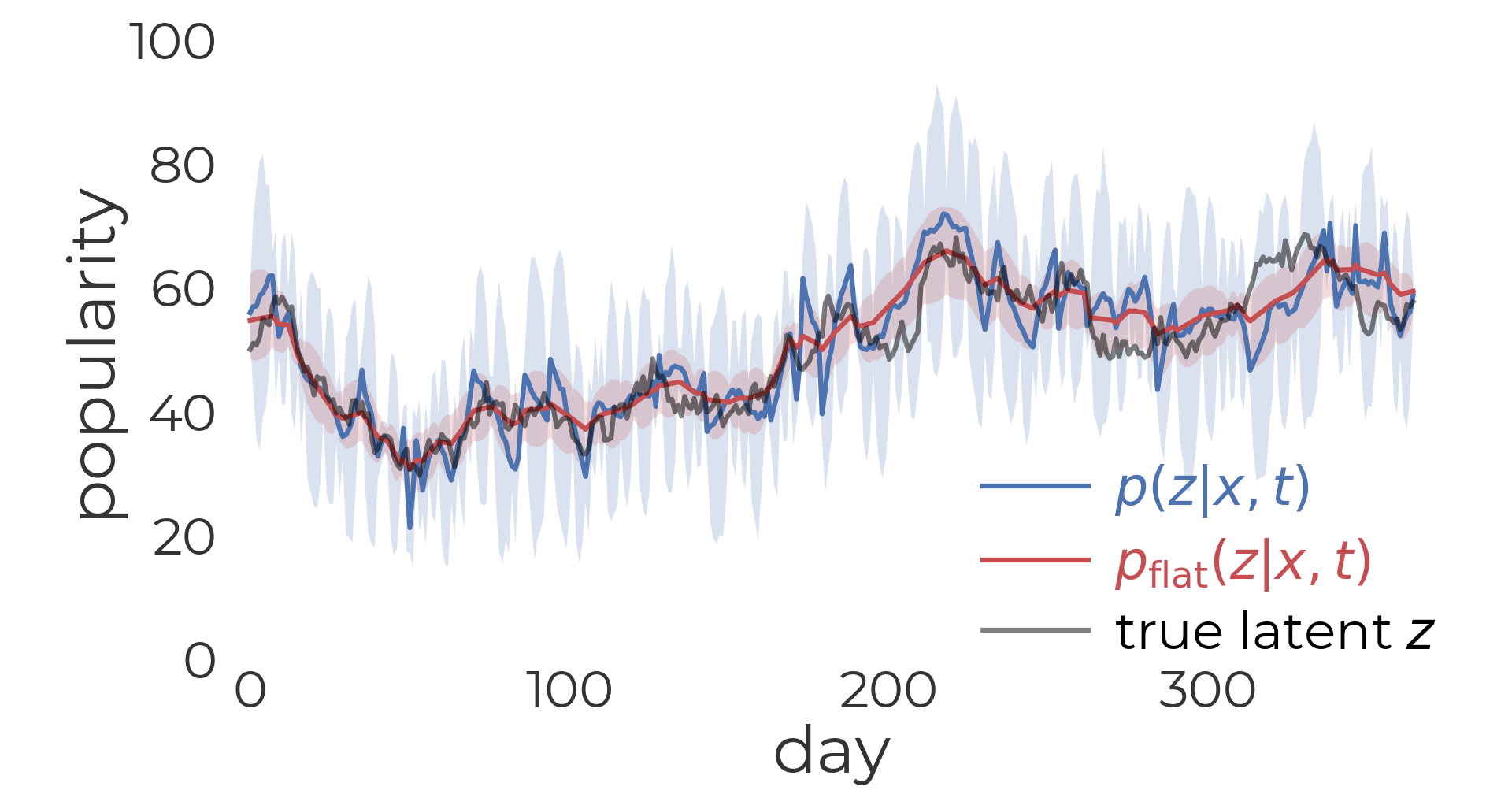}
\par\end{centering}
\caption{Left: Estimated log-marginal likelihoods $\log p\protect\pp{x\vert m^{(n)}}$
(\textquotedblleft +\textquotedblright{} symbols, left axis) and weights
$w(m^{(n)})$ (bars, right axis) for the top 50 models $n$ for the
\textbf{polling} problem. Right: The final estimated posteriors, compared
to a \textquotedblleft flat\textquotedblright{} average.\label{fig:polling-bma}}
\end{figure}
\par\end{flushleft}

\cleardoublepage{}

\newpage{}

\subsection{City temperature\label{subsec:City-temperature}}

In this problem the user describes observing temperatures on neighboring
pairs of days for a set of different cities. Then, after observing
temperatures on a set of ``test'' days for each city, they wish
to predict the temperature on the following days.

\lstinputlisting[
    basicstyle=\scriptsize\ttfamily,     
   frame=single,
   rulecolor=\color{gray!100},
   backgroundcolor=\color{gray!03},
    breaklines=true,
    xleftmargin=5pt,
    xrightmargin=5pt,
    breakindent=0\baselineskip,
]{experiments/city_temperature/prompt.txt}

In the given data, there are 5 cities, with 10 training day pairs,
and 10 test days.

\lstinputlisting[
    basicstyle=\scriptsize\ttfamily,
   tabsize=2,
   frame=single,
   rulecolor=\color{gray!100},
   backgroundcolor=\color{gray!03},
    breaklines=true,
    xleftmargin=5pt,
    xrightmargin=5pt,
    breakindent=2\baselineskip,
]{experiments/city_temperature/data_nice.json}

Out of 1024 models generated, 736 (71.9\%) compiled and allowed for
inference. Four example models are shown in \eqref{fig:city-models-A}
and \eqref{fig:city-models-B}. A histogram of estimated marginal
likelihoods is in \eqref{fig:city-inference}, while \ref{fig:city-weights}
shows the estimated marginal likelihoods and corresponding weights
for the top-scoring 50 valid models. Finally, \ref{fig:city-bma}compares
the final estimated posterior $p\pp{z\vert x,t}$ to a ``flat''
average $p_{\mathrm{flat}}\pp{z\vert x,t}$.

In this case, the top scoring model {\color{col1}$m^{(736)}$} \ref{fig:city-top-model}
models per-city mean temperatures and shared regression coefficients
to predict test temperatures. Many generated models used more shared
parameters. For example, {\color{col3}$m^{(688)}$} \eqref{fig:city-model-lower}
has global parameters for the distribution of temperatures and {\color{col4}$m^{(487)}$}
\eqref{fig:city-model-even-lower} also has a shared autocorrelation
parameter. However, these more complex models had lower marginal likelihoods.
Thus the final posterior is essentially the same as that of {\color{col1}$m^{(736)}$}.
\begin{flushleft}
\begin{figure}
\begin{centering}
{\scriptsize\subfloat[{\color{col1}$m^{(736)}$}, the highest-scoring model.\label{fig:city-top-model}]{\begin{centering}
{\scriptsize{}%
\begin{minipage}[t]{0.97\columnwidth}%
\lstinputlisting[
    basicstyle=\scriptsize\ttfamily\color{col1},  
   tabsize=2,
   frame=single,
   rulecolor=\color{gray!100},
   backgroundcolor=\color{gray!03},
    breaklines=true,
    xleftmargin=0pt,
    xrightmargin=0pt,
    breakindent=2\baselineskip,
]{experiments/city_temperature/222_model.stan}
\vspace{-0.25cm}%
\end{minipage}}{\scriptsize\par}
\par\end{centering}
{\scriptsize}{\scriptsize\par}}}{\scriptsize\vspace{-0.25cm}}{\scriptsize\par}
\par\end{centering}
\begin{centering}
{\scriptsize\subfloat[{\color{col2}$m^{(735)}$}, the second-highest scoring model]{\begin{centering}
{\scriptsize{}%
\begin{minipage}[t]{0.97\columnwidth}%
\lstinputlisting[
    basicstyle=\scriptsize\ttfamily\color{col2},  
   tabsize=2,
   frame=single,
   rulecolor=\color{gray!100},
   backgroundcolor=\color{gray!03},
    breaklines=true,
    xleftmargin=0pt,
    xrightmargin=0pt,
    breakindent=2\baselineskip,
]{experiments/city_temperature/885_model.stan}
\vspace{-0.25cm}%
\end{minipage}}{\scriptsize\par}
\par\end{centering}
{\scriptsize}{\scriptsize\par}}}{\scriptsize\par}
\par\end{centering}
\caption{Example models for the \textbf{city temperature} problem. \label{fig:city-models-A}}
\end{figure}
\par\end{flushleft}

\begin{flushleft}
\begin{figure}
\begin{centering}
{\scriptsize\subfloat[{\color{col3}$m^{(688)}$}, a lower-scoring model.\label{fig:city-model-lower}]{\begin{centering}
{\scriptsize{}%
\begin{minipage}[t]{0.97\columnwidth}%
\lstinputlisting[
    basicstyle=\scriptsize\ttfamily\color{col3},  
   tabsize=2,
   frame=single,
   rulecolor=\color{gray!100},
   backgroundcolor=\color{gray!03},
    breaklines=true,
    xleftmargin=0pt,
    xrightmargin=0pt,
    breakindent=2\baselineskip,
]{experiments/city_temperature/008_model.stan}%
\end{minipage}}{\scriptsize\par}
\par\end{centering}
{\scriptsize}{\scriptsize\par}}}{\scriptsize\par}
\par\end{centering}
\begin{centering}
{\scriptsize\subfloat[{\color{col4}$m^{(487)}$}, a typical but lower-scoring model. \label{fig:city-model-even-lower}]{\begin{centering}
{\scriptsize{}%
\begin{minipage}[t]{0.97\columnwidth}%
\lstinputlisting[
    basicstyle=\scriptsize\ttfamily\color{col4},  
   tabsize=2,
   frame=single,
   rulecolor=\color{gray!100},
   backgroundcolor=\color{gray!03},
    breaklines=true,
    xleftmargin=0pt,
    xrightmargin=0pt,
    breakindent=2\baselineskip,
]{experiments/city_temperature/138_model.stan}%
\end{minipage}}{\scriptsize\par}
\par\end{centering}
{\scriptsize}{\scriptsize\par}}}{\scriptsize\par}
\par\end{centering}
\caption{Example models for the \textbf{city temperature} problem. \label{fig:city-models-B}}
\end{figure}
\cleardoublepage{}
\par\end{flushleft}

\begin{flushleft}
\begin{figure}[H]
\begin{centering}
\includegraphics[width=0.5\textwidth]{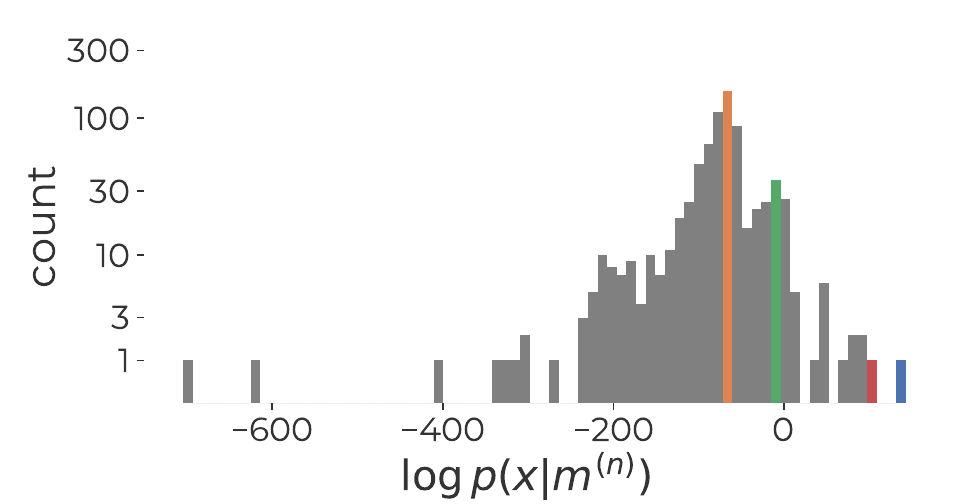}
\par\end{centering}
\caption{Estimated marginal likelihoods $p\protect\pp{x\vert m^{(n)}}$ for
each of the valid models $n$ for the \textbf{city temperature} problem.
Bars are colored for four example models from \eqref{fig:city-models-A}
and \eqref{fig:city-models-B}. \label{fig:city-inference}}
\end{figure}
\par\end{flushleft}

\begin{flushleft}
\begin{figure}[H]
\begin{centering}
\vspace{-1cm}\includegraphics[width=0.5\textwidth]{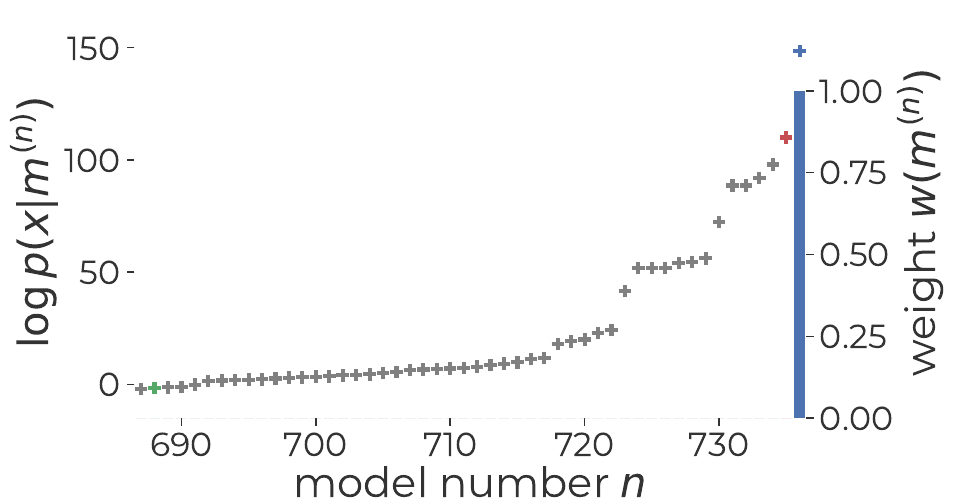}
\par\end{centering}
\caption{Estimated log-marginal likelihoods $p\protect\pp{x\vert m^{(n)}}$
and weights $w(m^{(n)})$ for the top 50 models $n$ for the \textbf{city
temperature} problem. markers are colored for four example models
from \ref{fig:city-models-A} and \ref{fig:city-models-B}. \label{fig:city-weights}}
\end{figure}
\par\end{flushleft}

\begin{figure}[H]
\begin{centering}
\vspace{-1cm}\includegraphics[width=0.85\columnwidth]{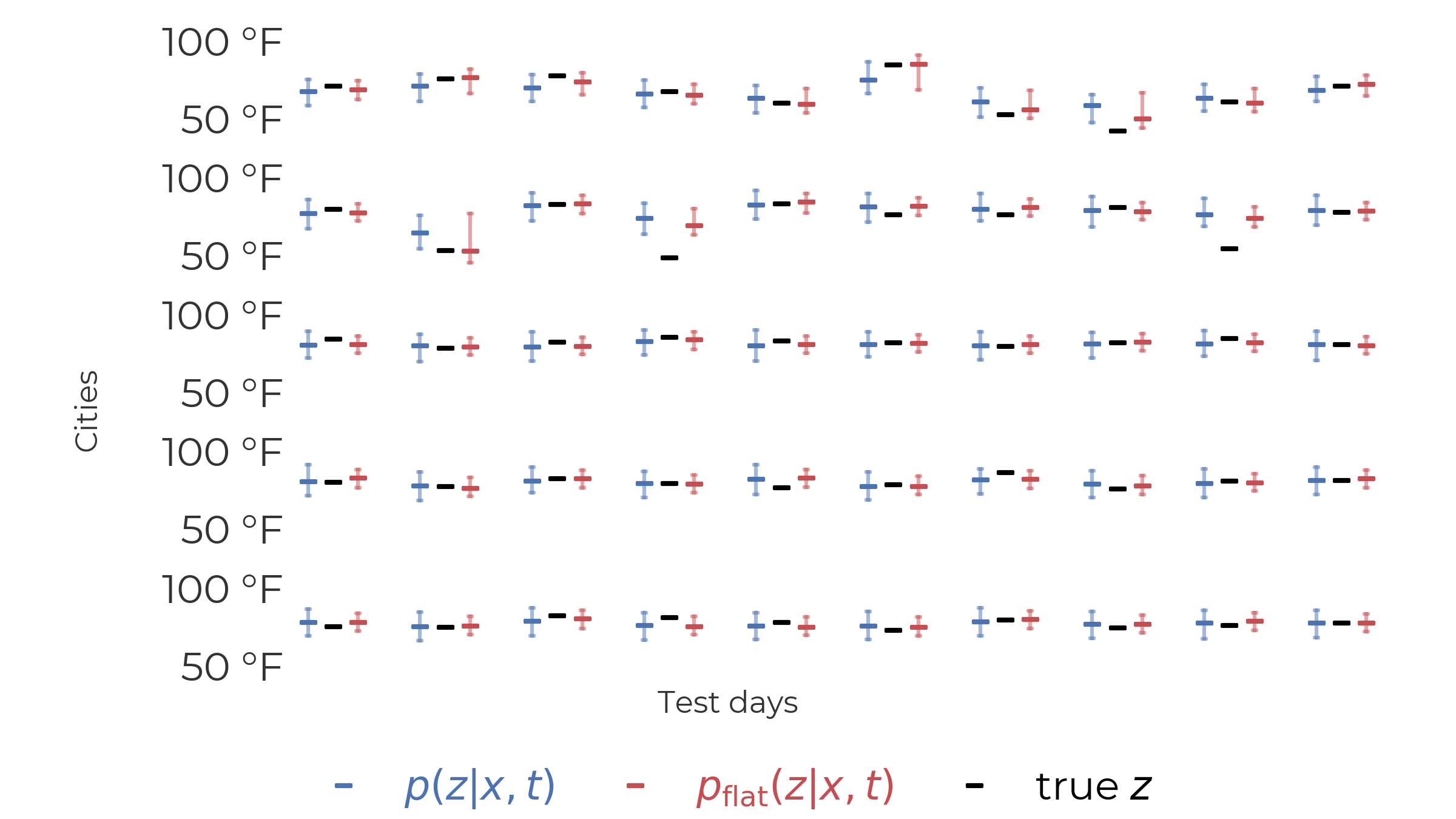}
\par\end{centering}
\caption{Medians and 90\% credible intervals computed for each city on each
test day for the \textbf{city temperature} problem. In this case,
both LLB and $p_{\mathrm{flat}}$ have coverage of 44/50=88\% of the
true values, indicating reasonable calibration, though the posteriors
under LLB are slightly more narrow.\label{fig:city-bma}}
\end{figure}

\cleardoublepage{}

\newpage{}

\subsection{Gold (small)\label{subsec:Gold-(small)}}

In this problem, the user describes a ``rod'' of some length, with
a varying density of gold atoms. Samples were taken at various positions
and tested to be gold or not. Finally, a set of future test locations
are given. The user ``forgot'' to give types for some variables.

\lstinputlisting[
    basicstyle=\scriptsize\ttfamily,     
   frame=single,
   rulecolor=\color{gray!100},
   backgroundcolor=\color{gray!03},
    breaklines=true,
    xleftmargin=5pt,
    xrightmargin=5pt,
    breakindent=0\baselineskip,
]{experiments/gold/prompt.txt}

Synthetic data was generated by taking training locations $x$ uniformly
from 0 to 2.5. The true density of gold was $\sigma\pars{1+\sin(9x)-\frac{1}{2}x^{2}}$
where $\sigma\pp s=1/\pp{1+e^{-s}}$ denotes a sigmoid function. The
test locations were 100 points equally spaced from 0 to 2.5. A sample
of the data is shown below, while the full data is plotted in \ref{fig:gold-data}.
\begin{center}
{\scriptsize{\ttfamily{}%
\noindent\begin{minipage}[t]{1\columnwidth}%
\lstinputlisting[
    basicstyle=\scriptsize\ttfamily,
   tabsize=2,
   frame=single,
   rulecolor=\color{gray!100},
   backgroundcolor=\color{gray!03},
    breaklines=true,
    xleftmargin=0pt,
    xrightmargin=0pt,
    breakindent=2\baselineskip,
]{experiments/gold/data_small.json}%
\end{minipage}}}{\scriptsize\par}
\par\end{center}

\begin{center}
\begin{figure}[H]
\begin{centering}
\includegraphics[width=0.5\textwidth]{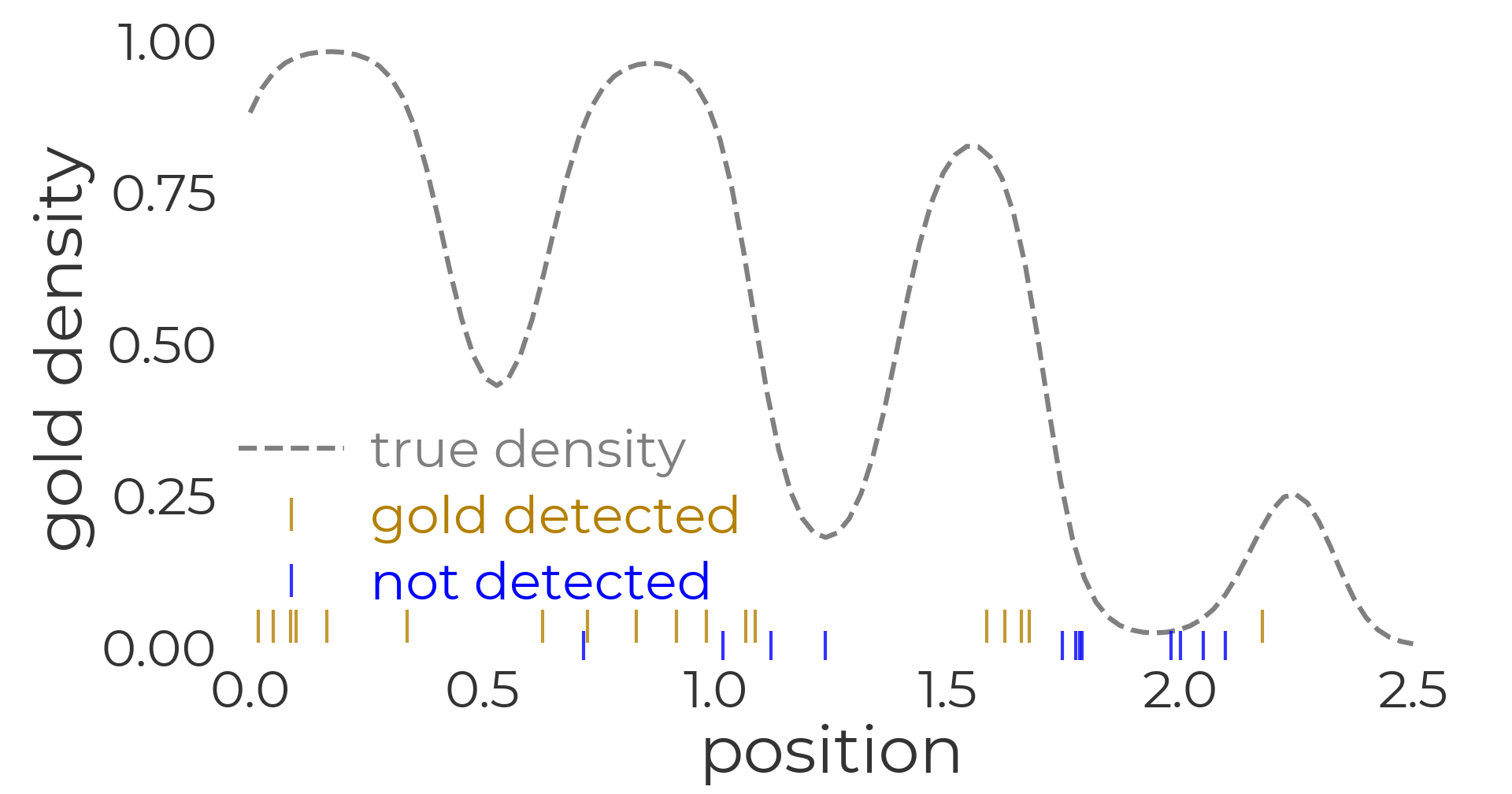}
\par\end{centering}
\caption{Observed data for the \textbf{gold (small)} problem.\label{fig:gold-data}}
\end{figure}
\par\end{center}

In preliminary testing, it was difficult for the LLM to generate valid
models for this problem. Thus, instead of 1000, $2^{14}=16384$ candidate
models were generated. Of these, 192 (1.2\%) compiled and allowed
for inference. The top three models and a low scoring model are shown
in \ref{fig:gold-small-top}, \ref{fig:gold-small-2nd}, \ref{fig:gold-small-3rd},
and \ref{fig:gold-small-typical}. Histograms of estimated marginal
likelihoods are in \ref{fig:gold-small-inference}, while \ref{fig:gold-small-posteriors}
shows the estimated posterior for each model, and \ref{fig:gold-small-bma}
shows the weights for the top 50 models and the final estimated posterior.

In this case, essentially all posterior weight is given to the top-three
scoring models, {\color{col1}$m^{(192)}$} \ref{fig:gold-small-top},
{\color{col2}$m^{(191)}$} \ref{fig:gold-small-2nd}, and {\color{col1}$m^{(190)}$}
\ref{fig:gold-small-3rd}. The final posterior appears a bit better
than a flat average.

\cleardoublepage{}
\begin{flushleft}
{\scriptsize{}
\begin{figure}[H]
\begin{centering}
{\scriptsize{}%
\begin{minipage}[t]{0.97\columnwidth}%
\lstinputlisting[
    basicstyle=\scriptsize\ttfamily\color{col1},    
   tabsize=2,
   frame=single,
   rulecolor=\color{gray!100},
   backgroundcolor=\color{gray!03},
    breaklines=true,
    xleftmargin=0pt,
    xrightmargin=0pt,
    breakindent=2\baselineskip,
]{experiments/gold/084_model.stan}%
\end{minipage}}{\scriptsize\par}
\par\end{centering}
{\scriptsize\caption{{\color{col1}$m^{(192)}$}, the highest-scoring model for the \textbf{gold(small)}
problem.\label{fig:gold-small-top}}
}{\scriptsize\par}
\end{figure}
}{\scriptsize\par}
\par\end{flushleft}

\begin{flushleft}
{\scriptsize{}
\begin{figure}[H]
\begin{centering}
{\scriptsize{}%
\begin{minipage}[t]{0.97\columnwidth}%
\lstinputlisting[
    basicstyle=\scriptsize\ttfamily\color{col2},    
   tabsize=2,
   frame=single,
   rulecolor=\color{gray!100},
   backgroundcolor=\color{gray!03},
    breaklines=true,
    xleftmargin=0pt,
    xrightmargin=0pt,
    breakindent=2\baselineskip,
]{experiments/gold/8875_model.stan}%
\end{minipage}}{\scriptsize\par}
\par\end{centering}
{\scriptsize\caption{{\color{col2}$m^{(191)}$}, the second-highest-scoring model for
the \textbf{gold(small)} problem.\label{fig:gold-small-2nd}}
}{\scriptsize\par}
\end{figure}
}{\scriptsize\par}
\par\end{flushleft}

\begin{flushleft}
{\scriptsize{}
\begin{figure}[H]
\begin{centering}
{\scriptsize{}%
\begin{minipage}[t]{0.97\columnwidth}%
\lstinputlisting[
    basicstyle=\scriptsize\ttfamily\color{col3},    
   tabsize=2,
   frame=single,
   rulecolor=\color{gray!100},
   backgroundcolor=\color{gray!03},
    breaklines=true,
    xleftmargin=0pt,
    xrightmargin=0pt,
    breakindent=2\baselineskip,
]{experiments/gold/9740_model.stan}%
\end{minipage}}{\scriptsize\par}
\par\end{centering}
{\scriptsize\caption{{\color{col3}$m^{(190)}$}, the third-highest-scoring model for
the \textbf{gold(small)} problem.\label{fig:gold-small-3rd}}
}{\scriptsize\par}
\end{figure}
}{\scriptsize\par}
\par\end{flushleft}

\begin{flushleft}
{\scriptsize{}
\begin{figure}[H]
\begin{centering}
{\scriptsize{}%
\begin{minipage}[t]{0.97\columnwidth}%
\lstinputlisting[
    basicstyle=\scriptsize\ttfamily\color{col4},    
   tabsize=2,
   frame=single,
   rulecolor=\color{gray!100},
   backgroundcolor=\color{gray!03},
    breaklines=true,
    xleftmargin=0pt,
    xrightmargin=0pt,
    breakindent=2\baselineskip,
]{experiments/gold/12511_model.stan}%
\end{minipage}}{\scriptsize\par}
\par\end{centering}
{\scriptsize\caption{{\color{col4}$m^{(171)}$}, a low-scoring model for the \textbf{gold(small)}
problem.\label{fig:gold-small-typical}}
}{\scriptsize\par}
\end{figure}
}{\scriptsize\par}
\par\end{flushleft}

\cleardoublepage{}
\begin{flushleft}
\begin{figure}[H]
\begin{centering}
\includegraphics[width=0.5\textwidth]{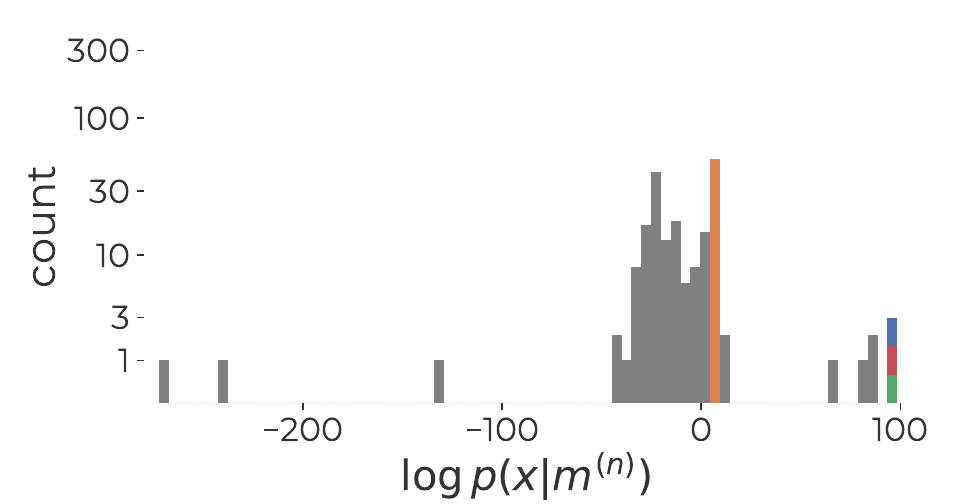}\includegraphics[width=0.5\textwidth]{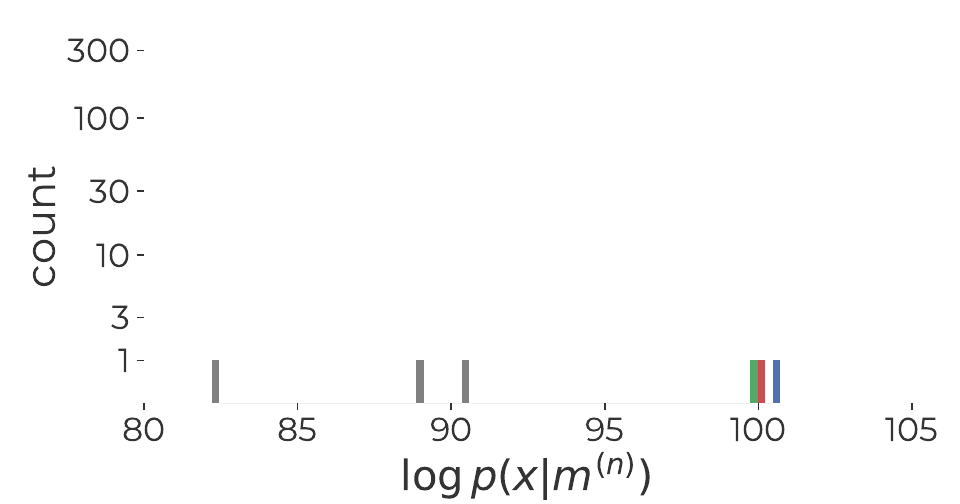}
\par\end{centering}
\caption{Estimated marginal likelihoods $p\protect\pp{x\vert m^{(n)}}$ for
each of the valid models $n$ for the \textbf{gold (small)} problem.
Because of the many order of magnitude, two different ranges are shown.
Bars are colored for four example models from \ref{fig:gold-small-top},
\ref{fig:gold-small-2nd}, \ref{fig:gold-small-3rd}, and \ref{fig:gold-small-typical}.
If multiple models map to the same bin, all colors are shown stacked.
\label{fig:gold-small-inference}}
\end{figure}
\par\end{flushleft}

\begin{flushleft}
\begin{figure}[H]
\begin{centering}
\includegraphics[width=0.5\columnwidth]{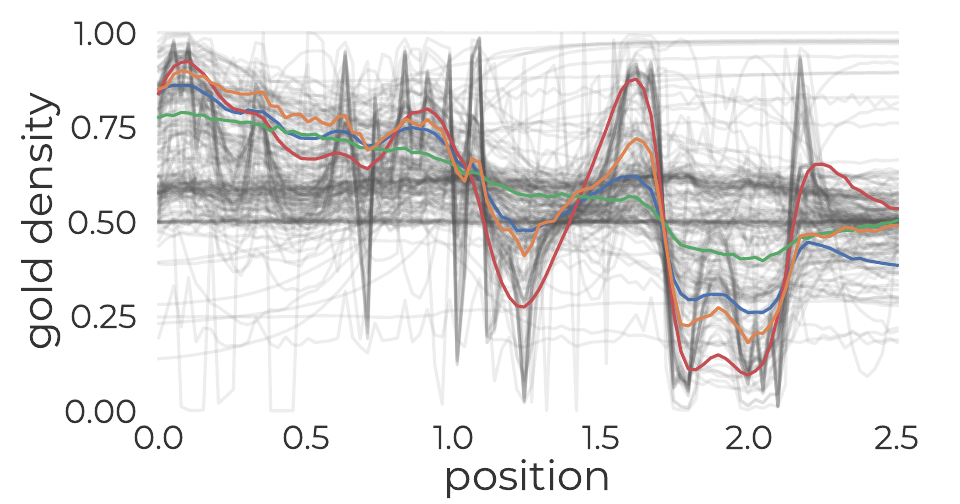}
\par\end{centering}
\caption{The estimated posteriors $\protect\E[z\vert m^{(n)},x]$ for each
of the valid models $n$ for the \textbf{gold (small)} problem. Lines
are colored for four example models from \ref{fig:gold-small-top},
\ref{fig:gold-small-2nd}, \ref{fig:gold-small-3rd}, and \ref{fig:gold-small-typical}.\label{fig:gold-small-posteriors}}
\end{figure}
\par\end{flushleft}

\begin{flushleft}
\begin{figure}[H]
\begin{centering}
\includegraphics[width=0.5\columnwidth]{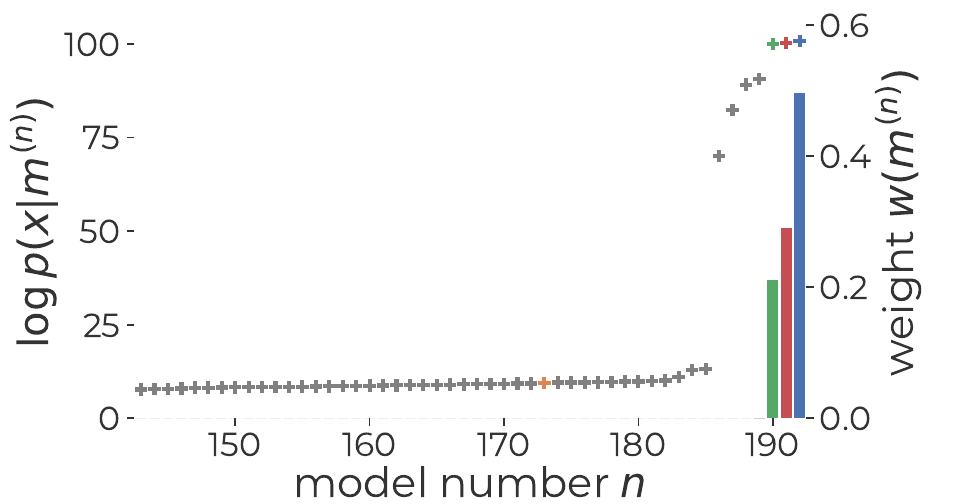}\includegraphics[width=0.5\textwidth]{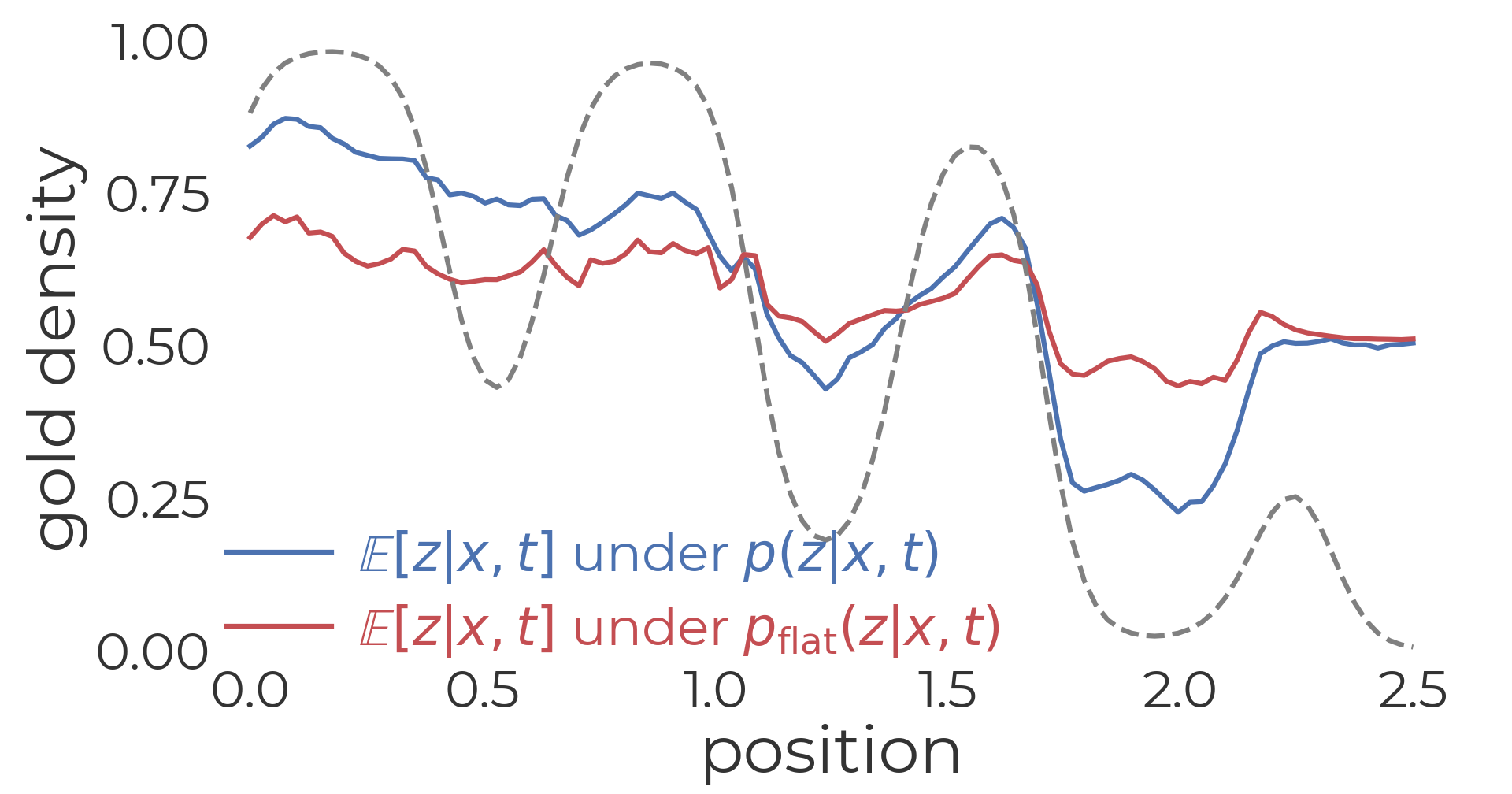}
\par\end{centering}
\caption{Left: The log-marginal likelihoods and weights for the top 50 models
for the \textbf{gold (small)} problem. Right: The final estimated
posteriors, compared to a \textquotedblleft flat\textquotedblright{}
average. The true density is shown for reference.\label{fig:gold-small-bma}}
\end{figure}
\par\end{flushleft}

\cleardoublepage{}

\newpage{}

\subsection{Gold (large)\label{subsec:Gold-(large)}}

This problem concerns the same gold problem, except with a larger
dataset with more observations. Synthetic data was generated by taking
150 training locations $x$ uniformly from 0 to 2.5. The true density
of gold was $\sigma\pars{1+\sin(9x)-\frac{1}{2}x^{2}}$ where $\sigma\pp s=1/\pp{1+e^{-s}}$
denotes a sigmoid function. The test locations were 100 points equally
spaced from 0 to 2.5. A sample of the data is shown below, while the
full data is plotted in \ref{fig:gold-large-data}.
\begin{center}
{\scriptsize{\ttfamily{}%
\noindent\begin{minipage}[t]{1\columnwidth}%
\lstinputlisting[
    basicstyle=\scriptsize\ttfamily,
   tabsize=2,
   frame=single,
   rulecolor=\color{gray!100},
   backgroundcolor=\color{gray!03},
    breaklines=true,
    xleftmargin=0pt,
    xrightmargin=0pt,
    breakindent=2\baselineskip,
]{experiments/gold-moresamples/data_small.json}%
\end{minipage}}}{\scriptsize\par}
\par\end{center}

\begin{center}
\begin{figure}[H]
\begin{centering}
\includegraphics[width=0.5\textwidth]{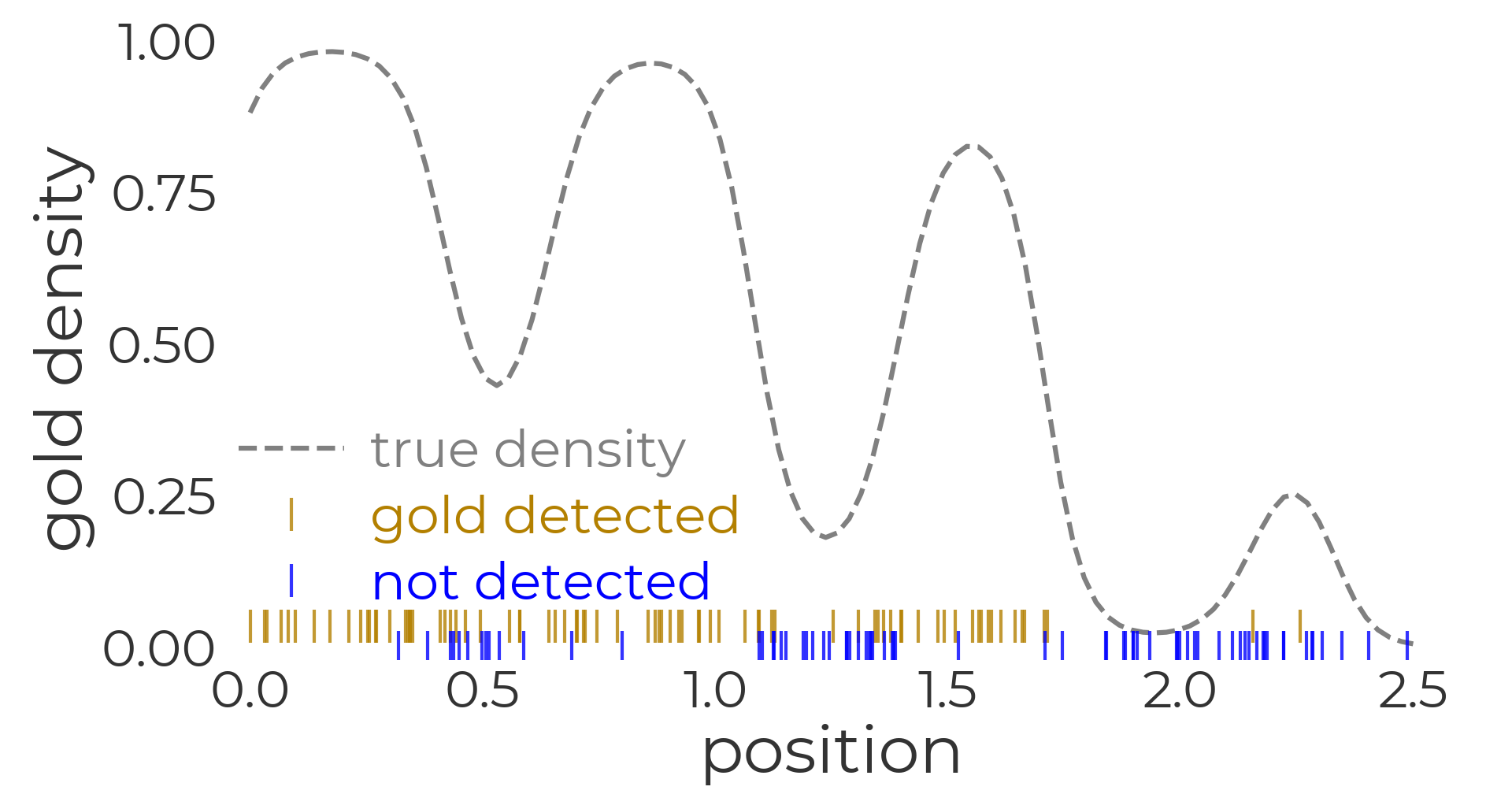}
\par\end{centering}
\caption{Observed data for the \textbf{gold (large)} problem.\label{fig:gold-large-data}}
\end{figure}
\par\end{center}

The same $2^{14}=16384$ candidate models were considered as in the
previous problem. Of these, only 71 (0.4\%) compiled and allowed for
inference with this larger dataset. The top three models and a low
scoring model are shown in \ref{fig:gold-large-top}, \ref{fig:gold-large-2nd},
\ref{fig:gold-large-3rd}, and \ref{fig:gold-large-typical}. Histograms
of estimated marginal likelihoods are in \ref{fig:gold-large-inference},
while \ref{fig:gold-large-posteriors} shows the estimated posterior
for each model, and \ref{fig:gold-large-bma} shows the weights for
the top 50 models and the final estimated posterior.

In this case, essentially all posterior weight is given to the top-three
scoring models, {\color{col1}$m^{(71)}$} \ref{fig:gold-large-top}.
This produces a final posterior quite close to the true density, and
much better than a flat average.

\cleardoublepage{}
\begin{flushleft}
{\scriptsize{}
\begin{figure}[H]
\begin{centering}
{\scriptsize{}%
\begin{minipage}[t]{0.97\columnwidth}%
\lstinputlisting[
    basicstyle=\scriptsize\ttfamily\color{col1},    
   tabsize=2,
   frame=single,
   rulecolor=\color{gray!100},
   backgroundcolor=\color{gray!03},
    breaklines=true,
    xleftmargin=0pt,
    xrightmargin=0pt,
    breakindent=2\baselineskip,
]{experiments/gold-moresamples/084_model.stan}%
\end{minipage}}{\scriptsize\par}
\par\end{centering}
{\scriptsize\caption{{\color{col1}$m^{(71)}$}, the highest-scoring model for the \textbf{gold(large)}
problem.\label{fig:gold-large-top}}
}{\scriptsize\par}
\end{figure}
}{\scriptsize\par}
\par\end{flushleft}

\begin{flushleft}
{\scriptsize{}
\begin{figure}[H]
\begin{centering}
{\scriptsize{}%
\begin{minipage}[t]{0.97\columnwidth}%
\lstinputlisting[
    basicstyle=\scriptsize\ttfamily\color{col2},    
   tabsize=2,
   frame=single,
   rulecolor=\color{gray!100},
   backgroundcolor=\color{gray!03},
    breaklines=true,
    xleftmargin=0pt,
    xrightmargin=0pt,
    breakindent=2\baselineskip,
]{experiments/gold-moresamples/1029_model.stan}%
\end{minipage}}{\scriptsize\par}
\par\end{centering}
{\scriptsize\caption{{\color{col2}$m^{(70)}$}, the second-highest-scoring model for
the \textbf{gold(large)} problem.\label{fig:gold-large-2nd}}
}{\scriptsize\par}
\end{figure}
}{\scriptsize\par}
\par\end{flushleft}

\begin{flushleft}
{\scriptsize{}
\begin{figure}[H]
\begin{centering}
{\scriptsize{}%
\begin{minipage}[t]{0.97\columnwidth}%
\lstinputlisting[
    basicstyle=\scriptsize\ttfamily\color{col3},    
   tabsize=2,
   frame=single,
   rulecolor=\color{gray!100},
   backgroundcolor=\color{gray!03},
    breaklines=true,
    xleftmargin=0pt,
    xrightmargin=0pt,
    breakindent=2\baselineskip,
]{experiments/gold-moresamples/1036_model.stan}%
\end{minipage}}{\scriptsize\par}
\par\end{centering}
{\scriptsize\caption{{\color{col3}$m^{(69)}$}, the third-highest-scoring model for the
\textbf{gold(large)} problem.\label{fig:gold-large-3rd}}
}{\scriptsize\par}
\end{figure}
}{\scriptsize\par}
\par\end{flushleft}

\begin{flushleft}
{\scriptsize{}
\begin{figure}[H]
\begin{centering}
{\scriptsize{}%
\begin{minipage}[t]{0.97\columnwidth}%
\lstinputlisting[
    basicstyle=\scriptsize\ttfamily\color{col4},    
   tabsize=2,
   frame=single,
   rulecolor=\color{gray!100},
   backgroundcolor=\color{gray!03},
    breaklines=true,
    xleftmargin=0pt,
    xrightmargin=0pt,
    breakindent=2\baselineskip,
]{experiments/gold-moresamples/571_model.stan}%
\end{minipage}}{\scriptsize\par}
\par\end{centering}
{\scriptsize\caption{{\color{col4}$m^{(51)}$}, a low-scoring model for the \textbf{gold(large)}
problem.\label{fig:gold-large-typical}}
}{\scriptsize\par}
\end{figure}
}{\scriptsize\par}
\par\end{flushleft}

\cleardoublepage{}
\begin{flushleft}
\begin{figure}[H]
\begin{centering}
\includegraphics[width=0.5\textwidth]{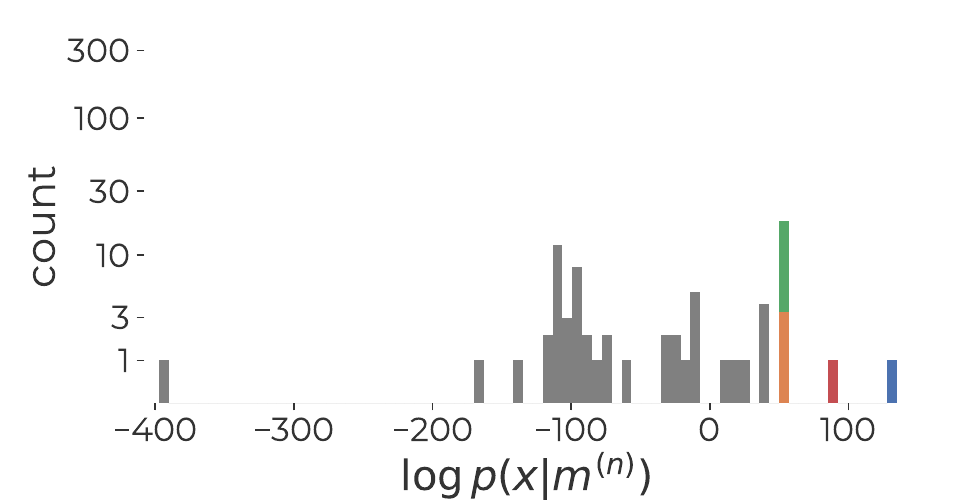}\includegraphics[width=0.5\textwidth]{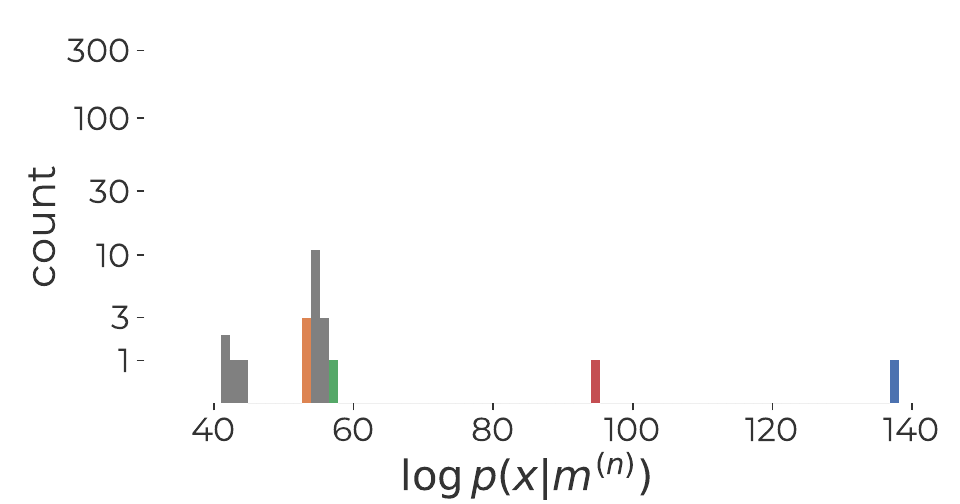}
\par\end{centering}
\caption{Estimated marginal likelihoods $p\protect\pp{x\vert m^{(n)}}$ for
each of the valid models $n$ for the \textbf{gold (large)} problem.
Because of the many order of magnitude, two different ranges are shown.
Bars are colored for four example models from \ref{fig:gold-large-top},
\ref{fig:gold-large-2nd}, \ref{fig:gold-large-3rd}, and \ref{fig:gold-large-typical}.
If multiple models map to the same bin, all colors are shown stacked.
\label{fig:gold-large-inference}}
\end{figure}
\par\end{flushleft}

\begin{flushleft}
\begin{figure}
\begin{centering}
\includegraphics[width=0.5\columnwidth]{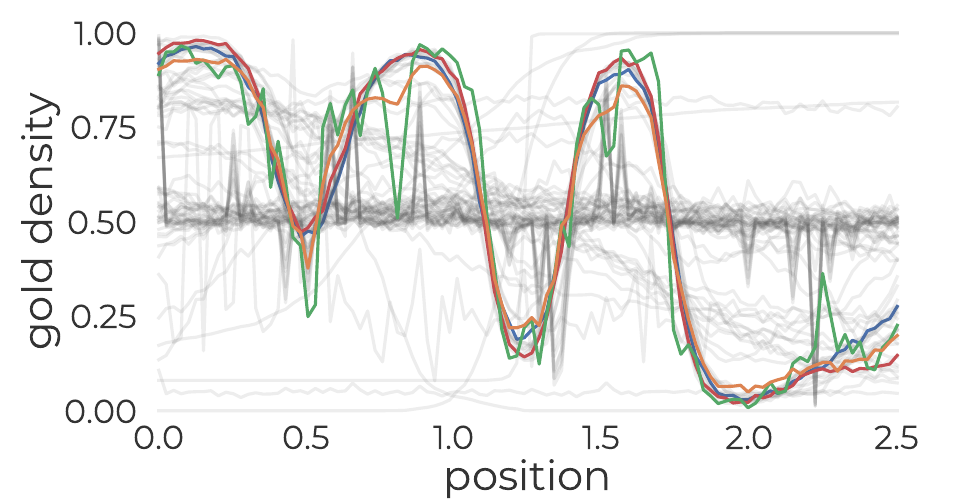}
\par\end{centering}
\caption{The estimated posteriors $\protect\E[z\vert m^{(n)},x]$ for each
of the valid models $n$ for the \textbf{gold (small)} problem. Lines
are colored for four example models from \ref{fig:gold-large-top},
\ref{fig:gold-large-2nd}, \ref{fig:gold-large-3rd}, and \ref{fig:gold-large-typical}.\label{fig:gold-large-posteriors}}
\end{figure}
\par\end{flushleft}

\begin{flushleft}
\begin{figure}
\begin{centering}
\includegraphics[width=0.5\columnwidth]{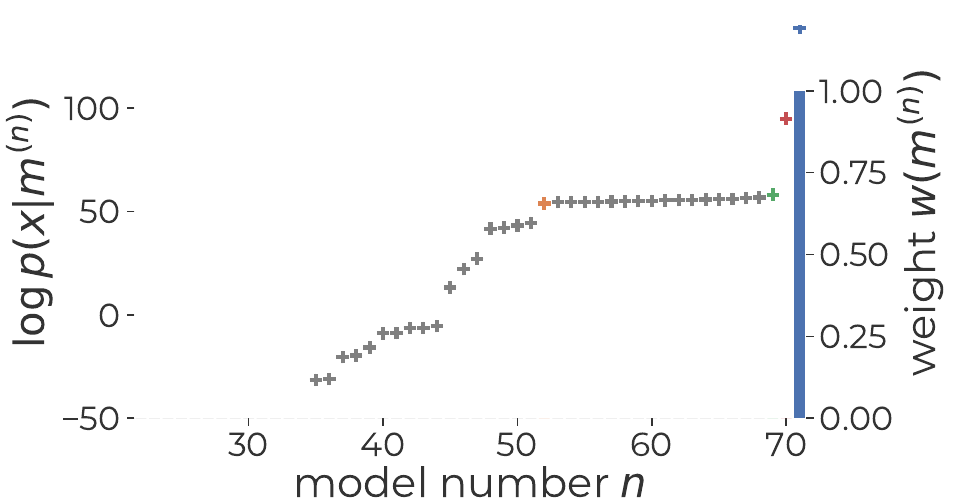}\includegraphics[width=0.5\textwidth]{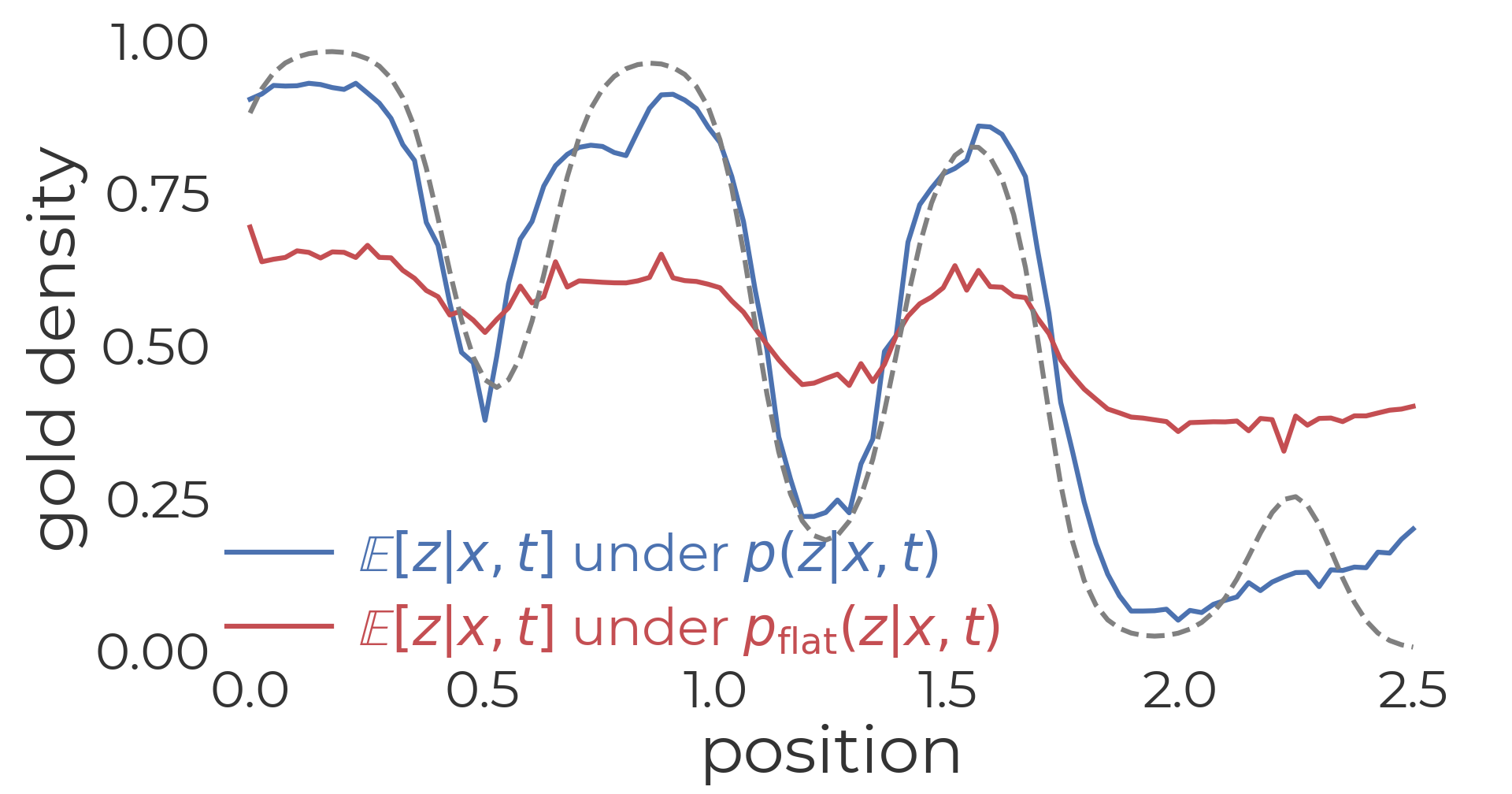}
\par\end{centering}
\caption{Left: The log-marginal likelihoods and weights for the top 50 models
for the \textbf{gold (small)} problem. Right: The final estimated
posteriors, compared to a \textquotedblleft flat\textquotedblright{}
average. The true density is shown for reference.\label{fig:gold-large-bma}}
\end{figure}
\par\end{flushleft}

\cleardoublepage{}

\newpage{}

\section{Additional analysis}

\subsection{Form of the self-normalized importance sampling weights\label{sec:Form-of-SNIS-weights}}

Consider self-normalized importance sampling with a target distribution
of $p\pp{m\vert x,t}$ and a proposal distribution of $p\pp{m\vert t}.$
The naive importance weights would thus be

\[
w^{(n)}=\frac{\frac{p\pp{m^{(n)}\vert x,t}}{p\pp{m^{(n)}\vert t}}}{\sum_{n'=1}^{N}\frac{p\pp{m^{(n')}\vert x,t}}{p\pp{m^{(n')}\vert t}}}.
\]
However, we know from \ref{eq:p(m|x_t)} that $p\pp{m\vert x,t}\propto p\pp{m\vert t}p\pp{x\vert m}$
or, more precisely
\begin{eqnarray*}
p\pp{m\vert x,t} & = & \frac{p\pp{m\vert t}p\pp{x\vert m}}{\sum_{m'}p\pp{m'\vert t}p\pp{x\vert m'}}=\frac{p\pp{m\vert t}p\pp{x\vert m}}{p\pp{x\vert t}}.
\end{eqnarray*}
From this, we obtain that
\[
\frac{p\pp{m\vert x,t}}{p\pp{m\vert t}}=\frac{p\pp{x\vert m}}{p\pp{x\vert t}}.
\]
Substituting this into the above form for $w^{(n)},$ we obtain that

\begin{eqnarray*}
w^{(n)} & = & \frac{\frac{p\pp{x\vert m^{(n)}}}{p\pp{x\vert t}}}{\sum_{n'=1}^{N}\frac{p\pp{x\vert m^{(n')}}}{p\pp{x\vert t}}}\\
 & = & \frac{p\pp{x\vert m^{(n)}}}{\sum_{n'=1}^{N}p\pp{x\vert m^{(n')}}}.
\end{eqnarray*}

\subsection{Variance of the self-normalized importance sampling estimator\label{sec:Variance-of-the-IS-estimator}}

Consider the importance sampling estimator from \ref{eq:IS-estimator},
i.e.
\[
\hat{\mu}=\sum_{n=1}^{N}\r w^{(n)}g\pp{\r m^{(n)}}.
\]

Standard theory \citep[e.g.][Eq. 9.8]{Owen_2013_MonteCarloTheory}
says that the variance of $\hat{\mu}$ is asymptotically 
\[
\lim_{N\rightarrow\infty}N\mathbb{V}\bb{\hat{\r{\mu}}}=\frac{\E_{p\pp{\r m\vert t}}\bracs{w\pp{\r m}^{2}\pars{g\pp{\r m}-\mu}^{2}}}{\E_{p\pp{\r m\vert t}}\bracs{w\pp{\r m}}^{2}},
\]
where $w\pp m=\frac{p\pp{m\vert x,t}}{p\pp{m\vert t}}.$ This simplifies
into
\begin{eqnarray*}
\lim_{N\rightarrow\infty}N\mathbb{V}\bb{\hat{\mu}} & = & \frac{\E_{p\pp{\r m\vert t}}\bracs{\frac{p\pp{\r m\vert x,t}^{2}}{p\pp{\r m\vert t}^{2}}\pars{g\pp{\r m}-\mu}^{2}}}{\E_{p\pp{\r m\vert t}}\bracs{\frac{p\pp{\r m\vert x,t}}{p\pp{\r m\vert t}}}^{2}}\\
 & = & \E_{p\pp{\r m\vert t}}\bracs{\frac{p\pp{\r m\vert x,t}^{2}}{p\pp{\r m\vert t}^{2}}\pars{g\pp{\r m}-\mu}^{2}}.
\end{eqnarray*}

Now, if we assume that $\verts{g\pp m-\mu}\leq\delta$ then this can
be bounded by
\begin{eqnarray*}
\lim_{N\rightarrow\infty}N\mathbb{V}\bb{\hat{\mu}} & \leq & \delta^{2}\E_{p\pp{\r m\vert t}}\bracs{\frac{p\pp{\r m\vert x,t}^{2}}{p\pp{\r m\vert t}^{2}}}\\
 & = & \delta^{2}\pars{1+\chi^{2}\pars{p\pp{\r m\vert x,t}\Vert p\pp{\r m\vert t}}}
\end{eqnarray*}

\subsection{Form of the self-normalized importance sampling weights with variational
inference\label{sec:Form-of-SNIS-weights-with-VI}}

Consider trying to estimate
\[
q\pp{z\vert x}=\E_{q\pp{\r m\vert x}}q\pp{z\vert x,\r m},
\]
where $q\pp{z\vert x,m}$ is some distribution and $q\pp{m\vert x}$
is as defined in \ref{cor:variational-q(m)}. We are interested in
estimating this expectation using SNIS with the proposal distribution
$p\pp{m\vert t}$. It follows that we should use an estimator
\[
\tilde{q}\pp{z\vert x}=\sum_{n=1}^{N}w^{(n)}q\pp{z\vert x,m^{(n)}},
\]
where $m^{(1)},\cdots,m^{(N)}\sim p\pp{m\vert t}$, and the self-normalized
weights are
\begin{eqnarray*}
w^{(m)} & = & \frac{\frac{q\pp{m^{(n)}\vert x}}{p\pp{m^{(n)}\vert t}}}{\sum_{n'=1}^{N}\frac{q\pp{m^{(n')}\vert x}}{p\pp{m^{(n')}\vert t}}}.
\end{eqnarray*}
Now, observe from \ref{eq:optimal-q-joint-divergence-ELBO-form} that
\[
q\pp{m\vert x}=\frac{p\pp{m\vert t}\exp\Bigl(\mathrm{ELBO}\pp{q\pp{\r z\vert x,m}\Vert p\pp{\r z,m\vert x,t}}\Bigr)}{\sum_{m'}p\pp{m'\vert t}\exp\Bigl(\mathrm{ELBO}\pp{q\pp{\r z\vert x,m'}\Vert p\pp{\r z,m'\vert x,t}}\Bigr)},
\]
where the denominator does not depend on $m$. It follows that the
weights are
\begin{eqnarray*}
w^{(m)} & = & \frac{\exp\Bigl(\mathrm{ELBO}\pp{q\pp{\r z\vert x,m^{(n)}}\Vert p\pp{\r z,m^{(n)}\vert x,t}}\Bigr)}{\sum_{n'=1}^{N}\exp\Bigl(\mathrm{ELBO}\pp{q\pp{\r z\vert x,m^{(n')}}\Vert p\pp{\r z,m^{(n')}\vert x,t}}\Bigr)}.
\end{eqnarray*}

\subsection{Analysis with inexact ELBO estimates\label{sec:Analysis-with-inexact-ELBO}}
\begin{thm}
Suppose that $p\pp{z,x,m\vert t}$ and $q\pp{z\vert x,m}$ are fixed
and we choose\label{thm:joint-divergence-approximate-ELBO}
\[
q\pp{m\vert x}\propto p\pp{m\vert x}\times\exp\pars{\mathrm{ELBO}\pp{q\pp{\r z\vert x,m}\Vert p\pp{\r z,x\vert m}}-\delta^{(\r m)}}
\]
where $\delta^{(m)}\geq0$ represents the ``slack'' in between the
true ELBO for model $m$ and the bound used for computing $q\pp{m\vert x}$.
Then the resulting joint divergence is
\begin{alignat}{1}
 & \KL{q\pp{\r z,\r m\vert x}}{p\pp{\r z,\r m\vert x,t}}\nonumber \\
 & =-\log\E_{p\pp{\r m\vert x,t}}\exp\Bigl(-\mathrm{KL}^{(\r m)}-\delta^{(\r m)}+\bar{\delta}\Bigr)\label{eq:optimal-joint-KL-inexact}\\
 & =\log p\pp{x\vert t}-\log\E_{p\pp{\r m\vert t}}\exp\pars{\mathrm{ELBO}^{(\r m)}-\delta^{(\r m)}+\bar{\delta}}\label{eq:optimal-joint-KL-inexact-ELBO-form}
\end{alignat}
where $\bar{\delta}=\E_{q\pp{\r m\vert x}}\delta^{(\r m)}$, and $\mathrm{KL}^{(m)}$
and $\mathrm{ELBO}^{(m)}$ represent the KL-divergence and ELBO from
\ref{eq:ELBO-decomp-1}.
\end{thm}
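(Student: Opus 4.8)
The plan is to follow the route of Theorem~\ref{thm:variational-lemma} / Corollary~\ref{cor:variational-q(m)}, carrying the slack $\delta^{(m)}$ through each step; since $q\pp{m\vert x}$ is prescribed rather than optimized, this is a direct evaluation of the joint divergence, not a variational argument. First I would invoke the chain rule of KL-divergence (as already used for \eqref{eq:joint-divergence}), which gives
\[
\KL{q\pp{\r z,\r m\vert x}}{p\pp{\r z,\r m\vert x,t}}=\KL{q\pp{\r m\vert x}}{p\pp{\r m\vert x,t}}+\E_{q\pp{\r m\vert x}}\mathrm{KL}^{(\r m)},
\]
where I use $p\pp{z\vert x,m,t}=p\pp{z\vert x,m}$ (which holds because $p\pp{z,x\vert m,t}=p\pp{z,x\vert m}$) so that the within-model term is exactly $\mathrm{KL}^{(m)}=\KL{q\pp{\r z\vert x,m}}{p\pp{\r z\vert x,m}}$.

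Next I would re-express the prescribed weights in terms of $p\pp{m\vert x,t}$. Combining $p\pp{m\vert x,t}=p\pp{m\vert t}p\pp{x\vert m}/p\pp{x\vert t}$ with the ELBO decomposition $\log p\pp{x\vert m}=\mathrm{ELBO}^{(m)}+\mathrm{KL}^{(m)}$ of \eqref{eq:ELBO-decomp-1} lets me write $q\pp{m\vert x}=\tfrac{1}{Z}\,p\pp{m\vert x,t}\exp\pars{-\mathrm{KL}^{(m)}-\delta^{(m)}}$ with $Z=\E_{p\pp{\r m\vert x,t}}\exp\pars{-\mathrm{KL}^{(\r m)}-\delta^{(\r m)}}$ (finite since $\mathrm{ELBO}^{(m)}-\delta^{(m)}\le\log p\pp{x\vert m}$ and $\E_{p\pp{\r m\vert t}}p\pp{x\vert\r m}=p\pp{x\vert t}<\infty$). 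Plugging this into $\KL{q\pp{\r m\vert x}}{p\pp{\r m\vert x,t}}=\E_{q\pp{\r m\vert x}}\log\bigl(q\pp{\r m\vert x}/p\pp{\r m\vert x,t}\bigr)$ gives $-\log Z-\E_{q\pp{\r m\vert x}}\mathrm{KL}^{(\r m)}-\bar\delta$. Adding back the $\E_{q\pp{\r m\vert x}}\mathrm{KL}^{(\r m)}$ term from the chain rule cancels the expected-KL contributions and leaves $-\log Z-\bar\delta$, which equals $-\log\E_{p\pp{\r m\vert x,t}}\exp\pars{-\mathrm{KL}^{(\r m)}-\delta^{(\r m)}+\bar\delta}$ once the constant $\bar\delta$ is pulled inside the exponent. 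That is \eqref{eq:optimal-joint-KL-inexact}.

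Finally, to reach \eqref{eq:optimal-joint-KL-inexact-ELBO-form} I would, inside that last expectation, substitute $\mathrm{KL}^{(\r m)}=\log p\pp{x\vert\r m}-\mathrm{ELBO}^{(\r m)}$ and then use $p\pp{m\vert x,t}/p\pp{x\vert m}=p\pp{m\vert t}/p\pp{x\vert t}$ to convert $\E_{p\pp{\r m\vert x,t}}\bb{\cdot}$ into $\tfrac{1}{p\pp{x\vert t}}\E_{p\pp{\r m\vert t}}\bb{\cdot}$, which is what produces the additive $\log p\pp{x\vert t}$. Everything here is bookkeeping; the one point I would highlight is the exact cancellation of the two $\E_{q\pp{\r m\vert x}}\mathrm{KL}^{(\r m)}$ terms --- this is why the inexact-ELBO formula differs from the $\delta\equiv0$ case (\eqref{eq:optimal-q-joint-divergence} and \eqref{eq:optimal-joint-KL-ELBO-form}) only by the constant shift $\bar\delta$ inside the exponent, and why the identity needs no sign assumption on $\delta^{(m)}$ (non-negativity only justifies calling it ``slack'').
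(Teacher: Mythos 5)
Your proof is correct. The route---the chain rule of KL-divergence plus a direct evaluation of $\KL{q\pp{\r m\vert x}}{p\pp{\r m\vert x,t}}$ after rewriting the prescribed weights as $q\pp{m\vert x}\propto p\pp{m\vert x,t}\exp\pars{-\mathrm{KL}^{(m)}-\delta^{(m)}}$ via \ref{eq:ELBO-decomp-1}---is the same style of computation the paper uses for \ref{thm:variational-lemma} and in \ref{subsec:Proof-with-inexact-ELBOs}. It is worth noting that the paper's proof environment for the general statement is actually empty: it only states and proves the special case $q\pp{z\vert x,m}=p\pp{z\vert x,m}$ (i.e.\ $\mathrm{KL}^{(m)}=0$), so your argument supplies the general-case step---the cancellation of the two $\E_{q\pp{\r m\vert x}}\mathrm{KL}^{(\r m)}$ terms---that the paper omits. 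You also correctly read the $p\pp{m\vert x}$ in the theorem statement as $p\pp{m\vert t}$ (consistent with \ref{cor:variational-q(m)}); under the literal reading the stated identities would not follow.
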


(Proof in \ref{subsec:Proof-with-inexact-ELBOs}.)

To interpret this result, first note that if all ELBOs are exact,
meaning $\delta^{(m)}=0$, then \ref{eq:optimal-joint-KL-inexact}
is equivalent to \ref{eq:optimal-q-joint-divergence} while \ref{eq:optimal-joint-KL-inexact-ELBO-form}
is equivalent to \ref{eq:optimal-joint-KL-ELBO-form}. This also shows
that using ``improved'' distributions and tolerating ``slack''
in the ELBO can only help, compared to using distributions where the
exact ELBO has the same value. That is, no matter the value of the
slack variables $\delta^{(m)}$, \ref{eq:optimal-joint-KL-inexact}
is never worse than \ref{eq:optimal-q-joint-divergence} and \ref{eq:optimal-joint-KL-inexact-ELBO-form}
is never worse than \ref{eq:optimal-joint-KL-ELBO-form}.

The obvious implementation of \ref{thm:joint-divergence-approximate-ELBO}
would be be an algorithm that loops over all models. This is given
explicitly as \ref{alg:Variational-LLB-inexact} in \ref{sec:Other-example-algorithms}.
Again, such an algorithm is intractable, but can be made practical
by combining it with SNIS. This is given as \ref{alg:self-normalized-LLB-inexact}.

An interesting special case is when MCMC is used to approximate each
posterior $p\pp{z\vert x,m}$ and we assume that these samples in
fact come from $p\pp{z\vert x,m}$. (See \ref{alg:LLB-using-MCMC-and-VI}
in \ref{sec:Other-example-algorithms}.) If this case, the best strategy
is simply to use the best possible lower-bound on the marginal likelihood,
computed by any method. It's see in this case that the joint divergence
reduces to
\[
-\log\sum_{m}p\pp{m\vert x,t}\exp\Bigl(-\delta^{(m)}+\bar{\delta}\Bigr).
\]
From this, we can see that if all errors $\delta^{(m)}$ are equal,
then $\delta^{(m)}=\bar{\delta}$ and the joint KL-divergence remains
zero. Also, if there are errors on models where $p\pp{m\vert x,t}$
is negligible, these also have negligible impact on the joint divergence.
However, if there are different levels of error on models where $p\pp{m\vert x,t}$
is large, this could impact the joint divergence.

\subsection{Relaxed bound\label{sec:Relaxed-bound}}
\begin{cor}
\label{cor:relaxed-bound}Under the assumptions of \ref{thm:variational-lemma},
\[
\KL{q\pp{z,m\vert x}}{p\pp{z,m\vert x,t}}\leq\E_{p\pp{m\vert x,t}}\KL{q\pp{z\vert x,m}}{p\pp{z\vert x,m}}\Bigr).
\]
\end{cor}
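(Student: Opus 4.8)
The plan is to obtain the inequality by applying Jensen's inequality to the exact expression for the minimized joint divergence supplied by \ref{thm:variational-lemma}. Since the corollary is stated under the assumptions of that theorem, $q\pp{m\vert x}$ is taken to be the minimizer \ref{eq:optimal-q}, so \ref{eq:optimal-q-joint-divergence} gives
\[
\KL{q\pp{\r z,\r m\vert x}}{p\pp{\r z,\r m\vert x,t}}=-\log\E_{p\pp{\r m\vert x,t}}\exp\Bigl(-\KL{q\pp{\r z\vert x,\r m}}{p\pp{\r z\vert x,\r m}}\Bigr).
\]
Write $Y(\r m)=\KL{q\pp{\r z\vert x,\r m}}{p\pp{\r z\vert x,\r m}}$, which is nonnegative, so $\exp(-Y(\r m))\in(0,1]$ and the logarithm above is well defined. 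The map $s\mapsto-\log s$ is convex on $(0,\infty)$, so Jensen's inequality applied under the law $\r m\sim p\pp{\r m\vert x,t}$ yields
\[
-\log\E_{p\pp{\r m\vert x,t}}\exp\bigl(-Y(\r m)\bigr)\le\E_{p\pp{\r m\vert x,t}}\bigl[-\log\exp(-Y(\r m))\bigr]=\E_{p\pp{\r m\vert x,t}}Y(\r m),
\]
which is precisely the claimed bound.

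There is essentially no obstacle here; the only points to verify are that the left-hand side of the corollary really is the minimized divergence (immediate from the hypothesis that $q\pp{m\vert x}$ is the optimal choice) and that boundedness of $\exp(-Y)$ makes the appeal to Jensen applicable with no integrability subtleties. I would present this Jensen argument as the main proof, since it ties the bound transparently to \ref{eq:optimal-q-joint-divergence}, matching the informal remark made just before the corollary statement.

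As an alternative route that avoids invoking the closed form, one can argue directly from the chain rule of KL-divergence: $\KL{q\pp{\r z,\r m\vert x}}{p\pp{\r z,\r m\vert x,t}}=\KL{q\pp{\r m\vert x}}{p\pp{\r m\vert x,t}}+\E_{q\pp{\r m\vert x}}\KL{q\pp{\r z\vert x,\r m}}{p\pp{\r z\vert x,\r m}}$. The optimal $q\pp{m\vert x}$ minimizes the left side over all choices of $q\pp{m\vert x}$, so in particular it is no larger than the value obtained by the feasible choice $q\pp{m\vert x}=p\pp{m\vert x,t}$, for which the first term on the right vanishes and the second becomes $\E_{p\pp{\r m\vert x,t}}\KL{q\pp{\r z\vert x,\r m}}{p\pp{\r z\vert x,\r m}}$. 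Either derivation is short, so the write-up would simply record the Jensen computation and perhaps mention this chain-rule view as a one-line remark.
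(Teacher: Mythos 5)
Your primary argument is correct and is essentially the paper's own proof: both apply Jensen's inequality to the closed-form expression $-\log\E_{p\pp{\r m\vert x,t}}\exp\bigl(-\KL{q\pp{\r z\vert x,\r m}}{p\pp{\r z\vert x,\r m}}\bigr)$ from \ref{thm:variational-lemma}, the only cosmetic difference being that you invoke convexity of $-\log$ directly while the paper uses convexity of $\exp$ and then negates the logarithm. Your one-line chain-rule alternative (comparing the minimizer against the feasible choice $q\pp{m\vert x}=p\pp{m\vert x,t}$) is also sound and arguably more transparent, but since you relegate it to a remark the write-up matches the paper's route.
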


\begin{proof}
We have that
\begin{eqnarray*}
\E_{p\pp{m\vert x,t}}\KL{q\pp{z\vert x,m}}{p\pp{z\vert x,m}}\Bigr) & = & -\log\exp\pars{-\E_{p\pp{m\vert x,t}}\KL{q\pp{z\vert x,m}}{p\pp{z\vert x,m}}\Bigr)}\\
 & \geq & -\log\E_{p\pp{m\vert x,t}}\exp\pars{-\KL{q\pp{z\vert x,m}}{p\pp{z\vert x,m}}\Bigr)}\\
 & = & \KL{q\pp{z,m\vert x}}{p\pp{z,m\vert x,t}}.
\end{eqnarray*}
The first line is obvious. The second line follows from using Jensen's
inequality to see that
\[
\E_{p\pp{m\vert x,t}}\exp\Bigl(-\KL{q\pp{z\vert x,m}}{p\pp{z\vert x,m}}\Bigr)\geq\exp\Bigl(-\E_{p\pp{m\vert x,t}}\KL{q\pp{z\vert x,m}}{p\pp{z\vert x,m}}\Bigr).
\]
The third line is the form of $\KL{q\pp{z,m\vert x}}{p\pp{z,m\vert x,t}}$
from \ref{thm:variational-lemma}.
\end{proof}
\cleardoublepage{}


\section{Other example algorithms\label{sec:Other-example-algorithms}}

\begin{algorithm}[t]
\begin{enumerate}
\item Input textual description $t$ and data $x$.
\item For $n=1,2,\cdots,N$:
\begin{enumerate}
\item Sample $m^{(n)}\sim p\pp{m|t}$.\hfill{}\textcolor{gray}{// using
LLM}
\item Compute marginal evidence $p\pp{x\vert m^{(n)}}$ and posterior $p\pp{z\vert x,m^{(n)}}$.\hfill{}\textcolor{gray}{//
under PPL}
\end{enumerate}
\item Set $w^{(n)}\propto p\pp{x|m^{(n)}}$, where $\sum_{n=1}^{N}w^{(n)}=1$.
\item Use final posterior approximation\vspace{-8pt}
\[
p\pp{z|x,t}\approx\sum_{n=1}^{N}w^{(n)}p\pp{z\vert x,m^{(n)}}.
\]
\vspace{-9.3pt}
\end{enumerate}
\caption{LLB with exact inference and SNIS.\label{alg:self-normalized-LLB}}
\end{algorithm}

\begin{algorithm}[t]
\begin{enumerate}
\item Input textual description $t$ and data $x$.
\item For $n=1,2,\cdots,N$
\begin{enumerate}
\item Sample $m^{(n)}\sim p\pp{m\vert t}$.\hfill{}\textcolor{gray}{//
using LLM}
\item Maximize $\mathrm{ELBO}(m^{(n)})$ over some variational family $q\pp{z\vert x,m^{(n)}}$.\textcolor{gray}{\hspace{3cm}//
under PPL}
\end{enumerate}
\item Set $w^{(n)}\propto\exp\mathrm{ELBO}(m^{(n)})$, where $\sum_{n=1}^{N}w^{(n)}=1$.
\item Use final posterior approximation\vspace{-8pt}
\[
p\pp{z|x,t}\approx\sum_{n=1}^{N}w^{(n)}q\pp{z\vert x,m^{(n)}}.
\]
\vspace{-9.3pt}
\end{enumerate}
\caption{LLB with variational inference and SNIS.\label{alg:Variational-LLB-IS}}
\end{algorithm}
\begin{algorithm}
\begin{enumerate}
\item Input textual description $t$ and data $x$.
\item For all possible model strings $m$:
\begin{enumerate}
\item Compute model probability $p\pp{m\vert t}$ (using LLM)
\item Maximize $\mathrm{ELBO}\pp{q\pp{\r z\vert x,m^{(n)}}\Vert p\pp{\r z,x\vert m^{(n)}}}$
over some variational family $q$.
\end{enumerate}
\item Set $w^{(m)}\propto p\pp{m\vert t}\exp\Bigl(\mathrm{ELBO}\pp{q\pp{\r z\vert x,m^{(n)}}\Vert p\pp{\r z,x\vert m^{(n)}}}\Bigr)$.
\item Use final posterior approximation
\[
p\pp{z|x,t}\approx\sum_{m}w^{(m)}q\pp{z\vert x,m^{(n)}}.
\]
\end{enumerate}
\caption{Variational LLB with exact enumeration of models (theoretical)\label{alg:Variational-LLB}}
\end{algorithm}

\begin{algorithm}
\begin{enumerate}
\item Input textual description $t$ and data $x$.
\item For all possible model strings $m$:
\begin{enumerate}
\item Compute model probability $p\pp{m\vert t}$ (using LLM)
\item Compute some approximation to the posterior $q\pp{z\vert x,m}\approx p\pp{z\vert x,m}$.
\item Compute some bound on the ELBO
\[
L^{(m)}\leq\E_{q\pp{\r z\vert x,m}}\log\frac{p\pp{\r z,x\vert m}}{q\pp{\r z\vert x,m}}\leq\log p\pp{x\vert m}.
\]
\end{enumerate}
\item Set $w^{(m)}\propto p\pp{m\vert t}\exp\pars{L^{(m)}}$.
\item Use final posterior approximation
\[
p\pp{z|x,t}\approx\sum_{m}w^{(m)}q\pp{z\vert x,m}.
\]
\end{enumerate}
\caption{LLB with variational inference with inexact ELBOs (theoretical)\label{alg:Variational-LLB-inexact}}
\end{algorithm}

\begin{algorithm}[H]
\begin{enumerate}
\item Input textual description $t$ and data $x$.
\item For $n=1,2,\cdots,N$:
\begin{enumerate}
\item Sample $m^{(n)}\sim p\pp{m|t}$ (using an LLM).
\item Compute some approximation to the posterior $q\pp{z\vert x,m^{(n)}}\approx p\pp{z\vert x,m^{(n)}}$.
\item Compute some bound on the ELBO
\[
L^{(n)}\leq\E_{q\pp{z\vert x,m^{(n)}}}\log\frac{p\pp{z,x\vert m^{(n)}}}{q\pp{z\vert x,m^{(n)}}}\leq\log p\pp{x\vert m^{(n)}}.
\]
\end{enumerate}
\item Set $w^{(n)}\propto\exp L^{(n)}$.
\item Use final posterior approximation
\[
p\pp{z|x,t}\approx\sum_{n=1}^{N}w^{(n)}p\pp{z\vert x,m^{(n)}}.
\]
\end{enumerate}
\caption{LLB with ELBO bounds and SNIS.\label{alg:self-normalized-LLB-inexact}}
\end{algorithm}

\cleardoublepage{}

\onecolumn

\section{Proofs}

\subsection{Proof of \ref{thm:variational-lemma}\label{subsec:proof-of-variational-result}}
\begin{proof}
First, notice that minimizing
\[
\KL{q\pp{z,m\vert x}}{p\pp{z,m\vert x,t}}
\]
is equivalent to maximizing
\[
\mathrm{ELBO}\pp{q\pp{z,m\vert x}\Vert p\pp{z,x,m\vert t}}=\E_{q\pp{z\vert x,m}}\log\frac{p\pp{z,x,m,\vert t}}{q\pp{z\vert x,m}}.
\]
We will therefore attempt to minimize the latter. Set up Lagrangian
to find optimal $q\pp{m\vert x}$.
\begin{eqnarray*}
\mathcal{L} & = & \sum_{m}\int q\pp{m\vert t}q\pp{z|m,x}\log\frac{p\pp{z,x,m\vert t}}{q\pp{m\vert t}q\pp{z|m,x}}dz-\lambda\pars{\sum_{m}q\pp{m\vert t}-1}\\
\frac{d\mathcal{L}}{dq\pp{m\vert t}} & = & \int q\pp{z|m}\log\frac{p\pp{z,x,m\vert t}}{q\pp{m\vert t}q\pp{z|m,x}}dz-\int q\pp{m\vert t}q\pp{z|m,x}\frac{d}{dq\pp{m\vert t}}\log q\pp{m\vert t}dz-\lambda\\
 & = & \int q\pp{z|m}\log\frac{p\pp{z,x,m\vert t}}{q\pp{m\vert t}q\pp{z|m,x}}dz-1-\lambda\\
 & = & \int q\pp{z|m}\log\frac{p\pp{m\vert t}p\pp{x\vert m}p\pp{z\vert x,m}}{q\pp{m\vert t}q\pp{z|m,x}}dz-1-\lambda\\
 & = & \log\frac{p\pp{m\vert t}p\pp{x\vert m}}{q\pp{m\vert t}}-\KL{q\pp{z\vert x,m}}{p\pp{z\vert x,m}}-1-\lambda
\end{eqnarray*}
Solving for $\frac{d\mathcal{L}}{dq\pp m}=0$ we obtain that
\[
q\pp{m\vert t}\propto p\pp{m\vert t}p\pp{x\vert m}\exp\pars{-\KL{q\pp{z\vert x,m}}{p\pp{z\vert x,m}}}.
\]

Now, for simplicity, define $\KL{q\pp{z\vert x,m}}{p\pp{z\vert x,m}}=\Delta_{m}$.
Then, we can write

\begin{eqnarray*}
q\pp{m\vert x} & = & \frac{p\pp{m\vert t}p\pp{x\vert m}\exp\Bigl(-\Delta_{m})}{\sum_{m'}p\pp{m'\vert t}p\pp{x\vert m'}\exp\Bigl(-\Delta_{m'})},\\
p\pp{m\vert x,t} & = & \frac{p\pp{m\vert t}p\pp{x\vert m}}{\sum_{m'}p\pp{m'\vert t}p\pp{x\vert m'}}.
\end{eqnarray*}

Now, note that
\begin{eqnarray*}
\frac{q\pp{m\vert x}}{p\pp{m\vert x,t}} & = & \frac{p\pp{m\vert t}p\pp{x\vert m}\exp\Bigl(-\Delta_{m})}{\sum_{m'}p\pp{m'\vert t}p\pp{x\vert m'}\exp\Bigl(-\Delta_{m'})}\frac{\sum_{m'}p\pp{m'\vert t}p\pp{x\vert m'}}{p\pp{m\vert t}p\pp{x\vert m}}\\
 & = & \frac{\exp\Bigl(-\Delta_{m})\sum_{m'}p\pp{m'\vert t}p\pp{x\vert m'}}{\sum_{m'}p\pp{m'\vert t}p\pp{x\vert m'}\exp\Bigl(-\Delta_{m'})}\\
 & = & \frac{\exp\Bigl(-\Delta_{m})}{\sum_{m'}p\pp{m'\vert x,t}\exp\Bigl(-\Delta_{m'})}
\end{eqnarray*}

Thus we have that

\begin{eqnarray*}
\KL{q\pp{z,m\vert x}}{p\pp{z,m\vert x,t}} & = & \E_{q\pp{m\vert x}}\log\frac{q\pp{m\vert x}}{p\pp{m\vert x,t}}+\E_{q\pp{z,m\vert x}}\log\frac{q\pp{z\vert m,x}}{p\pp{z\vert m,x}}\\
 & = & \E_{q\pp{m\vert x}}\log\frac{\exp\Bigl(-\Delta_{m})}{\sum_{m'}p\pp{m'\vert x,t}\exp\Bigl(-\Delta_{m'})}+\E_{q\pp{m\vert x}}\Delta_{m}\\
 & = & \log\frac{1}{\sum_{m'}p\pp{m'\vert x,t}\exp\Bigl(-\Delta_{m'})}\\
 & = & -\log\E_{p\pp{m\vert x,t}}\exp\pp{-\Delta_{m}}
\end{eqnarray*}
\end{proof}
Note that since $\Delta_{m}\geq0$, it follows that $\exp\pp{-\Delta_{m}}\leq1$
and thus $\E_{p\pp{m\vert x,t}}\exp\pp{-\Delta_{m}}\leq1$ and so
$\log\E_{p\pp{m\vert x,t}}\exp\pp{-\Delta_{m}}\leq0$. Thus $\KL{q\pp{z,m\vert x}}{p\pp{z,m\vert x,t}}$
is non-negative, as required.

\subsection{Proof of \ref{cor:variational-q(m)}\label{subsec:Proof-of-ELBO-form}}
\begin{proof}
Recall that the \ref{thm:variational-lemma} found that the optimal
$q\pp{m\vert x}$ is
\begin{equation}
q\pp{m\vert x}\propto p\pp{m\vert t}p\pp{x\vert m}\times\exp\Bigl(-\KL{q\pp{\r z\vert x,m}}{p\pp{\r z\vert x,m}}\Bigr).\label{eq:previous-optimal-q(m|x)}
\end{equation}

And recall the ``ELBO decomposition'', i.e. the fact that
\begin{equation}
\log p\pp{x\vert m}=\mathrm{ELBO}\pp{q\pp{\r z\vert x,m}\Vert p\pp{\r z,x\vert m}}+\KL{q\pp{\r z\vert x,m}}{p\pp{\r z\vert x,m}}.\label{eq:ELBO-decomp}
\end{equation}
From which it follows that
\[
p\pp{x\vert m}\times\exp\pars{-\KL{q\pp{\r z\vert x,m}}{p\pp{\r z\vert x,m}}}=\mathrm{ELBO}\pp{q\pp{\r z\vert x,m}\Vert p\pp{\r z,x\vert m}}
\]

Substituting into \ref{eq:previous-optimal-q(m|x)} gives that
\[
q\pp{m\vert x}\propto p\pp{m\vert t}\exp\Bigl(\mathrm{ELBO}\pp{q\pp{\r z\vert x,m}\Vert p\pp{\r z,m\vert x,t}}\Bigr).
\]

Recall also that \ref{thm:variational-lemma} found the joint divergence
resulting from using the optimal $q\pp{m\vert x}$ is
\begin{eqnarray*}
\KL{q\pp{\r z,\r m\vert x}}{p\pp{\r z,\r m\vert x,t}} & = & -\log\E_{p\pp{\r m\vert x,t}}\exp\Bigl(-\KL{q\pp{\r z\vert x,\r m}}{p\pp{\r z\vert x,\r m}}\Bigr)\\
 & = & -\log\E_{p\pp{\r m\vert x,t}}\exp\pars{\mathrm{ELBO}\pp{q\pp{\r z\vert x,m}\Vert p\pp{\r z,x\vert m}}-\log p\pp{x\vert m}}\\
 & = & -\log\sum_{m}\frac{p\pp{m\vert x,t}}{p\pp{x\vert m}}\exp\pars{\mathrm{ELBO}\pp{q\pp{\r z\vert x,m}\Vert p\pp{\r z,x\vert m}}}\\
 & = & -\log\sum_{m}\frac{p\pp{m\vert t}}{p\pp{x\vert t}}\exp\pars{\mathrm{ELBO}\pp{q\pp{\r z\vert x,m}\Vert p\pp{\r z,x\vert m}}}\\
 & = & \log p\pp{x\vert t}-\log\E_{p\pp{\r m\vert t}}\exp\pars{\mathrm{ELBO}\pp{q\pp{\r z\vert x,\r m}\Vert p\pp{\r z,x\vert\r m}}}
\end{eqnarray*}

Where we have used that
\[
p\pp{m\vert x,t}=\frac{p\pp{m\vert t}p\pp{x\vert m}}{p\pp{x\vert t}}
\]
and so
\[
\frac{p\pp{m\vert x,t}}{p\pp{x\vert m}}=\frac{p\pp{m\vert t}}{p\pp{x\vert t}}.
\]
\end{proof}
To understand this, suppose each $q$ was exact so that 
\[
\mathrm{ELBO}\pp{q\pp{\r z\vert x,m}\Vert p\pp{\r z,x\vert m}}=\log p\pp{x\vert m}.
\]
Then
\begin{eqnarray*}
\KL{q\pp{\r z,\r m\vert x}}{p\pp{\r z,\r m\vert x,t}} & = & \log p\pp{x\vert t}-\log\E_{p\pp{\r m\vert t}}p\pp{x\vert\r m}\\
 & = & \log p\pp{x\vert t}-\log p\pp{x\vert t}\\
 & = & 0.
\end{eqnarray*}

\subsection{Proof with inexact ELBOS (\ref{thm:joint-divergence-approximate-ELBO})}

\label{subsec:Proof-with-inexact-ELBOs}
\begin{proof}
\end{proof}

\begin{thm}
Suppose that $p\pp{z,x,m\vert t}$ is fixed and $q\pp{z\vert x,m}=p\pp{z\vert x,m}$
is exact but $q\pp{m\vert x}\propto p\pp{m\vert t}p\pp{x\vert t}\exp\pp{-\delta^{(m)}}$
is not exact (e.g. because it is chosen based on ELBOs rather than
true marginal likelihoods). Then,
\begin{eqnarray*}
\KL{q\pp{z\vert x}}{p\pp{z\vert x,t}} & \leq & \KL{q\pp{m\vert x}}{p\pp{m\vert x,t}}\\
 & = & -\log\E_{p\pp{m\vert x,t}}\exp\pp{-\delta^{(m)}}-\E_{q\pp{m\vert x}}\delta^{(m)}.
\end{eqnarray*}
\end{thm}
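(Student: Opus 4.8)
The plan is to prove both claims from the chain rule of KL divergence \citep[Thm 2.5.3]{Cover_2006_ElementsInformationTheory}, applied to the joint distributions $q\pp{\r z,\r m\vert x}$ and $p\pp{\r z,\r m\vert x,t}$ in the two orderings of $\pars{\r z,\r m}$, followed by a direct evaluation of $\KL{q\pp{\r m\vert x}}{p\pp{\r m\vert x,t}}$ from the explicit model weights. In fact this statement is \ref{thm:joint-divergence-approximate-ELBO} specialized to $q\pp{z\vert x,m}=p\pp{z\vert x,m}$, so that every per-model divergence in the ELBO decomposition \ref{eq:ELBO-decomp-1} vanishes; since the argument is short I would present it self-contained.

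First I would expand the joint divergence by conditioning on $\r z$ before $\r m$:
\begin{align*}
\KL{q\pp{\r z,\r m\vert x}}{p\pp{\r z,\r m\vert x,t}}
 &= \KL{q\pp{\r z\vert x}}{p\pp{\r z\vert x,t}}+\E_{q\pp{\r z\vert x}}\KL{q\pp{\r m\vert \r z,x}}{p\pp{\r m\vert \r z,x,t}}\\
 &\ge \KL{q\pp{\r z\vert x}}{p\pp{\r z\vert x,t}},
\end{align*}
using non-negativity of KL divergence. So it remains to identify the left-hand side with $\KL{q\pp{\r m\vert x}}{p\pp{\r m\vert x,t}}$.

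For that I would expand the same joint divergence the other way, conditioning on $\r m$ before $\r z$. The key observation is that, once $m$ is fixed, the PPL posterior is independent of $t$, i.e. $p\pp{z\vert m,x,t}=p\pp{z\vert x,m}$; this follows from the factorization $p\pp{z,x,m\vert t}=p\pp{m\vert t}\,p\pp{z,x\vert m}$ of \ref{eq:p(z_x_m|t)}. Combined with the hypothesis $q\pp{z\vert x,m}=p\pp{z\vert x,m}$, the conditional term vanishes:
\begin{align*}
\KL{q\pp{\r z,\r m\vert x}}{p\pp{\r z,\r m\vert x,t}}
 &= \KL{q\pp{\r m\vert x}}{p\pp{\r m\vert x,t}}+\E_{q\pp{\r m\vert x}}\KL{q\pp{\r z\vert \r m,x}}{p\pp{\r z\vert \r m,x}}\\
 &= \KL{q\pp{\r m\vert x}}{p\pp{\r m\vert x,t}},
\end{align*}
and combining with the earlier display gives $\KL{q\pp{\r z\vert x}}{p\pp{\r z\vert x,t}}\le\KL{q\pp{\r m\vert x}}{p\pp{\r m\vert x,t}}$.

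It then remains to evaluate $\KL{q\pp{\r m\vert x}}{p\pp{\r m\vert x,t}}$. Writing the stated weights in normalized form, $q\pp{m\vert x}=p\pp{m\vert x,t}\exp\pars{-\delta^{(m)}}/Z$ with $Z=\E_{p\pp{\r m\vert x,t}}\exp\pars{-\delta^{(\r m)}}$ (the factor of $p\pp{x\vert m}$ relative to the LLM prior $p\pp{m\vert t}$, and any model-independent constant, being absorbed into $Z$), the definition of KL gives
\[
\KL{q\pp{\r m\vert x}}{p\pp{\r m\vert x,t}}=\E_{q\pp{\r m\vert x}}\log\frac{\exp\pars{-\delta^{(\r m)}}}{Z}=-\E_{q\pp{\r m\vert x}}\delta^{(\r m)}-\log Z,
\]
which is the asserted closed form with $\bar{\delta}=\E_{q\pp{\r m\vert x}}\delta^{(\r m)}$. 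I do not expect a real obstacle: the whole argument is bookkeeping with the chain rule and the definition of KL divergence. The one step that genuinely matters is the conditional-independence identity $p\pp{z\vert m,x,t}=p\pp{z\vert x,m}$, which is exactly what collapses the $\r m$-first expansion to a divergence over models alone and thereby links the two decompositions.
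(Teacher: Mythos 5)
Your proof is correct and takes essentially the same route as the paper's: the closed form is obtained by the same direct evaluation of $\KL{q\pp{m\vert x}}{p\pp{m\vert x,t}}$ from the explicit weights (you merely absorb the common normalizer into $Z$ rather than writing out the ratio of sums), and you correctly read the stated proportionality $p\pp{m\vert t}p\pp{x\vert t}\exp\pp{-\delta^{(m)}}$ as intending $p\pp{x\vert m}$ in place of $p\pp{x\vert t}$, which is what the paper's own computation uses. The only addition is that you spell out the two chain-rule decompositions establishing $\KL{q\pp{z\vert x}}{p\pp{z\vert x,t}}\leq\KL{q\pp{m\vert x}}{p\pp{m\vert x,t}}$, a step the paper asserts in the theorem but leaves implicit in its proof.
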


\begin{proof}
\begin{eqnarray*}
\KL{q\pp{m\vert x}}{p\pp{m\vert x,t}} & = & \E_{q\pp{m\vert x}}\log\frac{q\pp{m\vert x}}{p\pp{m\vert x,t}}\\
 & = & \E_{q\pp{m\vert x}}\log\frac{p\pp{m\vert t}p\pp{x\vert m}\exp\pp{-\delta^{(m)}}}{p\pp{m\vert t}p\pp{x\vert m}}+\log\frac{\sum_{m}p\pp{m\vert t}p\pp{x\vert m}}{\sum_{m}p\pp{m\vert t}p\pp{x\vert m}\exp\pp{-\delta^{(m)}}}\\
 & = & \E_{q\pp{m\vert x}}\bracs{-\delta^{(m)}}+\log\frac{\sum_{m}p\pp{m\vert t}p\pp{x\vert m}}{\sum_{m}p\pp{m\vert t}p\pp{x\vert m}\exp\pp{-\delta^{(m)}}}\\
 & = & \log\frac{\sum_{m}p\pp{m\vert t}p\pp{x\vert m}}{\sum_{m}p\pp{m\vert t}p\pp{x\vert m}\exp\pp{\bar{\delta}-\delta^{(m)}}}\\
 & = & \log\frac{p\pp{x\vert t}}{\sum_{m}p\pp{m\vert t}p\pp{x\vert m}\exp\pp{\bar{\delta}-\delta^{(m)}}}\\
 & = & -\log\E_{p\pp{m\vert x,t}}\exp\pp{\bar{\delta}-\delta^{(m)}}\\
 & = & -\log\E_{p\pp{m\vert x,t}}\exp\pp{-\delta^{(m)}}-\bar{\delta}
\end{eqnarray*}
\end{proof}
Jensen's inequality ($-\log$ is convex) gives us the easy upper bound
\begin{eqnarray*}
\KL{q\pp{m\vert x}}{p\pp{m\vert x,t}} & = & -\log\E_{p\pp{m\vert x,t}}\exp\pp{\bar{\delta}-\delta^{(m)}}\\
 & \leq & -\E_{p\pp{m\vert x,t}}\log\exp\pp{\bar{\delta}-\delta^{(m)}}\\
 & = & \E_{p\pp{m\vert x,t}}\delta^{(m)}-\E_{q\pp{m\vert x}}\delta^{(m)}
\end{eqnarray*}

To understand this, note that the difference of$p\pp{m\vert x,t}$
and $q\pp{m\vert x}$ is precisely that $q\pp{m\vert x}$ is smaller
when $\delta^{(m)}$ is larger.

\cleardoublepage{}

\section{Experimental details\label{subsec:Experimental-Details}}

Models were generated using Llama-3.3-70B \citep{Grattafiori_2024_Llama3Herd,Meta_2024_Llama33Model}
with 4-bit AWQ quantization \citep{lin2023awq}. In testing, LLMs
seemed much better at writing Stan code \citep{Carpenter_2017_StanProbabilisticProgramming}
than other PPLs like NumPyro \citep{Phan_2019_ComposableEffectsFlexible}
or PyMC \citep{Abril-Pla_2023_PyMCModernComprehensive}, possibly
due to more code being available. We thus used Stan, though the possibility
of creating unnormalized distributions in Stan poses some difficulties
\eqref{subsec:Validating-models}.

The LLM system prompt \eqref{sec:System-prompt} explains the PROBLEM
/ DATA / GOAL format illustrated in the experiments and asks the LLM
to first write a THOUGHTS block explaining how it plans to model the
problem, followed by a MODEL block of Stan code. For in-context learning,
we provided six example inputs along with high quality outputs \eqref{sec:examples}.
We rejected any models that did not compile or that used certain language
constructs that might lead to unnormalized models \eqref{subsec:Validating-models}.

Models were generated using a single A100. With continuous batching
(parallel inference), model generation was reasonably fast---for
example, it took around 14 minutes to generate 1000 models for the
city temperature problem and around 11 minutes for the polling problem.

For inference, it is necessary to reliably and automatically approximate
posteriors and marginal likelihoods for thousands of models. To approximate
the posterior, we simply used Stan's default NUTS \citep{Hoffman_2014_NoUturnSamplerAdaptively}
sampler with 10,000 iterations. To approximate the marginal likelihood,
we used importance-weighted variational bounds \citep{Burda_2015_ImportanceWeightedAutoencoders},
with a Gaussian proposal distribution. In general, it is known that
such bounds are improved when the proposal distribution would minimize
the $\chi^{2}$ divergence to the target. However, given the difficulty
of minimizing such divergences \citet{Geffner_2021_DifficultyUnbiasedAlpha}
we elected instead to use a proposal distribution $q\pp{z\vert x,m}$
minimizing $\KL{p\pp{\r z\vert x,m}}{q\pp{\r z\vert x,m}},$ which,
given samples from $p$, amounts to moment matching, i.e. matching
the empirical mean and variance of the samples from MCMC. The full
inference strategy is given explicitly as \ref{alg:hybrid-MCMC-VI-SNIS}
.

\begin{algorithm}[H]
\begin{enumerate}
\item Input textual description $t$ and data $x$.
\item For $n=1,2,\cdots,N$:
\begin{enumerate}
\item Sample $m^{(n)}\sim p\pp{m|t}$ using an LLM.
\item Draw samples $z^{(n,1)},\cdots z^{(n,K)}\sim p\pp{z\vert x,m^{(n)}}$
using MCMC.
\item Set $q\pp{z\vert x,m^{(n)}}$ to be a Gaussian with mean and covariance
matching the sample $z^{(n,1)},\cdots,z^{(n,K)}$.
\item Estimate the importance-weighted ELBO $L^{(n)}=\hat{\mathbb{E}}\log\hat{\E}\ \frac{p\pp{\r z\vert x,m^{(n)}}}{q\pp{\r z\vert x,m^{(n)}}},$
where the inner expectation is estimated using 25 samples $\r z\sim q$
and the outer approximate expectation uses 10,000 repetitions.
\end{enumerate}
\item Set $w^{(n,k)}\propto\exp L^{(n)}$.
\item Return the set of samples $\{z^{(n,k)}\}$ where $z^{(n,k)}$ is given
weight $w^{(n)}.$
\end{enumerate}
\caption{The hybrid MCMC / VI / SNIS algorithm used in the experiments.\label{alg:hybrid-MCMC-VI-SNIS}}
\end{algorithm}

Measuring the amount of compute used for inference is difficult since
it was run on a heterogeneous cluster. However, running MCMC inference
on 1000 models is of course quite expensive and required tens of hours
of CPU time for each model. Simply compiling 1000 Stan models (on
CPU) is more time consuming than generating 1000 models using an LLM,
though of course both of these steps are embarrassingly parallel.

\cleardoublepage{}

\subsection{System prompt\label{sec:System-prompt}}

The following system prompt was used when generating all models:

\lstinputlisting[
    basicstyle=\small\ttfamily,
    frame=single,
    rulecolor=\color{gray!100},
    backgroundcolor=\color{gray!03},
    breaklines=true,          
    xleftmargin=0pt,          
    breakindent=0\baselineskip,
    lineskip=-3pt,
]{stan_system_prompt.txt}

\subsection{Validating models\label{subsec:Validating-models}}

We rejected any model that did not contain the string \texttt{MODEL},
or where the text after model failed to compile.

Using Stan for the purpose of this algorithm, has one significant
disadvantage: It is easy in Stan to create models that are not normalized.
This of course poses no issue when doing MCMC sampling, but is significant
here since it can affect the marginal likelihood. Unnormalized models
can be created in several ways:
\begin{enumerate}
\item By default, Stan uses flat / unnormalized / improper priors.
\item One can use the same variable in a sampling statement more than once,
e.g. one may write ``\texttt{x \textasciitilde{} normal(0,10)}''
and then later write ``\texttt{x \textasciitilde{} normal(3,5)}''.
Both of these statement increment the density, meaning the result
is no longer normalized.
\item One can directly manipulate the target using the ``\texttt{target
+=} '' construct, essentially leading to arbitrary changes with no
probabilistic interpretation.
\item One can transform variables on the ``left-hand side'' e.g. write
``\texttt{log(x) \textasciitilde{} normal(0,1)}''. This again leads
to an unnormalized density.
\item If variables are constrained in Stan, this constraint is not reflected
in the normalization.
\end{enumerate}
Except for the \#5, none of these are too critical, and one can use
a system prompt (and in-context examples) that discourages use. This
works quite well, but (probably due to frequent use in example models)
\#3 remains common. We thus reject all models that contain the string
``\texttt{target +=} '', as if they did not compile. Checking usage
of \#1, \#2, and \#4 is not trivial to do programmatically, and thus
we relied instead on spot checks in models given high weight. Only
a handful of cases were found.

This leaves issue \#5, which unfortunately is difficult to avoid due
to frequent usage of constraints in practice, e.g. to sample from
truncated normal distributions. We simply accept this bias in practice,
though note the bias is always downwards, e.g. towards a lower marginal
likelihood.

\cleardoublepage{}

\section{Examples\label{sec:examples}}

This section provides the example inputs and outputs given to the
LLM to aid in-context learning.

\subsection{Weight}
\begin{center}
\textbf{INPUT}
\par\end{center}

\lstinputlisting[
    basicstyle=\scriptsize\ttfamily,
    frame=single,
    rulecolor=\color{gray!100},
    backgroundcolor=\color{gray!03},
    breaklines=true,          
    xleftmargin=0pt,          
    breakindent=0\baselineskip,
]{examples/stan/student-weight_input.txt}
\begin{center}
\textbf{OUTPUT}
\par\end{center}

\lstinputlisting[
    basicstyle=\scriptsize\ttfamily,
    frame=single,
    rulecolor=\color{gray!100},
    backgroundcolor=\color{gray!03},
    breaklines=true,          
    xleftmargin=0pt,          
    breakindent=0\baselineskip,
]{examples/stan/student-weight_output.txt}

\cleardoublepage{}

\subsection{Future weight}
\begin{center}
\textbf{INPUT}
\par\end{center}

\lstinputlisting[
    basicstyle=\scriptsize\ttfamily,
    frame=single,
    rulecolor=\color{gray!100},
    backgroundcolor=\color{gray!03},
    breaklines=true,          
    xleftmargin=0pt,          
    breakindent=0\baselineskip,
]{examples/stan/future-student-weight_input.txt}
\begin{center}
\textbf{OUTPUT}
\par\end{center}

\lstinputlisting[
    basicstyle=\scriptsize\ttfamily,
    frame=single,
    rulecolor=\color{gray!100},
    backgroundcolor=\color{gray!03},
    breaklines=true,          
    xleftmargin=0pt,          
    breakindent=0\baselineskip,
]{examples/stan/future-student-weight_model.txt}

\cleardoublepage{}

\subsection{Movies}
\begin{center}
\textbf{INPUT}
\par\end{center}

\lstinputlisting[
    basicstyle=\scriptsize\ttfamily,
    frame=single,
    rulecolor=\color{gray!100},
    backgroundcolor=\color{gray!03},
    breaklines=true,          
    xleftmargin=0pt,          
    breakindent=0\baselineskip,
]{examples/stan/movies_input.txt}
\begin{center}
\textbf{OUTPUT}
\par\end{center}

\lstinputlisting[
    basicstyle=\scriptsize\ttfamily,
    frame=single,
    rulecolor=\color{gray!100},
    backgroundcolor=\color{gray!03},
    breaklines=true,          
    xleftmargin=0pt,          
    breakindent=0\baselineskip,
]{examples/stan/movies_output.txt}

\cleardoublepage{}

\subsection{Collaborative filtering}
\begin{center}
\textbf{INPUT}
\par\end{center}

\lstinputlisting[
    basicstyle=\scriptsize\ttfamily,
    frame=single,
    rulecolor=\color{gray!100},
    backgroundcolor=\color{gray!03},
    breaklines=true,          
    xleftmargin=0pt,          
    breakindent=0\baselineskip,
]{examples/stan/collab-filt_input.txt}
\begin{center}
\textbf{OUTPUT}
\par\end{center}

\lstinputlisting[
    basicstyle=\scriptsize\ttfamily,
    frame=single,
    rulecolor=\color{gray!100},
    backgroundcolor=\color{gray!03},
    breaklines=true,          
    xleftmargin=0pt,          
    breakindent=0\baselineskip,
]{examples/stan/collab-filt_output.txt}

\cleardoublepage{}

\subsection{Vaccines}
\begin{center}
\textbf{INPUT}
\par\end{center}

\lstinputlisting[
    basicstyle=\scriptsize\ttfamily,
    frame=single,
    rulecolor=\color{gray!100},
    backgroundcolor=\color{gray!03},
    breaklines=true,          
    xleftmargin=0pt,          
    breakindent=0\baselineskip,
]{examples/stan/vaccines_input.txt}
\begin{center}
\textbf{OUTPUT}
\par\end{center}

\lstinputlisting[
    basicstyle=\scriptsize\ttfamily,
    frame=single,
    rulecolor=\color{gray!100},
    backgroundcolor=\color{gray!03},
    breaklines=true,          
    xleftmargin=0pt,          
    breakindent=0\baselineskip,
]{examples/stan/vaccines_output.txt}

\cleardoublepage{}

\subsection{Water}
\begin{center}
\textbf{INPUT}
\par\end{center}

\lstinputlisting[
    basicstyle=\scriptsize\ttfamily,
    frame=single,
    rulecolor=\color{gray!100},
    backgroundcolor=\color{gray!03},
    breaklines=true,          
    xleftmargin=0pt,          
    breakindent=0\baselineskip,
]{examples/stan/water_input.txt}
\begin{center}
\textbf{OUTPUT}
\par\end{center}

\lstinputlisting[
    basicstyle=\scriptsize\ttfamily,
    frame=single,
    rulecolor=\color{gray!100},
    backgroundcolor=\color{gray!03},
    breaklines=true,          
    xleftmargin=0pt,          
    breakindent=0\baselineskip,
]{examples/stan/water_output.txt}

\onecolumn

\section*{NeurIPS Paper Checklist}

\textbf{1. Claims}

Question: Do the main claims made in the abstract and introduction
accurately reflect the paper's contributions and scope?

Answer: \answerYes{}

Justification: The abstract and introduction reflect the contribution
and scope of the paper, and do not include aspirational goals.

\textbf{2. Limitations}

Question: Does the paper discuss the limitations of the work performed
by the authors?

Answer: \answerYes{}

Justification: 

\textbf{3. Theory assumptions and proofs}

Question: For each theoretical result, does the paper provide the
full set of assumptions and a complete (and correct) proof?

Answer: \answerNA{}

Justification: Yes, typically in appendix.

\textbf{4~ Experimental result reproducibility}

Question: Does the paper fully disclose all the information needed
to reproduce the main experimental results of the paper to the extent
that it affects the main claims and/or conclusions of the paper (regardless
of whether the code and data are provided or not)?

Answer: \answerYes{}

Justification: All details disclosed.

\textbf{5. Open access to data and code}

Question: Does the paper provide open access to the data and code,
with sufficient instructions to faithfully reproduce the main experimental
results, as described in supplemental material?

Answer: \answerNo{}

Justification: It is difficult to provide code runnable by a third
party given the usage of a local cluster and many non-portable modifications.
Every effort has been made to make the results reproducible from the
given description.

\textbf{6. Experimental setting/details}

Question: Does the paper specify all the training and test details
(e.g., data splits, hyper-parameters, how they were chosen, type of
optimizer, etc.) necessary to understand the results?

Answer: \answerYes{}

Justification: 

\textbf{7. Experiment statistical significance}

Question: Does the paper report error bars suitably and correctly
defined or other appropriate information about the statistical significance
of the experiments?

Answer: \answerYes{}

Justification: 

\textbf{8. Experiments compute resources}

Question: For each experiment, does the paper provide sufficient information
on the computer resources (type of compute workers, memory, time of
execution) needed to reproduce the experiments?

Answer: \answerYes{}

Justification: 

\textbf{9. Code of ethics}

Question: Does the research conducted in the paper conform, in every
respect, with the NeurIPS Code of Ethics?

Answer: \answerYes{}

Justification: 

\textbf{10. Broader impacts}

Question: Does the paper discuss both potential positive societal
impacts and negative societal impacts of the work performed?

Answer: \answerNA{}

Justification: This paper does not appear to present significant social
impacts beyond the core goal (which is discussed) of making probabilistic
models easier to write.

\textbf{11. Safeguards}

Question: Does the paper describe safeguards that have been put in
place for responsible release of data or models that have a high risk
for misuse (e.g., pretrained language models, image generators, or
scraped datasets)?

Answer: \answerNA{}

Justification:

\textbf{12. Licenses for existing assets}

Question: Are the creators or original owners of assets (e.g., code,
data, models), used in the paper, properly credited and are the license
and terms of use explicitly mentioned and properly respected?

Answer: \answerYes{}

Justification: 

\textbf{13. New assets}

Question: Are new assets introduced in the paper well documented and
is the documentation provided alongside the assets?

Answer: \answerNA{}

Justification:

\textbf{14. Crowdsourcing and research with human subjects}

Question: For crowdsourcing experiments and research with human subjects,
does the paper include the full text of instructions given to participants
and screenshots, if applicable, as well as details about compensation
(if any)?

Answer: \answerNA{}

Justification: 

\textbf{15. Institutional review board (IRB) approvals or equivalent
for research with human subjects}

Question: Does the paper describe potential risks incurred by study
participants, whether such risks were disclosed to the subjects, and
whether Institutional Review Board (IRB) approvals (or an equivalent
approval/review based on the requirements of your country or institution)
were obtained?

Answer: \answerNA{}

Justification: 

\textbf{16. Declaration of LLM usage}

Question: Does the paper describe the usage of LLMs if it is an important,
original, or non-standard component of the core methods in this research?
Note that if the LLM is used only for writing, editing, or formatting
purposes and does not impact the core methodology, scientific rigorousness,
or originality of the research, declaration is not required

Answer: \answerYes{}

Justification: It was used only for generating probabilistic models
as described clearly in the text. LLMs were not used in any other
fashion, e.g. for generating code or writing.
\end{document}